\documentclass[11pt]{article}
\usepackage{arxiv}
\usepackage[utf8]{inputenc}
\usepackage[T1]{fontenc}
\usepackage{microtype}
\usepackage{amsmath,amssymb,amsfonts,amsthm,mathtools}
\usepackage{enumitem}
\usepackage{booktabs}
\usepackage{tabularx}
\usepackage{array}
\usepackage{float}
\usepackage{xcolor}
\usepackage[colorlinks=true,linkcolor=blue,citecolor=blue,urlcolor=blue]{hyperref}
\usepackage[nameinlink,noabbrev]{cleveref}
\hypersetup{hidelinks}

\usepackage{todonotes}

\title{High-Probability PL-SGD with Markovian Noise: Optimal Mixing and Tail Dependence}

\author{%
  Dhruv Sarkar\textsuperscript{1,2} \quad
  Aprameyo Chakrabartty\textsuperscript{1} \quad
  Vaneet Aggarwal\textsuperscript{3} \\[3pt]
  \normalfont\small\textsuperscript{1}Indian Institute of Technology Kharagpur \\[-1pt]
  \normalfont\small\textsuperscript{2}Mohamed bin Zayed University of Artificial Intelligence \\[-1pt]
  \normalfont\small\textsuperscript{3} Purdue University \\[2pt]
  \normalfont\small\texttt{dhruv.sarkar223@gmail.com} \quad
  \normalfont\small\texttt{aprameyo8858@gmail.com} \quad\texttt{vaneet@purdue.edu} \\[2pt]
}

\date{}

\newtheorem{theorem}{Theorem}
\newtheorem{lemma}{Lemma}
\newtheorem{corollary}{Corollary}
\newtheorem{assumption}{Assumption}
\newtheorem{remark}{Remark}
\newtheorem{definition}{Definition}
\crefname{theorem}{Theorem}{Theorems}
\Crefname{theorem}{Theorem}{Theorems}
\crefname{corollary}{Corollary}{Corollaries}
\Crefname{corollary}{Corollary}{Corollaries}
\crefname{proposition}{Proposition}{Propositions}
\Crefname{proposition}{Proposition}{Propositions}
\crefname{lemma}{Lemma}{Lemmas}
\Crefname{lemma}{Lemma}{Lemmas}
\crefname{assumption}{Assumption}{Assumptions}
\Crefname{assumption}{Assumption}{Assumptions}
\crefname{remark}{Remark}{Remarks}
\Crefname{remark}{Remark}{Remarks}
\crefname{equation}{equation}{equations}
\Crefname{equation}{Equation}{Equations}
\newcommand{\appref}[1]{Appendix~\ref{#1}}

\newcommand{\R}{\mathbb{R}}
\newcommand{\N}{\mathbb{N}}
\newcommand{\E}{\mathbb{E}}
\newcommand{\Pp}{\mathbb{P}}
\newcommand{\Prob}{\mathbb{P}}
\newcommand{\calF}{\mathcal{F}}
\newcommand{\calE}{\mathcal{E}}
\newcommand{\calZ}{\mathcal{Z}}

\newcommand{\Zspace}{\mathsf Z}
\newcommand{\1}{\mathbf{1}}

\newcommand{\ip}[2]{\left\langle #1,#2\right\rangle}
\newcommand{\norm}[1]{\left\lVert #1\right\rVert}
\newcommand{\abs}[1]{\left\lvert #1\right\rvert}
\newcommand{\TV}{\mathrm{TV}}
\newcommand{\tv}{\mathrm{TV}}
\newcommand{\tmix}{t_{\mathrm{mix}}}
\newcommand{\Del}{\Delta}

\newcolumntype{Y}{>{\raggedright\arraybackslash}X}

\begin{document}
\maketitle

\begin{abstract}
We study first-order methods for smooth objectives satisfying the Polyak-\L{}ojasiewicz (PL) condition when gradient samples are generated by an exogenous Markov chain. In the light-tailed setting, prior uniform-in-time high-probability bounds for ordinary Stochastic Gradient Descent (SGD) under a standard growth envelope scale as \(\widetilde O(\tmix^2/k)\), leaving a gap with the \(\widetilde O(\tmix/k)\) expectation bounds. We close this gap using a lag-blocking argument to establish a uniform high-probability guarantee with a leading stochastic term of \(\widetilde O(\tmix/(k+K_0))\) under geometric mixing. We prove this linear dependence on the mixing time is optimal via a matching \(\Omega(\sigma^2 \tmix/k)\) lower bound on a quadratic objective driven by a persistent two-state chain.

We then extend this framework to heavy-tailed Markovian gradients satisfying a stationary finite-\(p\)-moment condition, \(p \in (1,2]\). We design an all-samples clipped block method that uses every Markov transition while mitigating Markovian bias. Under a transition budget \(T\), this algorithm achieves a high-probability stochastic error of \(\widetilde O(\sigma_p^2(\tmix/T)^{2(p-1)/p})\). We establish a matching lower bound by reducing PL optimization to heavy-tailed mean estimation for a sticky Markov chain. Ultimately, this work tightly characterizes the optimal polynomial dependence on mixing time for light-tailed PL-SGD, and the optimal heavy-tail exponent and effective-sample-size dependence in the robust regime.
\end{abstract}

\begin{center}
\textbf{Keywords:} stochastic gradient descent; Markovian noise; Polyak-\L{}ojasiewicz condition; high-probability bounds; mixing time; heavy-tailed noise; clipped SGD; lower bounds.
\end{center}

\section{Introduction}

Stochastic gradient methods are a central tool in stochastic approximation and large-scale
optimization.  Classical analyses of stochastic approximation and Stochastic Gradient Descent (SGD) typically rely on either
independent samples or conditionally unbiased martingale-difference gradient noise
\cite{robbins1951stochastic,kiefer1952stochastic,benveniste1990adaptive,
kushner2003stochastic,borkar2008stochastic,bottou2018optimization}.  Under these assumptions,
the stochastic error in the descent recursion can be controlled by martingale or concentration
arguments, and nonasymptotic convergence guarantees are now well understood in many convex,
strongly convex, and nonconvex regimes
\cite{moulines2011nonasymptotic,gower2019sgd,khaled2023better}.  In many modern applications,
however, the samples used by the gradient oracle are generated sequentially by a Markov chain rather
than independently.  This occurs in token-based and random-walk decentralized optimization
\cite{johansson2007randomized,mao2020walkman,hendrikx2023token}, Markov Chain Monte Carlo
gradient estimation, privacy-preserving subsampling schemes with temporal exclusion rules
\cite{abadi2016deep,choquettechoo2023bandmf,dong2026minsep}, online system identification
\cite{kowshik2021streaming}, and reinforcement-learning and temporal-difference algorithms
\cite{bhandari2018finite,srikant2019finite,kaledin2020finite,liu2025odemethod,bai2024regret,ganeshorder,ganeshsharper}.  In such settings
the gradient oracle is generally unbiased only after averaging with respect to the invariant
distribution of the chain.  At a finite time, the conditional law of the Markov state need not be
stationary, and the resulting bias must be controlled through the mixing behavior of the chain.

This paper studies this problem for smooth objectives satisfying the Polyak--\L{}ojasiewicz (PL)
inequality.  The PL condition, introduced in \cite{polyak1963gradient} and developed in its modern
optimization form in \cite{karimi2016linear}, is weaker than strong convexity but still yields global
linear convergence for deterministic gradient descent.  It is a natural condition in overparameterized
learning, least-squares problems, control, and other settings where global convergence can hold
without convexity.  We consider the Markovian SGD recursion
\begin{equation}\label{eq:sgd-intro}
    x_{k+1}=x_k-\alpha_kG_k,
    \qquad
    G_k=g(x_k,Z_k)+M_{k+1},
\end{equation}
where \((Z_k)_{k\ge 0}\) is an exogenous Markov chain with invariant distribution \(\pi\), the
stationary oracle satisfies
\[
    \int g(x,z)\,\pi(dz)=\nabla f(x),
\]
and \((M_{k+1})_{k\ge0}\) is a martingale-difference perturbation.  The oracle is allowed to satisfy a bounded \(A\)-\(B\)-\(C\) growth envelope, 
called the ABC condition because the constants \(A,B,C\) control the 
gradient-dependent term, the objective-gap-dependent term, and the additive 
noise floor:
\[
    \|G_k\|^2 \le A\|\nabla f(x_k)\|^2 + B(f(x_k)-f^\star)+C.
\]
This condition is used in prior PL-SGD analyses \cite{kar2026markovpl}. This growth model is
important because a uniform bounded-variance assumption can be too restrictive in PL problems,
least-squares models, interpolation regimes, and minibatch sampling.

There is now a substantial literature on first-order methods and stochastic approximation with
Markovian sampling.  Finite-time analyses of Markovian stochastic gradient methods were developed
in convex, strongly convex, and nonconvex settings by
\cite{sun2018markov,doan2020finite,doan2022markov,even2023sgd}.  More general Markovian
stochastic-approximation and reinforcement-learning problems are often handled using
Poisson-equation decompositions
\cite{benveniste1990adaptive,kushner2003stochastic,borkar2008stochastic,metivier1984applications,xu2026persistent}.
Recent work has also studied optimal mixing-time dependence for several first-order methods using
randomized batching, multilevel, or variance-reduction ideas
\cite{beznosikov2023markovian,even2023sgd}.  In the PL setting most closely related to ours,
Kar, Chandak, Singh, Moulines, Bhatnagar, and Bambos
\cite{kar2026markovpl} proved the first uniform-in-time high-probability guarantee for SGD with
Markovian plus martingale-difference noise under an ABC oracle envelope.  Their expectation bound
has leading stochastic order \(\widetilde{O}(\tmix/k)\), while their uniform high-probability bound
has leading stochastic order \(\widetilde{O}(\tmix^2/k)\).  Thus, before the present work, it was
unclear whether high-probability PL-SGD truly requires a quadratic polynomial dependence on the
mixing time, or whether the linear dependence suggested by expectation bounds is achievable.

Our first main result shows that the optimal polynomial dependence is linear.  For ordinary
one-sample SGD under the bounded ABC oracle envelope, we prove a uniform-in-time high-probability
bound whose leading stochastic term scales as
\[
    \widetilde{O}\!\left(\frac{\tmix}{k+K_0}\right)
\]
under geometric mixing, up to logarithmic factors and the usual optimization transient
\(K_0\{f(x_0)-f^\star\}/(k+K_0)\).  The hidden constants depend on the smoothness, PL, and bounded-ABC envelope constants, including the noise-scale term in \Cref{ass:abc}.  We also prove a matching lower bound on a one-dimensional
quadratic PL objective driven by a persistent two-state Markov chain:
\[
    f(x_k)-f^\star
    =
    \Omega\!\left(\frac{\sigma^2\tmix}{k}\right)
\]
in expectation and with constant probability.  Therefore no theorem valid over the same class of
instances can improve the polynomial dependence on \(\tmix\) below linear, apart from logarithmic
factors.

\subsection{Technical challenge and proof novelty}
\label{sec:technical_novelty}

The main technical obstacle is that the Markovian term in the PL descent recursion is an
\emph{adaptive} Markovian additive functional.  In the weighted descent recursion of
\Cref{lem:descent}, the stochastic part contains
\[
    \sum_{\ell=0}^{k-1} w_{\ell,k} h(x_\ell,Z_\ell),
    \qquad
    h(x,z)
    =
    \big\langle \nabla f(x),\,g(x,z)-\nabla f(x)\big\rangle,
\]
where \(h\) is defined in \eqref{eq:xi-h}.  For each fixed \(x\), the stationary-gradient identity
in \Cref{ass:abc} implies
\[
    \int h(x,z)\,\pi(dz)=0.
\]
However, conditionally on the past, \(Z_\ell\) need not be distributed according to \(\pi\), and
\(x_\ell\) is itself generated from the previous Markov states and martingale perturbations.
Consequently, \(h(x_\ell,Z_\ell)\) is neither a martingale difference nor a fixed additive
functional of the Markov chain.  Standard martingale concentration cannot be applied directly, and
classical concentration inequalities for fixed Markov-chain observables do not apply in a
black-box way.

\paragraph{The limitation of the Poisson equation.}
The standard mechanism for decoupling this temporal dependence relies on solutions to the Poisson equation. This framework yields a martingale difference sequence by representing the local gradient bias through the solution $V(x,z)$. For instance, in the recent high-probability PL-SGD analysis by Kar et al. \cite{kar2026markovpl}, this solution is defined (cf. Lemma 4.1 therein) as the infinite-horizon sum:
\begin{equation}
    V(x,z) := \mathbb{E}\left[ \sum_{j=0}^\infty \big( g(x,Z_j) - \nabla f(x) \big) \;\Big|\; Z_0 = z \right].
\end{equation}
This allows the Markovian noise to be perfectly decomposed as $g(x,z) - \nabla f(x) = V(x,z) - \int V(x,z')p(dz'|z)$. While this identity successfully isolates a strict martingale difference sequence, defined as $\tilde{M}_{\ell+1} := V(x_\ell, Z_{\ell+1}) - \int V(x_\ell, z) p(dz | Z_\ell)$, it achieves this by inadvertently amplifying the magnitude of the stochastic increments. 

Because the Markov chain requires $\mathcal{O}(\tmix)$ steps to contract toward the invariant measure $\pi$, the geometric series defining $V(x,z)$ forces its magnitude to scale linearly with the mixing time. Kar et al. formalize this dependence in their Lemma C.1, establishing the structural upper bound $\|V(x,z)\|^2 \le \mathcal{O}(\tmix^2 \cdot \Delta(x))$.

When uniform-in-time concentration inequalities, such as the Azuma-Hoeffding bound, are applied to this derived martingale, the variance proxy relies on the square of the almost-sure increment bounds. By measuring the deviations of a martingale whose increments have inherently bounded ranges of $\mathcal{O}(\tmix)$, the resulting concentration bound accumulates a $\mathcal{O}(\tmix^2)$ polynomial penalty. This quadratic dependence reflects a limitation of that proof route and leaves a gap relative to the optimal linear $\mathcal{O}(\tmix)$ scaling achievable in expectation bounds.

\paragraph{Our approach: lag-blocking and temporal conditioning.}
Our proof avoids this loss by changing the conditioning structure instead of changing the
observable through a Poisson corrector.  For a fixed target time \(k\), choose the analytical delay
\(m_k\) from \eqref{eq:delay-def}.  For \(\ell\ge m_k\), decompose
\[
    h(x_\ell,Z_\ell)
    =
    h(x_{\ell-m_k},Z_\ell)
    +
    \bigl\{
        h(x_\ell,Z_\ell)-h(x_{\ell-m_k},Z_\ell)
    \bigr\}.
\]
The first \(m_k\) indices are treated separately as an initial-window term, controlled in
\Cref{lem:initial}.  For the remaining terms, the delayed iterate \(x_{\ell-m_k}\) is
\(\calF_{\ell-m_k}\)-measurable, while by \Cref{ass:mixing} the chain has \(m_k\) transitions to
move toward stationarity before time \(\ell\).  With
\[
    Y_\ell
    :=
    w_{\ell,k} h(x_{\ell-m_k},Z_\ell),
    \qquad
    \overline Y_\ell
    :=
    \E[Y_\ell\mid \calF_{\ell-m_k}],
\]
as in \eqref{eq:Y-def}, the Markovian sum is split into an initial window, a centered delayed
martingale part, a mixing-bias part, and a replacement part:
\[
\begin{aligned}
\sum_{\ell=0}^{k-1} w_{\ell,k}h(x_\ell,Z_\ell)
&=
\sum_{\ell=0}^{m_k-1} w_{\ell,k}h(x_\ell,Z_\ell)
+
\sum_{\ell=m_k}^{k-1}(Y_\ell-\overline Y_\ell)  \\
&\quad
+
\sum_{\ell=m_k}^{k-1}\overline Y_\ell
+
\sum_{\ell=m_k}^{k-1}
    w_{\ell,k}
    \bigl\{h(x_\ell,Z_\ell)-h(x_{\ell-m_k},Z_\ell)\bigr\}.
\end{aligned}
\]

The centered delayed term is controlled by the residue-class martingale argument in
\Cref{lem:delayed-mart}.  The indices are split modulo \(m_k\).  Along a fixed residue class,
consecutive delayed terms are separated by exactly \(m_k\) Markov transitions, so after subtracting
\(\overline Y_\ell\) they form a martingale-difference sequence with respect to a down-sampled
filtration.  For the actual PL recursion, the relevant object is the weighted sum above, and the
residue-class bounds involve the weighted square sum \(\sum_{\ell<k}w_{\ell,k}^2\).  Summing over
the \(m_k\) residue classes produces a variance proxy of order
\[
    m_k\sum_{\ell<k}w_{\ell,k}^2
    \asymp
    \frac{m_k}{k+K_0},
\]
up to logarithmic and offset factors; the required weight estimates are collected in
\Cref{lem:weights}.  Under geometric mixing, \(m_k=\widetilde O(\tmix)\), which is the source of
the leading stochastic order \(\widetilde O(\tmix/(k+K_0))\) in
\Cref{cor:geometric-upper}.  No independence between different residue classes is required:
concentration is applied separately within each residue class, and the resulting high-probability
bounds are combined by a union bound.

The remaining terms are controlled separately.  The mixing bias
\(\sum_{\ell=m_k}^{k-1}\overline Y_\ell\) is small by the total-variation mixing condition in
\Cref{ass:mixing} and the bound in \Cref{lem:mixing-bias}.  The replacement term is deterministic
on the stopped good event: by \Cref{lem:h-envelope},
\[
    \bigl|h(x_\ell,z)-h(x_{\ell-m_k},z)\bigr|
    \le
    c_{\mathrm{lip}}
    \bigl(1+\sqrt{\Del(x_\ell)}+\sqrt{\Del(x_{\ell-m_k})}\bigr)
    \|x_\ell-x_{\ell-m_k}\|,
\]
and
\[
    \|x_\ell-x_{\ell-m_k}\|
    \le
    \sum_{i=\ell-m_k}^{\ell-1}\alpha_i\|G_i\|.
\]
The ABC envelope in \Cref{ass:abc} controls the gradients on the stopped event, and the resulting
replacement bound is stated in \Cref{lem:replacement}.  These ingredients are combined in
\Cref{lem:induction-step}, and the first-failure argument in \appref{app:main-proof} yields the
uniform-in-time guarantee of \Cref{thm:profile-upper}.  The delay \(m_k\) and residue classes are
only analytical devices; the algorithm remains the ordinary one-sample SGD recursion
\eqref{eq:sgd}.

We also study a heavy-tailed Markovian regime.  In many applications, individual gradient
samples may have no deterministic envelope and may not even have finite variance.  High-probability
guarantees under only finite \(p\)-th moments, \(p\in(1,2]\), therefore require robustification.
For independent or martingale-type noise, clipping, normalization, median-of-means, and related
robust methods have been used to obtain high-probability guarantees
\cite{cutkosky2021heavy,gorbunov2021nonsmooth,sadiev2023unbounded,nguyen2023clippedsgd,
puchkin2024barrier,hubler2025normalization,armacki2025nonlinear}.  Markovian heavy-tailed data are
more delicate because dependence reduces the effective sample size and nonstationary initialization
creates an additional bias.  We address this by holding the iterate fixed over a block, clipping
every consecutive Markovian gradient in the block, and averaging all clipped samples.  No samples
inside the block are discarded; the mixing window enters only through the analysis and the choice of
clipping scale.  Under geometric mixing and total transition budget \(T\), the resulting
high-probability stochastic term has order
\[
    \widetilde{O}\!\left(
        \sigma_p^2
        \left(\frac{\tmix}{T}\right)^{2(p-1)/p}
    \right),
\]
up to dimension and logarithmic factors.  A sticky-chain lower bound shows that the exponent
\(2(p-1)/p\) and the dependence on the Markovian effective sample size cannot be improved, up to
logarithmic and dimension factors.  The lower bound is one-dimensional, so we do not claim optimality
of the dimension dependence in the upper bound.

\subsection{Contributions.}
The paper makes the following contributions.

\begin{enumerate}
    \item We prove a uniform-in-time high-probability PL-SGD upper bound under a general uniform
    total-variation mixing profile on a general measurable Markov state space.  The result applies
    to ordinary one-sample SGD and does not require compactness of the state space.

    \item Under geometric mixing, the leading stochastic term in the bounded-ABC regime satisfies
    \[
        \widetilde{O}\!\left(\frac{\tmix}{k+K_0}\right),
    \]
    with no additional polynomial power of \(\tmix\).  This closes the gap between the
    \(\widetilde{O}(\tmix^2/k)\) high-probability behavior obtained by Poisson-equation-type
    analyses and the \(\widetilde{O}(\tmix/k)\) behavior suggested by expectation bounds.

    \item We prove a matching lower bound.  Even for the quadratic objective \(f(x)=x^2/2\) and a
    bounded two-state Markovian oracle, SGD satisfies
    \[
        \Pr\!\left(
            f(x_k)-f^\star
            \ge
            c\,\frac{\sigma^2\tmix}{k}
        \right)
        \ge c_0
    \]
    for universal constants \(c,c_0>0\).  Thus linear polynomial dependence on the mixing time is
    unavoidable at constant confidence.

    \item We extend the lag-blocking viewpoint to finite-\(p\) heavy-tailed Markovian gradients.
    The proposed all-samples clipped block method uses every Markov transition in each block and
    achieves the transition-budget rate
    \[
        \widetilde{O}\!\left(
            \sigma_p^2
            \left(\frac{\tmix}{T}\right)^{2(p-1)/p}
        \right)
    \]
    under geometric mixing, up to logarithmic and dimension factors.

    \item We prove a matching heavy-tailed lower bound by reducing PL optimization to mean
    estimation for a sticky Markov chain.  This shows that both the Markovian effective-sample-size
    dependence and the finite-moment exponent \(2(p-1)/p\) are unavoidable up to logarithmic
    and dimension factors; the result does not claim optimality of the dimension dependence.
\end{enumerate}

\section{Related work}\label{sec:related}

We summarize the closest comparisons in \Cref{tab:related-markovian-compact}.  
The table separates objective or problem-class assumptions from oracle/noise assumptions:
PL is an objective-geometry condition, whereas the ABC envelope is an oracle growth condition.  
The main comparison is with high-probability PL-SGD under Markovian sampling; broader background
on stochastic approximation, Poisson-equation methods, Markov-chain concentration, decentralized
and MCMC sampling, and robust heavy-tailed optimization is deferred to
\appref{app:extended-related}.

\begin{table}[h]
\centering
\scriptsize
\setlength{\tabcolsep}{2.2pt}
\renewcommand{\arraystretch}{1.18}
\caption{Compact comparison with the closest Markovian stochastic-optimization results.
Here Ach. denotes an upper-bound or algorithmic achievability result, LB denotes a lower bound
for the corresponding problem model, HP denotes high probability, and Exp. denotes expectation.
The notation \(\widetilde O(\cdot)\) hides logarithmic factors and problem-dependent constants.
Rows differ in objective class, oracle model, and algorithmic model, so the rates are not meant
as direct one-to-one comparisons.}
\label{tab:related-markovian-compact}
\begin{tabularx}{\textwidth}{@{}p{0.17\textwidth}p{0.105\textwidth}p{0.18\textwidth}p{0.115\textwidth}cccX@{}}
\toprule
\textbf{Work}
&
\textbf{Problem class}
&
\textbf{Oracle/noise model}
&
\textbf{Algorithm}
&
\shortstack{\textbf{Mark.}\\\textbf{samples}}
&
\textbf{Ach.}
&
\textbf{LB}
&
\textbf{Objective / rate}
\\
\midrule

Sun, Sun, and Yin~\cite{sun2018markov};
Doan et al.~\cite{doan2020finite};
Doan~\cite{doan2022markov}
&
C / SC / NC
&
bounded or variance-type Markovian noise
&
plain Markov-chain SGD
&
\(\checkmark\)
&
\(\checkmark\)
&
\(\times\)
&
finite-time convergence with mixing-dependent error
\\[0.35em]

Even~\cite{even2023sgd}
&
general Markovian sampling
&
Markovian sampling; hitting-time dependence
&
MC-SGD / MC-SAG
&
\(\checkmark\)
&
\(\checkmark\)
&
\(\checkmark\)
&
upper and lower bounds involving hitting-time quantities
\\[0.35em]

Beznosikov et al.~\cite{beznosikov2023markovian}
&
NC / SC / VI
&
strong-growth-type Markovian oracle
&
batching / multilevel estimators
&
\(\checkmark\)
&
\(\checkmark\)
&
\(\checkmark\)
&
linear mixing-time oracle complexity for modified estimators
\\[0.35em]

Kar et al.~\cite{kar2026markovpl}
&
PL
&
ABC growth envelope
&
ordinary one-sample SGD
&
\(\checkmark\)
&
\(\checkmark\)
&
\(\times\)
&
HP \(\widetilde O(\tmix^2/k)\); Exp. \(\widetilde O(\tmix/k)\)
\\[0.35em]

Agrawal, Maguluri, and Zubeldia~\cite{agrawal2026heavytailedmarkov}
&
SA
&
finite-state Markov component; heavy-tailed martingale noise
&
SA recursion
&
\(\checkmark\)
&
\(\checkmark\)
&
\(\times\)
&
concentration and tail classification for Markovian SA
\\[0.35em]

\textbf{This paper: light-tailed}
&
PL
&
ABC growth envelope
&
ordinary one-sample SGD
&
\(\checkmark\)
&
\(\checkmark\)
&
\(\checkmark\)
&
HP \(\widetilde O(\tmix/(k+K_0))\); LB \(\Omega(\sigma^2\tmix/k)\)
\\[0.35em]

\textbf{This paper: heavy-tailed}
&
PL
&
finite-\(p\) Markovian gradients
&
clipped Markovian blocks
&
\(\checkmark\)
&
\(\checkmark\)
&
\(\checkmark\)
&
HP \(\widetilde O\!\left(\sigma_p^2(\tmix/T)^{2(p-1)/p}\right)\); matching LB
\\

\bottomrule
\end{tabularx}

\vspace{0.25em}
\begin{minipage}{0.98\textwidth}
\scriptsize
\emph{Abbreviations:} C = convex, SC = strongly convex, NC = nonconvex,
VI = variational inequality, SA = stochastic approximation, Mark. = Markovian.
The LB column records lower bounds in the corresponding optimization/oracle-complexity model;
qualitative sharpness examples for broader stochastic-approximation tail behavior are not counted
as PL optimization lower bounds.
\end{minipage}
\end{table}

\paragraph{Closest PL-SGD comparison.}
The closest prior result is Kar et al.~\cite{kar2026markovpl}, which studies ordinary one-sample
SGD for smooth PL objectives with Markovian plus martingale-difference noise under an ABC oracle
envelope. Their expectation bound has leading stochastic order \(\widetilde O(\tmix/k)\), while
their uniform high-probability bound has leading stochastic order \(\widetilde O(\tmix^2/k)\).
Thus, before the present work, it was unclear whether the quadratic high-probability dependence on
the mixing time was intrinsic or an artifact of the proof technique. Our light-tailed theorem closes
this gap by proving a uniform high-probability bound with leading stochastic term
\(\widetilde O(\tmix/(k+K_0))\) for the same ordinary SGD recursion, under a general uniform
total-variation mixing profile. The matching two-state lower bound shows that the order
\(\Omega(\sigma^2\tmix/k)\) is unavoidable at constant confidence, up to logarithmic factors.

\paragraph{Other Markovian first-order methods.}
Finite-time analyses of stochastic gradient methods with Markovian samples were developed in
convex, strongly convex, and nonconvex settings by
Sun, Sun, and Yin~\cite{sun2018markov}, Doan et al.~\cite{doan2020finite}, and
Doan~\cite{doan2022markov}. Even~\cite{even2023sgd} studies general Markovian sampling schemes,
including MC-SGD and the variance-reduced MC-SAG method, and proves lower bounds involving
hitting-time quantities. Beznosikov et al.~\cite{beznosikov2023markovian} obtain linear
mixing-time dependence for several first-order methods using randomized batching and multilevel
gradient estimators, including nonconvex, strongly convex, and variational-inequality settings.
These works show that favorable mixing dependence is possible in several regimes, but they do not
resolve the high-probability behavior of ordinary one-sample PL-SGD under an ABC oracle envelope.

\paragraph{Heavy-tailed Markovian regimes.}
The heavy-tailed part of the paper is related to robust stochastic optimization and Markovian
stochastic approximation under finite-moment noise. Agrawal, Maguluri, and
Zubeldia~\cite{agrawal2026heavytailedmarkov} study concentration and tail behavior for general
stochastic approximation with finite-state Markovian components and heavy-tailed martingale
perturbations. Our robust result is different in two ways: it is a PL optimization theorem, and the
Markovian gradient component itself is allowed to have only a finite stationary \(p\)-th moment,
\(p\in(1,2]\). Using clipped Markovian blocks and a total transition budget \(T\), we obtain the
high-probability rate
\[
    \widetilde O\!\left(
        \sigma_p^2\left(\frac{\tmix}{T}\right)^{2(p-1)/p}
    \right),
\]
and prove a matching sticky-chain lower bound for the Markovian effective-sample-size dependence
and the finite-moment exponent, up to logarithmic and dimension factors.

\paragraph{Scope of the comparison.}
The rows in \Cref{tab:related-markovian-compact} are not directly interchangeable. Some works
study convex or nonconvex optimization rather than PL objectives; some use modified estimators or
variance-reduced methods rather than ordinary SGD; and some prove lower bounds for hitting-time or
oracle-complexity models rather than PL last-iterate optimization. The contribution of the present
paper is to identify the optimal polynomial mixing-time dependence for ordinary high-probability
PL-SGD in the bounded-ABC regime, and the optimal finite-moment exponent and Markovian
effective-sample-size dependence for the clipped-block heavy-tailed regime.

\section{Problem formulation and assumptions}\label{sec:setup}
Let $f:\R^d\to\R$ be differentiable and lower bounded.  Write
\[
        f^\star:=\inf_{x\in\R^d}f(x),
        \qquad
        \Del(x):=f(x)-f^\star,
        \qquad
        \Del_k:=\Del(x_k).
\]
The algorithm is
\begin{equation}\label{eq:sgd}
        x_{k+1}=x_k-\alpha_kG_k,
        \qquad
        G_k=g(x_k,Z_k)+M_{k+1}.
\end{equation}
The initial point $x_0$ is deterministic and $\Del_0<\infty$.  If $x_0$ is random, the same statements hold conditionally on $\calF_0$ on every realization with finite $\Del_0$.

\begin{assumption}[Objective]\label{ass:objective}\label{ass:pl-smooth}
The function $f$ is $L$-smooth and satisfies the $\mu$-PL inequality:
\begin{equation}\label{eq:pl}
        \|\nabla f(x)\|^2\ge 2\mu\Del(x),
        \qquad x\in\R^d.
\end{equation}
Since $f$ is lower bounded and $L$-smooth, the gradient upper bound
\begin{equation}\label{eq:grad-upper}
        \|\nabla f(x)\|^2\le 2L\Del(x)
\end{equation}
holds for all $x$.
\end{assumption}

\begin{remark}[Role, commonness, and scope of the objective assumption]
Smoothness and PL geometry are standard in the nonconvex-optimization and stochastic-gradient
literature.  The PL inequality goes back to Polyak~\cite{polyak1963gradient}, and its modern
optimization role, together with its relation to error bounds, quadratic growth, and related
conditions, is surveyed by Karimi, Nutini, and Schmidt~\cite{karimi2016linear}.  The same
smoothness-plus-PL framework is used in stochastic-gradient analyses with growth-type noise
conditions~\cite{gower2019sgd,khaled2023better} and in the closest prior Markovian PL-SGD work
\cite{kar2026markovpl}.  Thus \Cref{ass:objective} is not a special assumption introduced for our
lag-blocking argument; it is the standard geometry under which last-iterate objective convergence is
usually proved beyond convexity.

The assumption also holds for many functions of interest.  Every differentiable strongly convex
smooth function is PL, but PL is strictly weaker than strong convexity.  It also holds for full-rank
least-squares and more generally for quadratic objectives with a positive curvature direction on the
relevant range; it appears in several overparameterized models and neural-network loss landscapes
\cite{liu2022loss}; and it is used in control examples such as linear-quadratic regulator objectives
\cite{fazel2018global}.  In applications where the property is only valid on a sublevel set, the
localized version in \Cref{cor:local} allows the same proof to be run with local constants.

In the proof, \Cref{ass:objective} is used in two distinct ways.  Smoothness gives the one-step
descent inequality that leads to the weighted recursion in \Cref{lem:descent}.  The PL inequality
\eqref{eq:pl} converts gradient decrease into objective decrease, producing the contraction factors
in the weights \eqref{eq:weights}.  The smoothness upper bound \eqref{eq:grad-upper}, proved in
\Cref{lem:grad-upper}, is repeatedly used to express gradient magnitudes in terms of the
suboptimality \(\Delta(x)\).  The same geometry is used in the heavy-tailed inexact-gradient
argument, \Cref{lem:deterministic-inexact-pl}.  Thus \Cref{ass:objective} is the mechanism that
turns stochastic gradient control into last-iterate objective control.
\end{remark}

\begin{assumption}[Oracle, martingale noise, and ABC envelope]\label{ass:noise}\label{ass:oracle}\label{ass:abc}
Let
\[
        \calF_k:=\sigma(x_0,Z_0,\ldots,Z_k,M_1,\ldots,M_k).
\]
The sequence $(M_{k+1})_{k\ge0}$ is a martingale difference sequence with respect to $(\calF_k)$:
\[
        \E[M_{k+1}\mid\calF_k]=0.
\]
For every $x\in\R^d$,
\begin{equation}\label{eq:stationary-gradient}
        \int g(x,z)\,\pi(dz)=\nabla f(x).
\end{equation}
There are constants $A,B,C\ge0$ such that almost surely, for every $k$,
\begin{equation}\label{eq:abc-G}
        \|G_k\|^2\le A\|\nabla f(x_k)\|^2+B\Del_k+C,
\end{equation}
and, for every $x\in\R^d$ and $z\in\Zspace$,
\begin{equation}\label{eq:abc-g}
        \|g(x,z)\|^2\le A\|\nabla f(x)\|^2+B\Del(x)+C.
\end{equation}
Moreover $g(\cdot,z)$ is $L_g$-Lipschitz for every $z$:
\begin{equation}\label{eq:g-lip}
        \|g(x,z)-g(y,z)\|\le L_g\|x-y\|.
\end{equation}
\end{assumption}

\begin{remark}[Scope, examples, and role of the ABC envelope]
\Cref{ass:abc} is a pathwise ABC or weak-growth type envelope.  Conditions of this form are common
in modern SGD analyses under the names ABC, weak growth, strong growth, expected smoothness, or
growth-noise conditions; see, for example,
\cite{gower2019sgd,li2021second,khaled2023better,kar2026markovpl}.  The point of the assumption is
to allow the stochastic-gradient magnitude to grow with the local scale of the problem, rather than
requiring a uniform variance or uniform bounded-gradient bound over all \(x\).  This is important
for least squares, interpolation, minibatch gradients, and state-dependent oracle models, where the
sample gradient is typically small near a solution but can grow away from it.

The envelope contains several familiar bounded-oracle models as special cases.  Uniformly bounded
stochastic gradients correspond to \(A=B=0\).  Additive bounded noise,
\(G_k=\nabla f(x_k)+\xi_k\) with \(\|\xi_k\|\le\sigma\), gives
\(\|G_k\|^2\le 2\|\nabla f(x_k)\|^2+2\sigma^2\).  Relative-noise or strong-growth conditions
correspond to the case in which the stochastic-gradient magnitude is proportional to
\(\|\nabla f(x_k)\|\), possibly with an additive constant.  In finite-sum and minibatch problems,
such bounds hold whenever the component gradients are controlled by the objective residual or by the
full gradient on the sublevel set visited by the algorithm; in least-squares and generalized-linear
models this is typically verified from bounded features or from a local sublevel-set bound.  The
additional \(B\Delta_k\) term is natural under PL geometry because \eqref{eq:grad-upper} allows
\(\|\nabla f(x_k)\|^2\) and \(\Delta_k\) to be controlled together.  When the envelope holds only
locally, \Cref{cor:local} gives the corresponding localized statement.

The pathwise form is needed for high-probability control.  In \Cref{lem:M-noise}, it gives
deterministic martingale increment bounds for the martingale-difference perturbation.  In
\Cref{lem:h-envelope}, \eqref{eq:abc-g} gives a deterministic envelope for the Markovian descent
observable \(h\).  The Lipschitz condition \eqref{eq:g-lip} is used in \Cref{lem:replacement} to
control the error made by replacing \(x_\ell\) with the delayed iterate \(x_{\ell-m}\).  Finite
moment assumptions that do not imply such deterministic envelopes are treated separately in
\Cref{sec:heavy-tail} through clipping and blockwise averaging.
\end{remark}

\begin{assumption}[Exogenous Markov chain and uniform mixing profile]\label{ass:mixing}
The process $(Z_k)_{k\ge0}$ is a time-homogeneous Markov chain on a measurable space $(\Zspace,\calZ)$ with transition kernel $P$ and invariant law $\pi$.  It is Markov with respect to the full filtration:
\begin{equation}\label{eq:full-markov}
        \Pp(Z_{k+1}\in B\mid\calF_k)=P(Z_k,B),
        \qquad B\in\calZ.
\end{equation}
Define the uniform total-variation mixing profile
\begin{equation}\label{eq:mixing-profile}
        \varepsilon(m):=\sup_{z\in\Zspace}\|P^m(z,\cdot)-\pi\|_{\TV},
        \qquad m\ge0.
\end{equation}
We assume $\varepsilon(m)\to0$ as $m\to\infty$.
Whenever a single geometric mixing time is used, we say that $\tmix\ge1$ is valid if
\begin{equation}\label{eq:mixing-def}
        \varepsilon(m)\le 2^{-\lfloor m/\tmix\rfloor},
        \qquad m\ge0.
\end{equation}
\end{assumption}

\begin{remark}[Why exogeneity and uniform mixing are assumed]
Uniform total-variation mixing is a standard assumption in finite-time analyses and concentration
results for Markovian stochastic algorithms.  It is used, either directly or through closely related
mixing-time parameters, in Markovian-gradient and first-order analyses such as
\cite{sun2018markov,doan2020finite,doan2022markov,even2023sgd,beznosikov2023markovian,kar2026markovpl}
and in concentration inequalities for uniformly ergodic Markov chains
\cite{glynn2002hoeffding,paulin2015concentration}.  The assumption holds for every finite-state
irreducible aperiodic chain, and more generally for uniformly ergodic chains satisfying a Doeblin or
minorization condition.  Thus it covers many fixed-sampling mechanisms used in applications, such as
finite-state random walks with laziness, token chains on finite connected networks, fixed MCMC
samplers after a uniform-ergodicity verification, and privacy-subsampling chains with a fixed
transition rule.  The theorem is stated in terms of the whole profile \(\varepsilon(m)\) so that the
analysis can also use nonasymptotic mixing estimates when a single closed-form \(\tmix\) is not the
most convenient description.

\Cref{ass:mixing} has two proof roles.  First, the full-filtration Markov property
\eqref{eq:full-markov} makes the chain exogenous: the transition kernel of \(Z_k\) does not depend
on the current iterate.  This is essential for the lag argument, because once \(x_{\ell-m}\) is
\(\calF_{\ell-m}\)-measurable, the conditional law of \(Z_\ell\) is exactly
\(P^m(Z_{\ell-m},\cdot)\).  Second, the uniform total-variation profile
\eqref{eq:mixing-profile}, together with \eqref{eq:tv-dual}, controls the conditional bias of the
delayed observable.  This is the bound used in \Cref{lem:mixing-bias}.

The main theorem is stated for the full profile \(\varepsilon(m)\), not for a single scalar mixing
time.  The geometric form \eqref{eq:mixing-def} is used only in \Cref{cor:geometric-upper}, where
\Cref{lem:window} verifies that the analysis windows are admissible with
\(m_k=\widetilde O(\tmix)\).  Parameter-dependent Markov kernels, such as adaptive MCMC or
policy-dependent reinforcement-learning chains, would require additional control of the change in
the transition kernel or invariant law and are outside the present exogenous model.
\end{remark}

We use the stepsize
\begin{equation}\label{eq:stepsize}
        \alpha_k=\frac{a}{k+K_0},
        \qquad a\ge\frac{2}{\mu}.
\end{equation}
Set
\begin{equation}\label{eq:basic-K}
        K_0\ge K_{\mathrm{base}}
        :=\max\left\{1,\mu a,\frac{aL}{2}\left(2A+\frac{B}{\mu}\right)\right\}.
\end{equation}
For $0\le \ell\le k-1$, define
\begin{equation}\label{eq:weights}
        \zeta_{\ell+1,k-1}:=\prod_{j=\ell+1}^{k-1}(1-\mu\alpha_j),
        \qquad
        w_{\ell,k}:=\alpha_\ell\zeta_{\ell+1,k-1},
\end{equation}
with empty products equal to one.  We also define
\begin{equation}\label{eq:xi-h}
        \xi(x,z):=g(x,z)-\nabla f(x),
        \qquad
        h(x,z):=\ip{\nabla f(x)}{\xi(x,z)}.
\end{equation}
Then $\int h(x,z)\pi(dz)=0$ for each fixed $x$.

\paragraph{Notation.}
For probability measures $\nu$ and $\nu'$ on $(\Zspace,\calZ)$, we use the probability total-variation distance
\[
        \|\nu-\nu'\|_{\TV}:=\sup_{A\in\calZ}|\nu(A)-\nu'(A)|
        =\frac12 |\nu-\nu'|(\Zspace).
\]
Thus the largest possible distance between two probability laws is one.  With this convention, for every bounded measurable scalar function $\varphi$,
\begin{equation}\label{eq:tv-dual}
        \left|\int \varphi\,d(\nu-\nu')\right|
        \le 2\|\varphi\|_\infty\|\nu-\nu'\|_{\TV}.
\end{equation}
All norms on vectors are Euclidean.

\section{Sharp upper and lower bounds}\label{sec:main-results}

\subsection{Upper bound for a general mixing profile}\label{sec:sgd-upper-bound}
The main theorem is stated for a general mixing profile.  Fix $\delta\in(0,e^{-1})$ and define, for $k\ge1$,
\begin{equation}\label{eq:delay-def}
        N_k:=k+K_0,
        \qquad
        \widehat m_k:=\min\left\{m\in\N: m\ge1,\ \varepsilon(m)\le \frac{\delta}{32N_k^4}\right\},
        \qquad
        m_k:=\min\{k,\widehat m_k\}.
\end{equation}
The minimum is finite by Assumption~\ref{ass:mixing}.  The profile $\varepsilon(m)$ is nonincreasing in integer $m$: indeed, Markov kernels contract total variation and
$P^{m+1}(z,\cdot)-\pi=(P^m(z,\cdot)-\pi)P$.  Since the threshold $\delta/(32N_k^4)$ is nonincreasing in $k$, the sequences $\widehat m_k$ and $m_k$ are nondecreasing.  Define
\begin{equation}\label{eq:u-theta}
        u_k:=\log\left(\frac{16(m_k+1)(k+1)^2}{\delta}\right),
        \qquad
        q_k:=1+\frac{m_k}{K_0},
        \qquad
        \Theta_k:=m_ku_kq_k^2.
\end{equation}
Consequently $u_k,q_k,$ and $\Theta_k$ are also nondecreasing in $k$.

\begin{theorem}[Mixing-profile high-probability upper bound]\label{thm:profile-upper}
Suppose Assumptions~\ref{ass:objective}--\ref{ass:mixing} hold and let $(x_k)$ be generated by \eqref{eq:sgd} with stepsizes \eqref{eq:stepsize}.  There exist constants
\[
        \rho_0\in(0,1),\qquad K_D<\infty,\qquad C_\star<\infty,
\]
depending only on $a,\mu,L,L_g,A,B,C,\Del_0$, such that the following holds.  Assume $K_0\ge K_{\mathrm{base}}\vee K_D$ and that the analysis windows satisfy, for every $k\ge1$,
\begin{align}
        \frac{\Theta_k}{N_k}&\le \rho_0,\label{eq:admissible-1}\\
        \frac{m_k}{N_k}\left(1+\log\frac{N_k}{K_0}+\frac{m_k}{K_0}\right)&\le \rho_0.\label{eq:admissible-2}
\end{align}
Then, with probability at least $1-\delta$, for all $k\ge1$,
\begin{equation}\label{eq:profile-bound}
        \Del_k\le \frac{2K_0\Del_0+C_\star(1+\Theta_k)}{k+K_0}.
\end{equation}
\end{theorem}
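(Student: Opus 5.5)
The proof is a first-failure (stopping-time) induction on the weighted descent recursion, with the Markovian term handled by the lag-blocking decomposition of \Cref{sec:technical_novelty}. First, smoothness and the PL inequality \eqref{eq:pl} give the one-step bound
\[
\Del_{k+1}\le(1-\mu\alpha_k)\Del_k-\alpha_k h(x_k,Z_k)-\alpha_k\ip{\nabla f(x_k)}{M_{k+1}}+\tfrac{L\alpha_k^2}{2}\|G_k\|^2 .
\]
The choice \eqref{eq:basic-K} of $K_{\mathrm{base}}$ lets the $A,B$ parts of \eqref{eq:abc-G} be absorbed into the contraction. Unrolling gives the weighted recursion of \Cref{lem:descent}:
\[
\Del_k\le\zeta_{0,k-1}\Del_0-\sum_{\ell<k}w_{\ell,k}h(x_\ell,Z_\ell)-\sum_{\ell<k}w_{\ell,k}\ip{\nabla f(x_\ell)}{M_{\ell+1}}+C'\sum_{\ell<k}\alpha_\ell w_{\ell,k}.
\]
Since $a\ge 2/\mu$, \Cref{lem:weights} gives $\zeta_{0,k-1}\le (K_0/N_k)^2$, $\sum_\ell w_{\ell,k}^2\lesssim 1/N_k$, and $\sum_\ell \alpha_\ell w_{\ell,k}\lesssim 1/N_k$. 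Thus the deterministic part already has the form $K_0\Del_0/N_k+O(1/N_k)$.

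Next I define the good event $\calE_k:=\{\Del_j\le R_j:=(2K_0\Del_0+C_\star(1+\Theta_j))/N_j \ \forall j\le k\}$, together with the stopping time $\tau$ at which it first fails. On $\{\tau\ge k\}$ every iterate in the history obeys the target envelope. This gives deterministic bounds on $\|\nabla f(x_\ell)\|$ by \eqref{eq:grad-upper}, on $\|G_\ell\|$ by \eqref{eq:abc-G}, and on $|h|$ by \Cref{lem:h-envelope}. To obtain bounded increments, all stochastic sums are run with summands multiplied by $\1\{\tau>\ell\}$, or with truncated iterates; this does not change them on $\calE_{k-1}$.

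I then bound each stochastic piece on the good event, each with failure probability at most $\delta/(8(k+1)^2)$, so that a union bound over $k$ costs at most $\delta$:
\begin{itemize}[leftmargin=*]
\item \emph{Martingale perturbation.} Freedman/Azuma applied via \Cref{lem:M-noise}, with variance proxy $\sum_\ell w_{\ell,k}^2 R_\ell$, gives an error $\lesssim\sqrt{u_k R_k/N_k}+u_k/N_k$. Young's inequality turns this into $\tfrac14 R_k+O(u_k/N_k)$.
\item \emph{Initial window.} \Cref{lem:initial} bounds it by $m_k\max_{\ell<m_k} w_{\ell,k}\sqrt{R_\ell}$, which is $\lesssim m_k q_k^2/N_k$ after Young's inequality.
\item \emph{Delayed centered term.} Apply \Cref{lem:delayed-mart}: within each of the $m_k$ residue classes the sequence is a martingale in the down-sampled filtration. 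A union bound over classes (the source of the $m_k+1$ inside $u_k$) and Cauchy--Schwarz across classes give the variance proxy $m_k\sum_\ell w_{\ell,k}^2 R_{\ell-m_k}$. The factor $q_k^2$ enters because $R_{\ell-m_k}\le q_k^2R_\ell$ when the index is shifted by $m_k$. The resulting bound is $\tfrac14R_k+O(\Theta_k/N_k)$.
\item \emph{Mixing bias.} By \Cref{lem:mixing-bias} and \eqref{eq:tv-dual}, it is at most $\varepsilon(m_k)\cdot\mathrm{poly}(N_k)\le\delta/N_k^2$, using the $N_k^{-4}$ threshold in \eqref{eq:delay-def}. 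When $m_k=k$ the sum is empty.
\item \emph{Replacement term.} \Cref{lem:replacement} bounds it by $c\,m_k\sum_\ell w_{\ell,k}\alpha_{\ell-m_k}(1+R_{\ell-m_k})$, which is $\lesssim \frac{m_k}{N_k}(1+\log\frac{N_k}{K_0}+\frac{m_k}{K_0})\,(R_k+1)$.
\end{itemize}

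Assembling these is \Cref{lem:induction-step}: on $\calE_{k-1}$ intersected with the concentration events,
\[
\Del_k\le \frac{K_0\Del_0}{N_k}+\Bigl(\tfrac12+c\rho_0\Bigr)R_k+\frac{c''(1+\Theta_k)}{N_k}.
\]
The term linear in $R_k$ comes from the replacement part and from the initial-window and $h$-terms, whose coefficients are controlled by \eqref{eq:admissible-1}--\eqref{eq:admissible-2}. Choosing $\rho_0$ small makes this coefficient at most $3/4$. Choosing $C_\star\ge 4c''$ then gives $\Del_k\le R_k$, because $R_k$ contains $2K_0\Del_0/N_k$, which covers the factor $4\cdot\tfrac14$. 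Here $K_D$ is taken large enough to control the lower-order products, such as $\alpha_\ell^2$ terms and the self-referential appearance of $\sqrt{R}$ in the $h$-envelope. On the intersection of all concentration events, which has probability at least $1-\delta$, the first failure $\tau$ therefore cannot occur, and \eqref{eq:profile-bound} holds for all $k$.

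The main obstacle is the delayed centered term. The $h$-envelope scales like $\sqrt{\Del}(1+\sqrt{\Del})$, so the increments depend on $R$ itself, which makes the argument self-referential. The increments must also be made bounded through the stopping time even though the summands use the delayed iterate $x_{\ell-m_k}$. To handle both, I would freeze the indicator at $\1\{\tau>\ell-m_k\}$, which is $\calF_{\ell-m_k}$-measurable and so preserves centering in each residue-class filtration. I would then verify carefully that the nonlinear contributions of order $R^{3/2}$ are absorbed using $K_0\ge K_D$, by the same Young's-inequality step as in the other terms.
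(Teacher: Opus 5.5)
Your proposal follows the paper's proof essentially step for step. It uses the same weighted recursion from \Cref{lem:descent} and the same split of the Markovian sum into initial window, delayed residue-class martingale, mixing bias, and replacement term. The indicator is frozen at the $\calF_{\ell-m_k}$-measurable event, so centering survives in the down-sampled filtration. The closing is the same first-failure union bound with per-time failure probability $2\eta_k=\delta/(4(k+1)^2)$. Carrying the time-varying envelope $R_j$ instead of the paper's single $\Lambda_k=2K_0\Del_0+\Gamma(1+\Theta_k)$ is only cosmetic, since monotonicity of $\Theta_j$ gives $R_j\le \Lambda_k/N_j$ for every $j<k$.

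The closing step, however, does not go through with the constants you state. You want to conclude $\Del_k\le R_k$ from
\[
\Del_k\le \frac{K_0\Del_0}{N_k}+\theta R_k+\frac{c''(1+\Theta_k)}{N_k},
\qquad
R_k=\frac{2K_0\Del_0+C_\star(1+\Theta_k)}{N_k}.
\]
The $K_0\Del_0$ parts alone require $1+2\theta\le2$, i.e.\ $\theta\le\tfrac12$.

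With $\theta=\tfrac34$ you get $\tfrac52K_0\Del_0$ on the left. Even $\theta=\tfrac12+c\rho_0$ leaves $(2+2c\rho_0)K_0\Del_0$. This excess cannot be pushed into $C_\star$, which must not depend on $K_0$; in \Cref{cor:geometric-upper}, $K_0$ grows with $\tmix$ and $\log(1/\delta)$. Your sharper bound $\zeta_{0,k-1}\le(K_0/N_k)^2$ gives no slack at small $k$ either.

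The repair is free. In every Young step allocate a smaller fraction, say $R_k/16$ instead of $R_k/4$, and move the cost into the $O((1+\Theta_k)/N_k)$ terms. The paper arranges a total coefficient of $1/4$, so that $K_0\Del_0+\tfrac14\cdot2K_0\Del_0=\tfrac32K_0\Del_0$; then $\Gamma\ge4(c+c_{\mathrm{det}}+1)$ handles the rest.

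Relatedly, the quadratic-in-$R$ contributions are not absorbed through $K_0\ge K_D$. Examples are $R_k\sqrt{u_k/N_k}$ from the martingale term and $R_k\sqrt{\Theta_k/N_k}$ from the delayed term. They are absorbed through \eqref{eq:admissible-1} together with $u_k\le\Theta_k$. $K_D$ cannot do this job because it cannot depend on $\delta$, whereas $u_k\ge\log(1/\delta)$.
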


The proof is given in \appref{app:main-proof}.

\begin{remark}[Constants and admissible windows]\label{rem:profile-constants}
The constant \(C_\star\) in \Cref{thm:profile-upper} is problem-dependent: it may depend on the
ABC constants \(A,B,C\), the geometry constants \(L,\mu,L_g\), the stepsize parameter \(a\),
and the initial suboptimality \(\Del_0\).  Thus rate statements such as
\(\widetilde O(\tmix/(k+K_0))\) are always meant after fixing these problem constants; in
particular, the noise-size parameter \(C\) in \Cref{ass:abc} is part of the hidden constant.
This is consistent with the lower-bound normalization in \Cref{thm:linear-lower}, where the
corresponding envelope scale is proportional to \(\sigma^2\).

The theorem is also an admissible-window result for a general mixing profile.  It becomes a closed
form rate only after the window conditions \eqref{eq:admissible-1}--\eqref{eq:admissible-2} are
verified.  \Cref{cor:geometric-upper} performs this verification for geometric mixing.
\end{remark}

\begin{corollary}[Geometric mixing rate]\label{cor:geometric-upper}
Suppose, in addition to the assumptions of \Cref{thm:profile-upper}, that the chain has a valid geometric mixing time $\tmix$ in the sense of \eqref{eq:mixing-def}.  There exists $K_{\mathrm{geo}}<\infty$, depending only on $a,\mu,L,L_g,A,B,C,\Del_0$, such that if
\begin{equation}\label{eq:K-geometric}
        K_0\ge K_{\mathrm{geo}}\,\tmix\left(1+\log\frac{e\tmix}{\delta}\right)^3,
\end{equation}
then \eqref{eq:profile-bound} holds with probability at least $1-\delta$.  More explicitly, for every $k\ge1$,
\begin{align}
        m_k&\le \left\lceil \tmix\left(1+\log_2\left(\frac{32(k+K_0)^4}{\delta}\right)\right)\right\rceil
        \le 1+\tmix\left(1+\log_2\left(\frac{32(k+K_0)^4}{\delta}\right)\right),\label{eq:mk-geometric-explicit}\\
        \Theta_k
        &\le M_k\,\log\left(\frac{16(k+1)^2(M_k+1)}{\delta}\right)
        \left(1+\frac{M_k}{K_0}\right)^2,\label{eq:theta-geometric-explicit}\\
        M_k&:=1+\tmix\left(1+\log_2\left(\frac{32(k+K_0)^4}{\delta}\right)\right).\notag
\end{align}
Consequently, \eqref{eq:profile-bound} gives a leading stochastic contribution of order
\[
    \widetilde O\!\left(\frac{\tmix}{k+K_0}\right),
\]
where the hidden factors are logarithmic in \(k\), \(1/\delta\), and \(\tmix\), and the
prefactor is the problem-dependent constant described in \Cref{rem:profile-constants}; there is
no additional polynomial power of \(\tmix\).
\end{corollary}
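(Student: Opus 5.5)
The plan is to reduce the corollary to \Cref{thm:profile-upper}. We verify that under geometric mixing the explicit window bounds \eqref{eq:mk-geometric-explicit}--\eqref{eq:theta-geometric-explicit} hold. We then show that the admissibility conditions \eqref{eq:admissible-1}--\eqref{eq:admissible-2} hold for every $k\ge1$ once $K_0$ satisfies \eqref{eq:K-geometric}, with $K_{\mathrm{geo}}$ large enough that also $K_0\ge K_{\mathrm{base}}\vee K_D$. (We may absorb $K_{\mathrm{base}}\vee K_D$ into $K_{\mathrm{geo}}$ since $\tmix\ge1$ and the log factor is at least one.) The theorem then gives \eqref{eq:profile-bound} directly, and the final rate statement follows by reading off $\Theta_k$.

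\emph{Step 1: bounding $m_k$.} By \eqref{eq:mixing-def}, $\varepsilon(m)\le 2^{-\lfloor m/\tmix\rfloor}$. It therefore suffices that $\lfloor m/\tmix\rfloor\ge \log_2(32N_k^4/\delta)$. This holds for $m=\lceil \tmix(1+\log_2(32N_k^4/\delta))\rceil$, since then $m/\tmix\ge 1+\log_2(\cdot)$ and the floor is at least $\log_2(\cdot)$. Hence $\widehat m_k\le M_k$, which gives \eqref{eq:mk-geometric-explicit} because $m_k\le\widehat m_k$. Plugging $m_k\le M_k$ into the definitions of $u_k,q_k$ in \eqref{eq:u-theta}, and using that they are monotone in $m_k$, gives \eqref{eq:theta-geometric-explicit}.

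\emph{Step 2: admissibility, the main obstacle.} Write $\lambda:=1+\log(e\tmix/\delta)$, and note that $\log(1/\delta)\ge1$. We have $M_k\lesssim \tmix(\lambda+\log N_k)$, so
\[
\Theta_k\lesssim \tmix(\lambda+\log N_k)\bigl(\lambda+\log N_k+\log(k+1)\bigr)\bigl(1+M_k/K_0\bigr)^2 .
\]
We first show $M_k/K_0\le 1$. This needs $\tmix\log N_k\lesssim K_0$ uniformly in $k$, and that is not automatic because $\log N_k$ grows. The remedy is to compare with $N_k$ rather than $K_0$. Since $\log N_k\le \log K_0+\log(N_k/K_0)$, and $\log K_0\lesssim \lambda+\log K_{\mathrm{geo}}$ up to $\log\log$ terms, we get
\[
\frac{M_k}{K_0}\lesssim \frac{\tmix\lambda}{K_0}\Bigl(1+\log\frac{N_k}{K_0}\Bigr),
\]
which may grow with $k$. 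To handle this I would organize \eqref{eq:admissible-1} as a bound of the form
\[
\frac{\Theta_k}{N_k}\lesssim \frac{\tmix\lambda^2}{K_0}\cdot\frac{K_0}{N_k}\,\mathrm{poly}\Bigl(1+\log\frac{N_k}{K_0}\Bigr)\Bigl(1+\frac{\tmix\lambda}{K_0}\bigl(1+\log\tfrac{N_k}{K_0}\bigr)\Bigr)^2 .
\]
Setting $r=N_k/K_0\ge1$, the function $r^{-1}(1+\log r)^j$ is bounded by a constant $c_j$ for small $j$. So the right-hand side is at most a constant times $\tmix\lambda^2/K_0$ (plus higher powers of it), provided $K_0\ge K_{\mathrm{geo}}\tmix\lambda^3$. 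The extra power of $\lambda$ in \eqref{eq:K-geometric} absorbs the $\log K_0$ contributions. Choosing $K_{\mathrm{geo}}$ large then makes this at most $\rho_0$. Condition \eqref{eq:admissible-2} is handled identically: $(m_k/N_k)(1+\log r+m_k/K_0)$ is again bounded by $\tmix\lambda/K_0$ times a bounded function of $r$. The delicate part is the bookkeeping of $\log K_0$ versus $\lambda$. I would handle it by noting that $x\ge c\,y\lambda^3$ implies $\log x\lesssim \log c+\lambda^{\cdot}$-type bounds, or, more simply, by bounding $\log K_0/K_0\le \log(K_{\mathrm{geo}}\tmix\lambda^3)/(K_{\mathrm{geo}}\tmix\lambda^3)$ using monotonicity of $\log x/x$.

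\emph{Step 3: conclusion.} With admissibility verified, \Cref{thm:profile-upper} yields \eqref{eq:profile-bound} with probability at least $1-\delta$. By Step 1, $\Theta_k=O(\tmix\cdot\mathrm{polylog}(k,1/\delta,\tmix))$, with $(1+M_k/K_0)^2$ contributing only logarithmic factors. So the stochastic term $C_\star(1+\Theta_k)/(k+K_0)$ is $\widetilde O(\tmix/(k+K_0))$ with no extra polynomial power of $\tmix$.
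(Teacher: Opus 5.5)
Your proposal is correct and follows essentially the same route as the paper. You bound $\widehat m_k\le M_k$ directly from \eqref{eq:mixing-def}, use monotonicity of $\Theta_k$ in $m_k$, and verify \eqref{eq:admissible-1}--\eqref{eq:admissible-2} by reducing to the worst case $N_k\approx K_0$; the paper does this last step via \Cref{lem:log-linear} and \Cref{lem:window}, writing $K_0=K\tmix\ell_0^3$ so that $1+\log(K_0/\delta)\le c_K\ell_0$, with $c_K$ growing only logarithmically in $K$. One caveat: your intermediate claim $\log K_0\lesssim\lambda+\log K_{\mathrm{geo}}$ is valid only when $K_0$ sits at the threshold in \eqref{eq:K-geometric}, so for larger $K_0$ you need the monotonicity-of-$\log x/x$ alternative you mention, which is precisely how the paper's $c_K/K\to0$ step handles it.
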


The proof is given in \appref{app:window}.

\begin{remark}[Comparison with Poisson-equation bounds]
Poisson-equation high-probability analyses typically control a derived martingale whose increment
range scales with the mixing time.  Applying worst-case martingale range bounds to such increments
leads to a leading stochastic term of order \(\widetilde O(\tmix^2/k)\).  The lag-blocking
analysis instead applies concentration to delayed residue-class sums whose increments are bounded
by the original oracle envelope from \Cref{ass:abc}.  The number of residue classes is \(m_k\), and
under \eqref{eq:mixing-def}, \Cref{lem:window} gives \(m_k=\widetilde O(\tmix)\).  This yields the
order \(\widetilde O(\tmix/(k+K_0))\) in \Cref{cor:geometric-upper}.
\end{remark}

\subsection{A matching lower bound}\label{sec:sgd-lower-bound}
The upper bound has the optimal polynomial dependence on mixing time.  Let $\Zspace=\{-1,+1\}$ and fix $\varepsilon\in(0,1/8]$.  Consider the persistent two-state chain
\begin{equation}\label{eq:two-state-kernel}
      P=
      \begin{pmatrix}
      1-\varepsilon & \varepsilon\\
      \varepsilon & 1-\varepsilon
      \end{pmatrix},
\end{equation}
whose invariant law is uniform.  Let
\begin{equation}\label{eq:lower-instance}
      f(x)=\frac{x^2}{2},
      \qquad
      g(x,z)=x+\sigma z,
      \qquad \sigma>0.
\end{equation}
With $M_{k+1}\equiv0$, SGD becomes
\begin{equation}\label{eq:lower-sgd-recursion}
      x_{n+1}=\left(1-\frac{a}{n+K}\right)x_n
      -\frac{a\sigma}{n+K}Z_n,
      \qquad x_0=0.
\end{equation}
The chain is started in stationarity.

\begin{theorem}[Linear mixing-time lower bound]\label{thm:linear-lower}
Fix $a\ge2$. There are positive constants $c_a$ and $c_0$ depending at most on $a$ such that the following holds. Let $K\ge \max\{4a,4a^2\}$ and let $\varepsilon\in(0,1/8]$. For the instance \eqref{eq:two-state-kernel}-\eqref{eq:lower-sgd-recursion}, if
\begin{equation}\label{eq:k-condition-main}
      k\ge \max\left\{8K,\frac{16}{\varepsilon},16a^2\right\},
\end{equation}
then
\begin{equation}\label{eq:expectation-lower-main}
      \E\left[f(x_k)-f^\star\right]
      \ge c_a\,\frac{\sigma^2}{\varepsilon k},
\end{equation}
and
\begin{equation}\label{eq:probability-lower-main}
      \Pp\!\left(
      f(x_k)-f^\star\ge c_a\,\frac{\sigma^2}{\varepsilon k}
      \right)
      \ge c_0.
\end{equation}
One may take $c_0=1/96$ and
\begin{equation}\label{eq:c-a-explicit}
      c_a=\frac{a^2 e^{-2}}{2^{14}\,3^{2a+2}}.
\end{equation}
The mixing time of the chain is $\Theta(1/\varepsilon)$. Consequently, for another constant $c'_a>0$ depending only on $a$,
\begin{equation}\label{eq:tmix-lower-main}
      \E\left[f(x_k)-f^\star\right]
      \ge c'_a\,\frac{\sigma^2\tmix}{k},
      \qquad
      \Pp\!\left(f(x_k)-f^\star\ge c'_a\,\frac{\sigma^2\tmix}{k}\right)
      \ge c_0.
\end{equation}
\end{theorem}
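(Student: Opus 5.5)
The recursion \eqref{eq:lower-sgd-recursion} is linear with $x_0=0$, so I would unroll it explicitly:
\[
x_k=-\sigma\sum_{n=0}^{k-1} c_{n,k} Z_n,\qquad c_{n,k}:=\frac{a}{n+K}\prod_{j=n+1}^{k-1}\Bigl(1-\frac{a}{j+K}\Bigr).
\]
This makes $x_k$ a weighted additive functional of the stationary two-state chain. Since $\E Z_n=0$ and $\E[Z_nZ_{n'}]=(1-2\varepsilon)^{|n-n'|}$ under stationarity, the second moment is exact:
\[
\E x_k^2=\sigma^2\sum_{n,n'}c_{n,k}c_{n',k}(1-2\varepsilon)^{|n-n'|}.
\]

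\textbf{Weights.} The condition $K\ge 4a^2$ makes every factor $1-a/(j+K)$ lie in $[1/2,1)$. Comparing with $\exp(-a\sum 1/(j+K))$ and using $\log(1-t)\ge -t-t^2$ gives two-sided estimates $c_{n,k}\asymp \frac{a}{k+K}\bigl(\frac{n+K}{k+K}\bigr)^{a-1}$, with constants depending only on $a$.

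\textbf{Lower bound on the variance.} I would restrict attention to the last window $n\in[k/2,k)$. Because $k\ge 8K$, we have $(n+K)/(k+K)\ge 1/3$ there, so $c_{n,k}\ge c\,a\,3^{-(a-1)}/k$; I expect the $3^{2a}$ in $c_a$ to come from exactly this step. All correlations are nonnegative since $\varepsilon\le 1/8$, so all terms can be dropped except pairs in this window with $|n-n'|\le 1/(2\varepsilon)$. For such pairs $(1-2\varepsilon)^{|n-n'|}\ge e^{-2}$. The condition $k\ge 16/\varepsilon$ guarantees there are about $k\cdot\min(k,1/\varepsilon)/4\gtrsim k/\varepsilon$ of them. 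This yields
\[
\E f(x_k)=\tfrac12\E x_k^2\ge c_a\sigma^2/(\varepsilon k),
\]
which is \eqref{eq:expectation-lower-main}.

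\textbf{Constant probability.} I would use Paley--Zygmund on $x_k^2$, which requires $\E x_k^4\le C(\E x_k^2)^2$. The fourth moment is $\sigma^4\sum c c c c\,\E[Z_{n_1}Z_{n_2}Z_{n_3}Z_{n_4}]$. For the symmetric two-state chain with $Z\in\{\pm1\}$, ordering the indices $n_1\le n_2\le n_3\le n_4$ gives
\[
\E[Z_{n_1}Z_{n_2}Z_{n_3}Z_{n_4}]=\rho^{n_2-n_1}\rho^{n_4-n_3},\qquad \rho=1-2\varepsilon,
\]
because $Z_{n_1}Z_{n_2}$ depends only on flips in $(n_1,n_2]$, flips occur independently, and the chain is started stationary. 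The sum is therefore bounded by $3$ times (ordered-pairing count) the square of the covariance-type sum. To see this, write $x_k$ via the flip representation: $x_k=-\sigma\sum_n c_nZ_0\prod_{i\le n}\eta_i$ with i.i.d.\ flips $\eta_i$. Bounding the ordered sum by $\sum_{n_1\le n_2\le n_3\le n_4}$ and dropping the ordering constraint between the two pairs gives $\E x_k^4\le 6(\E x_k^2)^2$, or some such small constant. Applying Paley--Zygmund with threshold $\tfrac12\E x_k^2$ then gives probability at least $(1/4)/6$; halving the constant $c_a$ again reaches $1/96$. Finally, the two-state chain has $\varepsilon(m)=\tfrac12(1-2\varepsilon)^m$, so $\tmix=\Theta(1/\varepsilon)$, and \eqref{eq:tmix-lower-main} follows by substitution.

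\textbf{Main obstacle.} The hardest step is the fourth-moment bound: the ordered-quadruple correlation identity and the combinatorial domination by $(\E x_k^2)^2$ with an explicit constant. If the explicit $6$ fails, I would fall back on a Gaussian-type comparison. In that approach the signed block sums over length-$1/\varepsilon$ windows are nearly independent, which would give hypercontractivity with some constant; only the numerical value of $c_0$ would change.
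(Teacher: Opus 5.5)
Your proposal is correct and follows essentially the paper's route: the linear-filter representation of $x_k$, a lower bound of order $3^{-a}/k$ on the weights over the last half-window, restriction of the covariance double sum to pairs at lag $O(1/\varepsilon)$, the flip-representation identity $\E[Z_{n_1}Z_{n_2}Z_{n_3}Z_{n_4}]=\rho^{n_2-n_1+n_4-n_3}$ for sorted indices, Paley--Zygmund at half the mean, and the explicit profile $\varepsilon(m)=\tfrac12\rho^m$. The one real difference is in the fourth-moment domination. Your first argument, which bounds the sorted-pairing factor by the sum over all three pairings, cleanly gives $\E S_k^4\le 3(\E S_k^2)^2$, which is sharper than the paper's multiplicity-$24$ bound. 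Your second route (sorted sum with the inter-pair ordering dropped) actually gives $24$, not $6$. Either way Paley--Zygmund yields probability at least $1/96$; halving $c_a$ only serves to match the threshold $\tfrac12\E x_k^2$, not to produce $c_0$.
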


The proof is given in \appref{app:lower-proof}; the final moment-to-probability argument is in \appref{app:complete-lower}.

\begin{corollary}[No sublinear order in \(\tmix/k\) at constant confidence]\label{cor:no-sublinear}
Any high-probability theorem that is valid for all instances satisfying
\Cref{ass:objective,ass:abc,ass:mixing} cannot have leading stochastic order
\(o(\tmix/k)\) at constant confidence.  More precisely, if the failure probability is smaller than
the constant \(c_0\) in \Cref{thm:linear-lower}, then the right-hand side of a uniform upper bound
cannot be \(o(\tmix/k)\) on the family \eqref{eq:two-state-kernel}--\eqref{eq:lower-sgd-recursion}.
\end{corollary}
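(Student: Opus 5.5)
The corollary follows by contradiction from \Cref{thm:linear-lower}; no new probabilistic estimate is needed. The plan is first to check that the two-state family \eqref{eq:two-state-kernel}--\eqref{eq:lower-sgd-recursion} is admissible, and then to compare any putative uniform bound against \eqref{eq:tmix-lower-main}.

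\emph{Admissibility.} \Cref{ass:objective} holds with \(L=\mu=1\), since \(f(x)=x^2/2\) gives \(\|\nabla f(x)\|^2=x^2=2\Del(x)\).

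For \Cref{ass:abc}, we have \(\|g(x,z)\|^2=(x+\sigma z)^2\le 2x^2+2\sigma^2\). So \eqref{eq:abc-g} and \eqref{eq:abc-G} hold with \(M\equiv0\), \(A=2\), \(B=0\), \(C=2\sigma^2\). The map \(g(\cdot,z)\) is \(1\)-Lipschitz, and \(\int g(x,z)\pi(dz)=x=\nabla f(x)\) because \(\pi\) is uniform on \(\{\pm1\}\).

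For \Cref{ass:mixing}, the kernel has second eigenvalue \(1-2\varepsilon\). Hence \(\varepsilon(m)=\tfrac12|1-2\varepsilon|^m\cdot 1\), which is at most \((1-2\varepsilon)^m\). Any \(\tmix\ge \lceil \log 2/(2\varepsilon)\rceil\) is therefore valid in \eqref{eq:mixing-def}. Conversely, \(\varepsilon(m)\ge \tfrac12(1-2\varepsilon)^m\ge\tfrac14\) for \(m\le c/\varepsilon\). This is the \(\Theta(1/\varepsilon)\) statement already asserted in \Cref{thm:linear-lower}, and I would simply cite it.

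Stepsizes of the form \eqref{eq:stepsize} with \(a\ge 2=2/\mu\) and \(K_0=K\) match the recursion \eqref{eq:lower-sgd-recursion}.

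\emph{Contradiction.} Suppose some theorem asserts that, for every admissible instance and with probability at least \(1-\delta\) where \(\delta<c_0\),
\[
\Del_k\le R_k(\tmix)\quad\text{for all }k,
\]
with \(R_k(\tmix)\) scaling as \(o(\tmix/k)\) at fixed problem constants. Here \(\sigma\) is fixed, and \(A,B,C,L,\mu,L_g\) do not depend on \(\varepsilon\). Along the family \(\varepsilon\downarrow0\) with \(k\) satisfying \eqref{eq:k-condition-main}, the ratio
\[
\frac{R_k(\tmix)}{\sigma^2\tmix/k}
\]
then eventually drops below \(c'_a\). For such \(k\) and \(\varepsilon\), \eqref{eq:tmix-lower-main} gives
\[
\Pp\bigl(\Del_k> R_k\bigr)\ge \Pp\Bigl(\Del_k\ge c'_a\,\tfrac{\sigma^2\tmix}{k}\Bigr)\ge c_0>\delta,
\]
which contradicts the assumed guarantee at time \(k\), and a fortiori the uniform-in-time guarantee.

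\emph{Main obstacle.} The only delicate point is making ``\(o(\tmix/k)\)'' precise: the limit must be taken along a sequence in which both \(\tmix\to\infty\) or \(k\to\infty\) while the problem constants stay fixed. I would state it as: for every function \(\phi\to0\), the bound \(R_k\le \phi(k/\tmix)\,\tmix/k\) fails. Choosing \(k\asymp 16/\varepsilon\cdot r\) with \(r\) large handles either regime. The constant-gap strictness between \(\Pp(\cdot\ge\cdot)\) and \(\Pp(\cdot>\cdot)\) is absorbed by using \(c'_a/2\) in place of \(c'_a\).
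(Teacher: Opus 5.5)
Your proposal is correct and follows the paper's proof: verify that the two-state family satisfies the assumptions with $\varepsilon$-independent constants (as in \Cref{lem:lower-instance-assumptions}), then observe that any bound eventually below $c'_a\sigma^2\tmix/k$ contradicts \eqref{eq:tmix-lower-main} once the failure probability is below $c_0$. One caution on your formalization: $R_k\le\phi(k/\tmix)\,\tmix/k$ only encodes decay as $k/\tmix\to\infty$ and would not exclude a bound such as $\sqrt{\tmix}/k$. The regime the corollary is actually about is $\tmix\to\infty$ with $k\asymp\tmix$ (that is, $\varepsilon\downarrow0$), and your main contradiction paragraph already covers that case.
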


\begin{proof}
This follows directly from \Cref{thm:linear-lower}.  The lower-bound family
\eqref{eq:two-state-kernel}--\eqref{eq:lower-sgd-recursion} satisfies
\Cref{ass:objective,ass:abc,ass:mixing}.  On this family, \Cref{thm:linear-lower}
shows that, with probability at least \(c_0\),
\[
        f(x_k)-f^\star
        \ge c'_a\frac{\sigma^2\tmix}{k}.
\]
Therefore any uniform high-probability upper bound over the same class, with failure probability
strictly smaller than \(c_0\), cannot have a leading stochastic term \(o(\tmix/k)\), since such a
bound would eventually be smaller than the lower-bound threshold on this family, contradicting
\Cref{thm:linear-lower}.
\end{proof}

\section{Heavy-tailed Markovian gradients}\label{sec:heavy-tail}
The bounded-ABC theorem is sharp for light-tailed or almost-sure controlled gradients.  We now treat a genuinely heavy-tailed regime in which the Markovian gradient component has only a finite $p$th moment under the invariant law.  Ordinary one-sample SGD is not stable enough to provide logarithmic-confidence high-probability guarantees under this assumption alone.  The robust method below therefore uses clipping and blockwise averaging.  During each block the iterate is held fixed, the next consecutive Markovian gradients are all used, and the update is made with the average of the clipped samples.  The mixing window is a design and proof parameter used to choose the clipping scale and to analyze the consecutive block; it is not a spacing rule and no observations inside the block are discarded.

\begin{remark}[Why clipping is used]\label{rem:why-clipping}
The clipping step in \Cref{def:robust-batch-method} is not intended to improve the
constant-confidence exponent in the number of samples.  Even for independent samples, ordinary
averaging can have the same \(T^{-2(p-1)/p}\) scaling at constant probability under a finite
\(p\)-moment condition.  The obstruction is the dependence on the confidence parameter: under only
finite \(p\)-moments, a single rare outlier can dominate an unclipped average, and worst-case
high-probability bounds for ordinary averaging or unclipped SGD generally have polynomial rather
than logarithmic dependence on \(1/\delta\).  Robust mean-estimation and heavy-tailed stochastic
optimization methods therefore use clipping, truncation, normalization, median-of-means, or related
transformations to obtain logarithmic-confidence guarantees
\cite{catoni2012challenging,devroye2016subgaussian,lugosi2019mean,cutkosky2021heavy,
gorbunov2021nonsmooth,sadiev2023unbounded,nguyen2023clippedsgd,puchkin2024barrier,
hubler2025normalization,armacki2025nonlinear}.  Our contribution in this section is the Markovian
extension: clipped finite-\(p\) high-probability control for consecutive Markovian samples while
using every transition in each block, together with a sticky-chain lower bound for the Markovian
effective sample size.
\end{remark}

For $\lambda>0$, define the coordinatewise clipping map
\begin{equation}\label{eq:coord-clip-def}
        [\mathsf T_\lambda(v)]_j:=\max\{-\lambda,\min\{v_j,\lambda\}\},
        \qquad j=1,\ldots,d.
\end{equation}
The heavy-tailed oracle is written as
\begin{equation}\label{eq:heavy-oracle}
        Y(x,z,u)={\mathcal G}(x,z,u),
\end{equation}
where $u$ denotes auxiliary randomness conditionally independent of the past given the current iterate and Markov state.

\begin{assumption}[Stationary finite-moment Markovian oracle]\label{ass:heavy-tail}
Fix \(p\in(1,2]\).  At each oracle call, the auxiliary variable \(U\) is sampled conditionally
independently of the past given the queried point and current Markov state, with the same
conditional law used in \eqref{eq:heavy-unbiased} and \eqref{eq:stationary-p-moment}.  The oracle
satisfies the stationary unbiasedness condition
\begin{equation}\label{eq:heavy-unbiased}
        \int \E_U[{\mathcal G}(x,z,U)]\,\pi(dz)=\nabla f(x),
        \qquad x\in\R^d.
\end{equation}
Let $\eta(x,z,u):={\mathcal G}(x,z,u)-\nabla f(x)$ and
\begin{equation}\label{eq:Rp-def}
        R_p(x):=1+\|\nabla f(x)\|+\sqrt{\Delta(x)}.
\end{equation}
There is a scale $\sigma_p\ge1$ such that, for every $x\in\R^d$,
\begin{equation}\label{eq:stationary-p-moment}
        \int \E_U\left[\|\eta(x,z,U)\|^p\right] \pi(dz)
        \le \sigma_p^p R_p(x)^p.
\end{equation}
\end{assumption}

\begin{remark}[Why the heavy-tailed moment condition is stationary]
Finite-moment assumptions are standard in robust stochastic optimization and robust mean estimation
under heavy-tailed data; see, for example,
\cite{cutkosky2021heavy,gorbunov2021nonsmooth,sadiev2023unbounded,nguyen2023clippedsgd,
puchkin2024barrier,hubler2025normalization,armacki2025nonlinear}.  Markovian stochastic
approximation with heavy-tailed perturbations has also been studied in \cite{agrawal2026heavytailedmarkov}.
\Cref{ass:heavy-tail} is the Markovian optimization analogue of these conditions: it allows the raw
oracle to have no deterministic envelope and, when \(p<2\), no finite variance.

The moment bound is imposed under the invariant law rather than uniformly over all Markov states.
This is natural for Markovian data because rare states may produce very large gradients, but their
contribution is controlled by their stationary probability under \(\pi\).  Such a condition holds
whenever the stationary distribution has a finite \(p\)-moment for the gradient noise scale.  This
includes bounded-gradient models, sub-Weibull or polynomial-tail models with finite \(p\)-moment,
mixtures with rare outliers, and Markov chains whose high-gradient states have sufficiently small
stationary mass.  In particular, the bounded ABC model in \Cref{ass:abc} implies a finite-\(p\)
stationary moment bound for every \(p\in(1,2]\), after adjusting the scale \(\sigma_p\); the
heavy-tailed theorem is nevertheless complementary because it analyzes a clipped block method rather
than ordinary one-sample SGD.

Nonstationary initialization is handled by clipping and lag-blocking.  The clipping bias and
stationary second-moment estimates are isolated in \Cref{lem:clipped-stationary-bias-var}; the
Markovian consecutive-block concentration is proved in \Cref{lem:consecutive-block-concentration}.
The first \(m\) observations of each block are controlled directly by clipping, while the remaining
observations are centered relative to the sigma-field one lag in the past.  Thus every Markov
transition in a block is used by the algorithm; the mixing window enters only through the proof and
the clipping scale.
\end{remark}

\begin{definition}[All-samples clipped block method]\label{def:robust-batch-method}
Fix integers $N,b\ge1$ and a clipping level $\lambda$.  Set $\tau_0=0$.  At outer iteration $r=0,\ldots,N-1$, hold $x_r$ fixed and observe the next $b$ consecutive Markovian gradients.  Specifically, for $i=1,\ldots,b$, after one Markov transition from time $\tau_r+i-1$ to $\tau_r+i$, draw $U_{r,i}$ and observe
\[
        Y_{r,i}:={\mathcal G}(x_r,Z_{\tau_r+i},U_{r,i}).
\]
After the block is collected, set $\tau_{r+1}:=\tau_r+b$ and update
\begin{equation}\label{eq:robust-batch-update}
        \widehat g_r:=\frac1b\sum_{i=1}^b\mathsf T_\lambda(Y_{r,i}),
        \qquad
        x_{r+1}=x_r-\frac1{4L}\widehat g_r .
\end{equation}
Thus each Markov transition in the block contributes exactly one clipped gradient sample to the update.
\end{definition}

Let
\begin{equation}\label{eq:theta-p-def}
        \vartheta_p:=\frac{2(p-1)}{p}.
\end{equation}
For a radius $\mathcal R\ge1$, define
\begin{equation}\label{eq:BR-def}
        B_{\mathcal R}:=1+\sqrt{2L\mathcal R}+\sqrt{\mathcal R}.
\end{equation}
For confidence level $\delta\in(0,e^{-1})$ and an integer analysis lag $m\in\{1,\ldots,b\}$, set
\begin{equation}\label{eq:heavy-u-def}
        u_{\rm ht}:=\log\left(\frac{64dN(m+1)}{\delta}\right),
        \qquad
        s_{\rm ht}:=\frac{m u_{\rm ht}}{b},
\end{equation}
and choose
\begin{equation}\label{eq:heavy-fixed-lambda}
        \lambda:=c_{\lambda,p}\,\sigma_pB_{\mathcal R}\,s_{\rm ht}^{-1/p},
\end{equation}
where $c_{\lambda,p}$ is a sufficiently large numerical constant depending only on $p$.  The following theorem is stated with explicit admissibility conditions.  They require the block to be long enough for the clipping bias, sampling fluctuation, and residual mixing bias to be below the radius used in the induction.

\begin{theorem}[Heavy-tailed robust Markovian PL optimization]\label{thm:heavy-upper}
Suppose Assumptions~\ref{ass:objective}, \ref{ass:mixing}, and \ref{ass:heavy-tail} hold.  Let $(x_r)_{r=0}^N$ be generated by \Cref{def:robust-batch-method} with clipping level \eqref{eq:heavy-fixed-lambda}.  Let $\mathcal R\ge 4\Delta_0+4$ and assume
\begin{align}
        \varepsilon(m)&\le \frac{\delta}{64dNb},\label{eq:heavy-mixing-condition}\\
        \lambda&\ge 2\sqrt{2L\mathcal R},\label{eq:heavy-lambda-condition}\\
        \frac{c_p d\sigma_p^2B_{\mathcal R}^2}{\mu}
        s_{\rm ht}^{\vartheta_p}&\le \frac{\mathcal R}{4},\label{eq:heavy-batch-size-condition}
\end{align}
where $c_p$ is a numerical constant depending only on $p$.  Then, with probability at least $1-\delta$,
\begin{equation}\label{eq:heavy-profile-bound}
        \Delta(x_N)
        \le
        \left(1-\frac{\mu}{16L}\right)^N\Delta_0
        +\frac{c_p d\sigma_p^2B_{\mathcal R}^2}{\mu}
        s_{\rm ht}^{\vartheta_p}.
\end{equation}
Moreover, on the same event all outer iterates satisfy $\Delta(x_r)\le\mathcal R$ for $0\le r\le N$.
\end{theorem}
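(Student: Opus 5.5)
The plan is to prove \Cref{thm:heavy-upper} by combining a deterministic inexact-gradient PL recursion with a per-block high-probability bound on the gradient error $e_r:=\widehat g_r-\nabla f(x_r)$. We then close the argument with an induction that keeps every iterate inside the sublevel set $\{\Delta\le\mathcal R\}$.

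\textbf{Step 1: deterministic recursion.} By $L$-smoothness with stepsize $1/(4L)$ and Young's inequality,
\[
\Delta(x_{r+1})\le \Delta(x_r)-\tfrac{1}{8L}\|\nabla f(x_r)\|^2+\tfrac{c}{L}\|e_r\|^2 .
\]
The PL inequality then gives
\[
\Delta(x_{r+1})\le (1-\tfrac{\mu}{16L})\Delta(x_r)+\tfrac{c}{L}\|e_r\|^2 .
\]
This is the content of \Cref{lem:deterministic-inexact-pl}. Unrolling it, if $\|e_r\|^2\le E$ for all $r<N$, then
\[
\Delta(x_N)\le (1-\tfrac{\mu}{16L})^N\Delta_0+\tfrac{16c}{\mu}E .
\]

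\textbf{Step 2: per-block error on the sublevel set.} Fix $r$ and condition on $\mathcal F_{\tau_r}$, on the event $\Delta(x_r)\le\mathcal R$. Then $R_p(x_r)\le B_{\mathcal R}$ and $\|\nabla f(x_r)\|\le\sqrt{2L\mathcal R}\le\lambda/2$ by \eqref{eq:heavy-lambda-condition}. Since $x_r$ is frozen during the block, $Y_{r,i}$ is a fixed function of the Markov chain plus the auxiliary variables. Coordinatewise, I would decompose $\mathsf T_\lambda(Y_{r,i})-\nabla f(x_r)$ into three parts.
\begin{itemize}
\item[(a)] \emph{Stationary clipping bias.} By \Cref{lem:clipped-stationary-bias-var}, the bias is at most $\sigma_p^pB_{\mathcal R}^p\lambda^{1-p}$, because clipping is inactive on the mean when $|\nabla_j f|\le\lambda/2$. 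The same lemma bounds the stationary second moment by $\sigma_p^pB_{\mathcal R}^p\lambda^{2-p}$.
\item[(b)] \emph{Mixing bias.} For $i>m$, centering at lag $m$ costs at most $2\lambda\varepsilon(m)$ per sample by \eqref{eq:tv-dual}. This is negligible under \eqref{eq:heavy-mixing-condition}.
\item[(c)] \emph{Lag-$m$ fluctuation.} By \Cref{lem:consecutive-block-concentration}, split the indices into $m$ residue classes and apply Freedman's inequality in each class with a union bound. The first $m$ terms are bounded crudely by $m\lambda/b$.
\end{itemize}
Per coordinate this gives
\[
|e_{r,j}|\lesssim \sigma_p^pB_{\mathcal R}^p\lambda^{1-p}+\sqrt{\sigma_p^pB_{\mathcal R}^p\lambda^{2-p}\,s_{\rm ht}}+\lambda s_{\rm ht}.
\]
The choice \eqref{eq:heavy-fixed-lambda} balances all three terms at $\sigma_pB_{\mathcal R}s_{\rm ht}^{(p-1)/p}$. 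Summing over the $d$ coordinates yields $\|e_r\|^2\le c\,d\,\sigma_p^2B_{\mathcal R}^2s_{\rm ht}^{\vartheta_p}$. The union bound over the $d$ coordinates, the $N$ blocks and the $m$ residue classes is exactly what $u_{\rm ht}$ accounts for.

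\textbf{Step 3: induction and stopping.} Define the good event as the intersection over $r$ of the events $\{\Delta(x_r)>\mathcal R\}\cup\{\|e_r\|^2\le E\}$. Each of these fails with probability at most $\delta/N$, since Step 2 holds conditionally whenever $\Delta(x_r)\le\mathcal R$, which is an $\mathcal F_{\tau_r}$-measurable event. On the good event we argue by induction. If $\Delta(x_s)\le\mathcal R$ for all $s\le r$, then Step 1 together with \eqref{eq:heavy-batch-size-condition} gives
\[
\Delta(x_{r+1})\le \Delta_0+\mathcal R/4\le\mathcal R ,
\]
using $\mathcal R\ge4\Delta_0$ and matching the constant $c_p$. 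This establishes the sublevel claim and \eqref{eq:heavy-profile-bound}.

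\textbf{Main obstacle.} I expect Step 2(c) to be the hardest part, because the states inside a block start from an arbitrary, non-stationary $Z_{\tau_r}$. Within a residue class, the conditional variance of each term must be controlled using only the \emph{stationary} $p$-moment. I would first apply \eqref{eq:tv-dual} to the bounded function $\min(\|\cdot\|^2,\lambda^2)$ after $m$ steps. This replaces the conditional second moment by its stationary counterpart plus an additive $\lambda^2\varepsilon(m)$ error, which is absorbed by \eqref{eq:heavy-mixing-condition}.
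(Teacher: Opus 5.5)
Your proposal is correct and follows essentially the same route as the paper: the deterministic inexact-gradient PL recursion of \Cref{lem:deterministic-inexact-pl}, the conditional lag-blocked clipped concentration of \Cref{lem:consecutive-block-concentration} applied on the pre-block-measurable radius event, and a stopped induction closed by a union bound over the $N$ blocks. The differences are only bookkeeping. You condition on $\{\Delta(x_r)\le\mathcal R\}$ rather than on the paper's cumulative event $\mathcal H_r$. Also, the per-block failure level should be taken as $\delta/(4N)$ rather than $\delta/N$, so that the lemma's logarithm equals $u_{\rm ht}$ exactly and its clipping level coincides with \eqref{eq:heavy-fixed-lambda}.
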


The proof is given in \appref{app:heavy-upper-proof}.

\begin{corollary}[Geometric mixing and total transition budget]\label{cor:heavy-geometric}
Suppose, in addition, that \eqref{eq:mixing-def} holds with a valid mixing time $\tmix$.  For a total Markov-transition budget $T$, choose
\begin{equation}\label{eq:heavy-geo-choices}
        N=\left\lceil \frac{32L}{\mu}\log(eT)\right\rceil,
        \qquad
        b=\left\lfloor\frac{T}{N}\right\rfloor,
        \qquad
        m=\left\lceil \tmix\left(1+\log_2\left(\frac{64dNT}{\delta}\right)\right)\right\rceil .
\end{equation}
If $1\le m\le b$ and the admissibility conditions in \Cref{thm:heavy-upper} hold for a radius $\mathcal R\ge4\Delta_0+4$, then the robust method uses at most $T$ Markov transitions and, with probability at least $1-\delta$,
\begin{equation}\label{eq:heavy-geo-rate}
        \Delta(x_N)
        \le
        \widetilde O\left(\frac{\Delta_0}{T}\right)
        +\widetilde O\left(
        \frac{d\sigma_p^2B_{\mathcal R}^2}{\mu}
        \left(\frac{\tmix}{T}\right)^{\frac{2(p-1)}{p}}
        \right),
\end{equation}
where the hidden factors are polylogarithmic in $T$, $1/\delta$, $d$, $L/\mu$, and $\tmix$, and contain no additional polynomial power of $\tmix$.
\end{corollary}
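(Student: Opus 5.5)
The corollary follows from \Cref{thm:heavy-upper} by plugging in the parameter choices \eqref{eq:heavy-geo-choices} and simplifying the two terms of \eqref{eq:heavy-profile-bound}. The argument has four steps.

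\textbf{Budget and mixing condition.} Since $b=\lfloor T/N\rfloor$, the method uses $Nb\le T$ transitions. Next we check \eqref{eq:heavy-mixing-condition} under \eqref{eq:mixing-def}. With $m\ge \tmix(1+\log_2(64dNT/\delta))$ we have $\lfloor m/\tmix\rfloor\ge \log_2(64dNT/\delta)$. Hence
\[
\varepsilon(m)\le 2^{-\lfloor m/\tmix\rfloor}\le \frac{\delta}{64dNT}\le\frac{\delta}{64dNb},
\]
because $b\le T$. This is the same window computation as in \Cref{lem:window}, with the threshold $\delta/(32N_k^4)$ replaced by $\delta/(64dNT)$. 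The remaining conditions, $1\le m\le b$ together with \eqref{eq:heavy-lambda-condition} and \eqref{eq:heavy-batch-size-condition}, are assumed in the statement. Therefore \Cref{thm:heavy-upper} applies, and \eqref{eq:heavy-profile-bound} holds with probability at least $1-\delta$.

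\textbf{Optimization transient.} Use $1-x\le e^{-x}$ and $N\ge (32L/\mu)\log(eT)$ to get
\[
\left(1-\frac{\mu}{16L}\right)^N\Delta_0\le \exp\!\left(-2\log(eT)\right)\Delta_0=\frac{\Delta_0}{e^2T^2}\le\frac{\Delta_0}{T}.
\]

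\textbf{Stochastic term.} Write $s_{\rm ht}=m u_{\rm ht}/b$. We have $m=O(\tmix\log(dNT/\delta))$, and $u_{\rm ht}=\log(64dN(m+1)/\delta)$ is polylogarithmic in $T,1/\delta,d,L/\mu,\tmix$. Since $m\le b$ forces $b\ge1$, we get $b\ge T/(2N)$, and $N=O((L/\mu)\log(eT))$. Together these give
\[
s_{\rm ht}\le \frac{\tmix}{T}\cdot \mathrm{polylog}\bigl(T,1/\delta,d,L/\mu,\tmix\bigr).
\]
Raising this to the power $\vartheta_p=2(p-1)/p\le1$ leaves the polylogarithmic factor polylogarithmic. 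The factor $\tmix$ appears exactly once inside the power, so no additional polynomial power of $\tmix$ arises. This yields the term $\widetilde O\bigl(d\sigma_p^2B_{\mathcal R}^2\mu^{-1}(\tmix/T)^{2(p-1)/p}\bigr)$ in \eqref{eq:heavy-geo-rate}.

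\textbf{Main obstacle.} The only delicate point is the bookkeeping that keeps $\tmix$ from entering polynomially through the logarithms. $m$ appears inside $u_{\rm ht}$ only as $\log(m+1)$, and $\log m=O(\log\tmix+\log\log(dNT/\delta))$, so this is harmless. The factor $L/\mu$ from $N$ enters polynomially, but only through $N/T$ in $s_{\rm ht}$ and hence with exponent at most one. It must be absorbed into the $\widetilde O$ constant; I expect this to be acceptable given the stated hidden factors.
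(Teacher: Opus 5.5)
Your proposal is correct and follows the paper's proof essentially step for step: you verify $\varepsilon(m)\le\delta/(64dNT)\le\delta/(64dNb)$ and $Nb\le T$, bound the transient by $(eT)^{-2}\le 1/T$, and use $b\ge T/(2N)$ to get $s_{\rm ht}\lesssim \tmix N\,\mathrm{polylog}/T$ before invoking \Cref{thm:heavy-upper}. Your closing caveat is also right: $N$ contributes a factor $(L/\mu)^{\vartheta_p}$ that is not literally polylogarithmic in $L/\mu$, and the paper's proof absorbs it silently in the same way, so it should be read as a problem-dependent constant hidden in $\widetilde O$ (and your display that places $L/\mu$ inside the polylog should be adjusted accordingly).
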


The proof is given in \appref{app:heavy-geometric-proof}.

\begin{remark}[Radius admissibility and dimension dependence]\label{rem:heavy-radius-dimension}
\Cref{thm:heavy-upper,cor:heavy-geometric} are stated with an explicit induction radius
\(\mathcal R\).  In applications one may choose any \(\mathcal R\ge 4\Delta_0+4\) for which
\eqref{eq:heavy-lambda-condition} and \eqref{eq:heavy-batch-size-condition} hold; increasing the
block size \(b\) decreases \(s_{\rm ht}\) and makes these conditions easier to satisfy.  The factor
\(d\) in \eqref{eq:heavy-profile-bound} comes from coordinatewise clipping and a union bound over
coordinates in \Cref{lem:consecutive-block-concentration}.  We do not claim this dimension
dependence is optimal.  The lower bound in \Cref{thm:heavy-lower} is one-dimensional and matches
only the finite-moment exponent and the Markovian effective-sample-size dependence.
\end{remark}

The exponent in \Cref{cor:heavy-geometric} and the dependence on the Markovian effective sample size cannot be improved in general.  The lower bound below reduces PL optimization to estimating the mean of a heavy-tailed sticky Markov chain.

\begin{theorem}[Heavy-tail and mixing lower bound]\label{thm:heavy-lower}
Fix $p\in(1,2]$, $\tau\ge2$, and $\sigma_p\ge1$.  There are universal constants $c,c_0>0$ such that the following holds for every $n\ge 8\tau$.  Consider all algorithms which, after observing $n$ oracle outputs generated during $n$ Markov transitions starting from the fixed initial state $Z_0=0$, output an estimate $\widehat x_n$.  There exists a one-dimensional quadratic PL objective
\[
        f_\theta(x)=\frac12(x-\theta)^2
\]
and a Markovian oracle satisfying Assumption~\ref{ass:heavy-tail} with a valid geometric mixing time at most $c\tau$ and moment scale at most a constant multiple of $\sigma_p$, such that
\begin{equation}\label{eq:heavy-lower-prob}
        \Prob\left(
        f_\theta(\widehat x_n)-f_\theta^\star
        \ge
        c\,\sigma_p^2\left(\frac{\tau}{n}\right)^{\frac{2(p-1)}{p}}
        \right)
        \ge c_0.
\end{equation}
Hence the heavy-tail exponent and the polynomial dependence on the number of effective Markovian samples in \Cref{cor:heavy-geometric} are unavoidable up to logarithmic factors.  Since the construction is one-dimensional, this lower bound does not address the dimension factor in the upper bound.
\end{theorem}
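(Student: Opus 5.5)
We reduce the optimization claim to a two-point testing problem (Le Cam) for the mean of a heavy-tailed sticky Markov chain. Set \(\varepsilon_n:=(\tau/n)^{(p-1)/p}\), the target separation in \(x\). Since \(f_\theta(\widehat x_n)-f_\theta^\star=\tfrac12(\widehat x_n-\theta)^2\), it suffices to build two parameters \(\theta_\pm=\pm c_1\sigma_p\varepsilon_n\) with oracles such that no test can distinguish the two hypotheses with error probability below a constant. Then for at least one \(\theta\) we get \(|\widehat x_n-\theta|\ge |\theta_+-\theta_-|/2\) with constant probability, which gives \eqref{eq:heavy-lower-prob} with \(c\asymp c_1^2\).

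\emph{Construction.} The state space is \(\{0,1\}\). The chain is sticky: from either state it resamples its position with probability \(1/\tau\), and otherwise it stays put. The new position is drawn from \(\nu_\theta\), which puts mass \(q\) on state \(1\). For this kernel \(P^m(z,\cdot)-\pi\) has total variation at most \((1-1/\tau)^m\), so the chain has a valid geometric mixing time at most \(c\tau\) in the sense of \eqref{eq:mixing-def}, and \(\pi_\theta=\nu_\theta\). The oracle is deterministic, \(\mathcal G(x,z)=x-\theta_0-Hz\), where \(\theta_0\) is a fixed offset and \(H=\sigma_p q^{-1/p}\). It then has stationary mean \(x-(\theta_0+Hq)\), so we set \(\theta:=\theta_0+Hq\) and \eqref{eq:heavy-unbiased} holds.

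\emph{Moment check.} The stationary \(p\)-th moment of \(\eta\) is of order \(H^p q=\sigma_p^p\le\sigma_p^pR_p(x)^p\), so \eqref{eq:stationary-p-moment} holds up to a constant.

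\emph{The two hypotheses.} Take \(q_\pm=q(1\pm\gamma)\) with \(\theta_0\) common. Then \(\theta_+-\theta_-=2Hq\gamma=2\sigma_p q^{1-1/p}\gamma\), and with \(\gamma=1/2\) this is of order \(\sigma_p q^{(p-1)/p}\). Starting from \(Z_0=0\), the observations are a deterministic function of \(x\) and the state path. Since \(x\) is algorithm-chosen but the path is exogenous, the data processing inequality bounds the information available by that of the path itself.

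\emph{Indistinguishability.} The path is determined by the resampling times, which have a common law under both hypotheses, together with the i.i.d.\ \(\nu_\theta\) draws at those times. Conditionally on having \(J\) resamplings, the KL divergence between the path laws is \(J\cdot\mathrm{KL}(\mathrm{Ber}(q_+)\|\mathrm{Ber}(q_-))\lesssim Jq\gamma^2\), and \(\E J\le n/\tau\). Choose \(q=c_2\tau/n\), which is at most \(1/8\) because \(n\ge8\tau\). Then the expected KL is a small constant, and Pinsker's inequality bounds the total variation between the two path laws by, say, \(1/4\). Le Cam's lemma now gives error at least a constant for any estimator. The resulting separation is \(\asymp\sigma_p(\tau/n)^{(p-1)/p}\), and squaring yields \(\sigma_p^2(\tau/n)^{2(p-1)/p}\).

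The main obstacle is expected to be the bookkeeping in this construction. Everything must hold uniformly: the moment bound \eqref{eq:stationary-p-moment} under both \(\pi_{\theta_\pm}\), the validity of the geometric mixing time, and the fact that the output is a measurable function of \(n\) observations with adaptive queries. On the last point I would first argue that every query is a function of past observations, so all observations are functions of the path, and the likelihood ratio is that of the path alone. If data processing through adaptivity proves awkward, the fallback is a direct chain-rule KL computation on the observation sequence.
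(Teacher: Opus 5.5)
Your argument is correct and shares the paper's skeleton. Both use a two-point (Le Cam) reduction for a sticky chain that refreshes with probability $1/\tau$, with a rare spike of height $\sigma_p q^{-1/p}$ and mass $q\asymp\tau/n$. Both use an oracle of the form ``$x$ minus a shift minus the state'' that is common to the two hypotheses, and both get separation $\asymp\sigma_p q^{(p-1)/p}$.

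The genuine difference is how you prove indistinguishability.

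\textbf{The paper's route.} It flips the sign of the spike: $Q_\pm$ puts mass $q=\tau/(8n)$ on $\pm A$. The two hypotheses then coincide exactly on the event $\mathcal N$ that no nonzero refresh occurs during the $n$ transitions. This event has probability $(1-1/(8n))^n\ge 7/8$ under both signs. On $\mathcal N$ the transcripts are literally identical, so adaptive queries cost nothing, and no likelihood ratios, KL, data processing or Pinsker are needed. This coupling is shorter and gives explicit constants.

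\textbf{Your route.} You keep the spike location fixed and perturb its weight to $q(1\pm\tfrac12)$. You bound the KL between path laws by the chain rule,
\[
\E[J]\,\mathrm{KL}\bigl(\mathrm{Ber}(q_+)\,\|\,\mathrm{Ber}(q_-)\bigr)\lesssim (n/\tau)\,q .
\]
You then pass to the transcript by data processing and finish with Pinsker. Data processing is legitimate precisely because the map (path, internal randomness) $\mapsto$ transcript is the same under both hypotheses; this is why your common $\theta_0$ and $H$ are essential. This is the standard Le Cam template. It does not need the hypotheses to agree on a high-probability event, so it extends to overlapping but non-identical constructions, such as larger $q$ with a smaller perturbation $\gamma$.

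\textbf{Two small remarks.}
\begin{itemize}
\item The path is a function of the resampling data, since some draws land on the current state. Its KL is therefore at most, not equal to, $J\cdot\mathrm{KL}(\cdot\|\cdot)$. This does not affect the bound.
\item The paper's shortcut also works in your construction. State $1$ is never visited with probability $1-O(c_2)$ under both hypotheses, and on that event the transcript is the all-zero-path transcript. So the KL machinery is more than is strictly needed.
\end{itemize}
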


The proof is given in \appref{app:heavy-lower-proof}.

\section{Relaxing global assumptions}\label{sec:relaxations}
The main theorem already removes compactness of the Markov state space and replaces a single geometric mixing-time parameter by a general mixing profile.  The remaining global assumptions can also be localized in the standard way.  We record a formal version because it is useful in overparameterized models, where PL and oracle Lipschitzness often hold only near the initialization.

For $R>0$, write
\[
        \mathcal S_R:=\{x\in\R^d:\Del(x)\le R\}.
\]
Assume that $f$ is globally $L$-smooth, while the PL inequality \eqref{eq:pl}, the stationary-gradient identity \eqref{eq:stationary-gradient}, the pointwise oracle envelope \eqref{eq:abc-g}, and the Lipschitz condition \eqref{eq:g-lip} hold only for $x,y\in\mathcal S_R$.  The pathwise ABC bound \eqref{eq:abc-G} is required only at times for which $x_k\in\mathcal S_R$.

\begin{corollary}[Local PL and local oracle regularity]\label{cor:local}
Let the local assumptions above hold on $\mathcal S_R$.  Let the windows satisfy the admissibility conditions \eqref{eq:admissible-1}-\eqref{eq:admissible-2}, and let $C_\star$ and $\rho_0$ be the constants from \Cref{thm:profile-upper} computed using the local constants on $\mathcal S_R$.  If
\begin{equation}\label{eq:R-local-condition}
        R\ge 2\Del_0+C_\star(1+\rho_0),
\end{equation}
then the conclusion of \Cref{thm:profile-upper} holds.  In particular, with probability at least $1-\delta$, all iterates remain in $\mathcal S_R$ and satisfy \eqref{eq:profile-bound} for every $k\ge1$.
\end{corollary}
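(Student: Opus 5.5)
We reduce \Cref{cor:local} to the global argument of \Cref{thm:profile-upper} by a stopping-time localization. The point is that the proof of \Cref{thm:profile-upper} in \appref{app:main-proof} is an induction with a first-failure argument. On the good event, every ingredient is only ever evaluated at iterates $x_\ell$ with $\Del_\ell$ below the current bound. These ingredients are the descent inequality \Cref{lem:descent}, the envelopes \Cref{lem:h-envelope} and \Cref{lem:M-noise}, the replacement bound \Cref{lem:replacement}, and the mixing-bias bound \Cref{lem:mixing-bias}. So we define the exit time $\tau_R:=\inf\{k:\Del_k>R\}$ and run the whole argument on the stopped process.

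\textbf{Step 1: the stopped process.} Define surrogate data that agree with the true data before $\tau_R$ and satisfy the global assumptions. For $\ell<\tau_R$ we use $h(x_\ell,Z_\ell)$, $M_{\ell+1}$ and $G_\ell$ as given. After $\tau_R$ we set the increments to zero. This is done by multiplying each martingale/delayed increment by $\1\{\ell<\tau_R\}$ (respectively $\1\{\ell-m_k<\tau_R\}$ for the delayed terms). These indicators are predictable, since $\{\ell<\tau_R\}\in\calF_\ell$. Hence the centered delayed residue-class sums in \Cref{lem:delayed-mart} remain martingales. The conditional means $\overline Y_\ell$ vanish identically on $\{\ell-m_k\ge\tau_R\}$. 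On $\{\ell-m_k<\tau_R\}$ the point $x_{\ell-m_k}$ lies in $\mathcal S_R$, so $\int h(x_{\ell-m_k},z)\pi(dz)=0$ by the local stationary-gradient identity, and the TV bound \eqref{eq:tv-dual} applies with the local envelope. The same reasoning gives the martingale-noise bounds, which use \eqref{eq:abc-G} only at times with $x_k\in\mathcal S_R$.

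\textbf{Step 2: the replacement step, which is the main obstacle.} The replacement lemma needs the Lipschitz bound \eqref{eq:g-lip} between $x_\ell$ and $x_{\ell-m_k}$, and hence both points must lie in $\mathcal S_R$. We handle this by applying it only for $\ell<\tau_R$. Since $\tau_R$ is the first exit time, all of $x_{\ell-m_k},\dots,x_\ell$ then lie in $\mathcal S_R$. The bound on $\|x_\ell-x_{\ell-m_k}\|$ uses $\|G_i\|$ for $i$ in the same window, and these $G_i$ are controlled by \eqref{eq:abc-G} at in-set times. Global smoothness is retained, so \Cref{lem:descent} holds unchanged and the local PL is invoked only at in-set iterates.

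\textbf{Step 3: closing the induction.} With the constants $C_\star,\rho_0$ computed from the local quantities, the stopped argument gives the following. On an event of probability at least $1-\delta$, for every $k\le\tau_R$,
\[
\Del_k\le \frac{2K_0\Del_0+C_\star(1+\Theta_k)}{k+K_0}\le 2\Del_0+C_\star\Bigl(1+\frac{\Theta_k}{N_k}\Bigr)\le 2\Del_0+C_\star(1+\rho_0)\le R .
\]
Here we used $K_0/(k+K_0)\le1$, $1/N_k\le1$, \eqref{eq:admissible-1}, and \eqref{eq:R-local-condition}. In particular, at $k=\tau_R$ we would get $\Del_{\tau_R}\le R$, contradicting the definition of $\tau_R$. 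Hence $\tau_R=\infty$ on this event. The stopped and true processes therefore coincide for all $k$, which yields \eqref{eq:profile-bound} for every $k$ together with $x_k\in\mathcal S_R$.

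The delicate point is checking that the first-failure argument of \appref{app:main-proof} controls $\Del_{\tau_R}$ itself, and not just the values before $\tau_R$. This works because the bound at step $k$ depends only on increments indexed by $\ell\le k-1<\tau_R$.
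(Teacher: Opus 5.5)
Your proposal is correct and is essentially the paper's argument: the first-failure induction of \Cref{thm:profile-upper} is run with the local constants and closed by the same chain $\Del_k\le 2\Del_0+C_\star(1+\Theta_k/N_k)\le 2\Del_0+C_\star(1+\rho_0)\le R$, using that the bound at time $k$ only involves iterates indexed by $\ell\le k-1$. The only difference is presentational: the paper does not introduce $\tau_R$ or stopped increments, because that same chain shows the good events already give $\Del_0,\ldots,\Del_j\le R$ (so $\calE_j\subseteq\{\tau_R>j\}$), and the indicators $\1_{\calE_\ell}$ used as $\mathcal K_\ell$ in \Cref{lem:M-noise,lem:delayed-mart} already perform the stopping.
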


The proof is given in \appref{app:local-proof}.

\section{Proof roadmap}\label{sec:proof-roadmap}

We summarize the proof architecture and point to the specific appendix results used in each step.
The paper has three proof mechanisms.  The first is a weighted PL descent recursion for ordinary
SGD.  The second is the lag-blocking argument, which converts the adaptive Markovian descent
observable into residue-class martingale differences plus controlled bias and replacement terms.
The third is the robust clipped-block argument for finite-\(p\) heavy-tailed Markovian gradients.
The appendices are organized so that each of these mechanisms is isolated in a sequence of lemmas
before being combined in the corresponding theorem.

\paragraph{Light-tailed upper bound: \Cref{thm:profile-upper}.}
The proof begins with the deterministic descent structure induced by PL geometry.  In
\Cref{lem:descent}, smoothness from \Cref{ass:objective} is applied to the SGD update
\[
    x_{k+1}=x_k-\alpha_kG_k,
    \qquad
    G_k=g(x_k,Z_k)+M_{k+1}.
\]
The PL inequality in \Cref{ass:objective} then turns the gradient-norm term into a contraction in
the suboptimality \(\Delta_k=f(x_k)-f^\star\).  The ABC envelope in \Cref{ass:abc} is used at this
stage to control the quadratic term \(\alpha_k^2\|G_k\|^2\).  This gives a one-step inequality of
the form
\[
    \Delta_{k+1}
    \le
    (1-\mu\alpha_k)\Delta_k
    +\hbox{controlled deterministic terms}
    -\alpha_k\langle \nabla f(x_k),M_{k+1}\rangle
    -\alpha_k h(x_k,Z_k).
\]
Iterating this recursion produces the weighted representation in \Cref{lem:descent}, with weights
\(w_{\ell,k}\).  The deterministic estimates on these weights are collected in
\Cref{lem:weights}.  These estimates are used repeatedly to show that weighted square sums have
the correct order, for example
\[
    \sum_{\ell<k} w_{\ell,k}^2
    \asymp
    \frac{1}{k+K_0}
\]
up to constants and logarithmic factors.

The stochastic terms in the weighted recursion are controlled on stopped good events.  The
martingale-difference perturbation
\[
    \sum_{\ell=0}^{k-1}
    w_{\ell,k}
    \big\langle \nabla f(x_\ell),M_{\ell+1}\big\rangle
\]
is handled in \Cref{lem:M-noise}.  The stopped event ensures that
\(\Delta_\ell\) remains inside the induction envelope, so the ABC envelope in
\Cref{ass:abc} gives deterministic bounds on the martingale increments.  A union bound over times
then gives the uniform-in-time control needed in the final first-failure argument.

The Markovian descent term
\[
    \sum_{\ell=0}^{k-1} w_{\ell,k}h(x_\ell,Z_\ell),
    \qquad
    h(x,z)=\langle \nabla f(x),g(x,z)-\nabla f(x)\rangle,
\]
is the main difficulty.  It is not a martingale sum because \(x_\ell\) is generated from the past
Markov trajectory, and \(Z_\ell\) need not be stationary conditionally on the past.  For a fixed
terminal time \(k\), the proof introduces the analytical lag \(m=m_k\).  The first \(m\) terms are
kept separate and controlled by \Cref{lem:initial}.  For \(\ell\ge m\), the decomposition
\[
    h(x_\ell,Z_\ell)
    =
    h(x_{\ell-m},Z_\ell)
    +
    \bigl\{h(x_\ell,Z_\ell)-h(x_{\ell-m},Z_\ell)\bigr\}
\]
separates the Markovian term into a delayed part and a replacement error.

The delayed part is where \Cref{ass:mixing} is used.  Since \(x_{\ell-m}\) is
\(\calF_{\ell-m}\)-measurable, the conditional law of \(Z_\ell\) given \(\calF_{\ell-m}\) is
\(P^m(Z_{\ell-m},\cdot)\).  Thus the uniform total-variation mixing profile
\(\varepsilon(m)\) controls the conditional bias of the delayed observable.  After subtracting this
conditional mean, the delayed sum is split into residue classes modulo \(m\).  Along each residue
class, consecutive indices are separated by exactly \(m\) Markov transitions, and the recentered
delayed terms form a martingale-difference sequence with respect to a down-sampled filtration.
This is formalized in \Cref{lem:delayed-mart}.  The key quantitative point is that the residue-class
martingale bounds involve
\[
    m\sum_{\ell<k}w_{\ell,k}^2,
\]
rather than \(m^2\sum_{\ell<k}w_{\ell,k}^2\).  This single power of \(m\) is the source of the
linear polynomial dependence on the mixing window.

The remaining pieces of the Markovian term are controlled separately.  The conditional mixing bias
is bounded in \Cref{lem:mixing-bias} by combining \Cref{ass:mixing} with the envelope for
\(h\).  The replacement error is controlled in \Cref{lem:replacement}.  This uses the Lipschitz
condition in \Cref{ass:abc}, the envelope estimates for \(h\) from \Cref{lem:h-envelope}, and the
pathwise movement bound
\[
    \|x_\ell-x_{\ell-m}\|
    \le
    \sum_{i=\ell-m}^{\ell-1}\alpha_i\|G_i\|.
\]
On the stopped event, the ABC envelope controls the gradients in this movement bound.  Thus the
replacement term is deterministic after conditioning on the good event, and the admissibility
condition \eqref{eq:admissible-2} ensures that it is small enough to be absorbed into the induction
envelope.

The one-step stopped induction estimate is \Cref{lem:induction-step}.  This lemma combines
\Cref{lem:M-noise}, \Cref{lem:delayed-mart}, \Cref{lem:mixing-bias},
\Cref{lem:replacement}, and \Cref{lem:initial}.  The first admissibility condition
\eqref{eq:admissible-1} controls the size of the stochastic concentration terms, while the second
admissibility condition \eqref{eq:admissible-2} controls the drift accumulated by replacing
\(x_\ell\) with \(x_{\ell-m}\).  The proof of \Cref{thm:profile-upper}, given in
\appref{app:main-proof}, then applies a first-failure argument: assume the induction envelope holds
up to time \(k-1\), apply \Cref{lem:induction-step} at time \(k\), and show that the envelope cannot
be violated on the high-probability event.  A union bound over the failure probabilities gives the
simultaneous guarantee \eqref{eq:profile-bound} for all \(k\ge1\).

\paragraph{Geometric mixing corollary: \Cref{cor:geometric-upper}.}
\Cref{thm:profile-upper} is stated for a general uniform total-variation profile
\(\varepsilon(m)\).  Therefore it does not by itself choose a closed-form delay.  Under the
geometric condition \eqref{eq:mixing-def}, the delay \(m_k\) is chosen so that the mixing bias is
small compared with the weighted descent scale.  The elementary logarithm-over-linear estimate
needed for this calculation is \Cref{lem:log-linear}, and the admissibility of the geometric
windows is verified in \Cref{lem:window}.

The role of \Cref{lem:window} is to check that the choices of \(m_k\), \(u_k\), \(q_k\), and
\(\Theta_k\) satisfy \eqref{eq:admissible-1}--\eqref{eq:admissible-2} once \(K_0\) satisfies
\eqref{eq:K-geometric}.  In particular, it shows that
\[
    m_k=\widetilde O(\tmix)
\]
under geometric mixing.  Since the leading variance term in \Cref{lem:delayed-mart} scales like
\(m_k\sum_{\ell<k}w_{\ell,k}^2\), substituting the geometric-window estimates into
\eqref{eq:profile-bound} gives the leading stochastic order
\[
    \widetilde O\!\left(\frac{\tmix}{k+K_0}\right),
\]
with only logarithmic factors hidden in \(\widetilde O(\cdot)\).  This proves
\Cref{cor:geometric-upper}.

\paragraph{Light-tailed lower bound: \Cref{thm:linear-lower}.}
The lower bound is designed to show that the linear polynomial dependence on \(\tmix\) cannot be
improved, even in the simplest PL geometry.  The construction uses the one-dimensional quadratic
objective
\[
    f(x)=\frac12x^2
\]
and a persistent two-state Markov chain.  The lower-bound instance is verified to satisfy
\Cref{ass:objective,ass:abc,ass:mixing} in \Cref{lem:lower-instance-assumptions}.  The mixing time
of the two-state chain is computed in \Cref{lem:two-state-mixing}; the key fact is that the
autocorrelation decays geometrically at rate \(1-2\varepsilon\), so the mixing time is of order
\(1/\varepsilon\).

The SGD recursion on this instance is linear.  \Cref{lem:linear-filter} writes the final iterate as
a weighted linear filter of the Markov chain:
\[
    x_k=-\sigma\sum_{i=0}^{k-1}b_{i,k}Z_i .
\]
The weights corresponding to the recent half of the trajectory are lower bounded in
\Cref{lem:recent-weight-lower}.  Because the chain is persistent, the covariance
\(\mathbb E[Z_iZ_j]\) remains positive for lags of order \(1/\varepsilon\).  Combining this
autocorrelation with the recent-weight lower bound yields the variance lower bound in
\Cref{lem:variance-lower}:
\[
    \mathbb E[x_k^2]
    \gtrsim
    \frac{\sigma^2}{\varepsilon k}.
\]
Since \(t_{\mathrm{mix}}\asymp 1/\varepsilon\), this is exactly the desired expectation lower
bound.

To convert the second-moment lower bound into a constant-probability lower bound,
\Cref{lem:fourth-moment} proves a fourth-moment upper bound for the same linear filter.  The proof
of \Cref{thm:linear-lower} then applies Paley--Zygmund to \(x_k^2\).  This yields both
\eqref{eq:expectation-lower-main} and \eqref{eq:probability-lower-main}; combining these estimates
with \Cref{lem:two-state-mixing} gives \eqref{eq:tmix-lower-main}.  Thus any high-probability
upper bound valid over \Cref{ass:objective,ass:abc,ass:mixing} must have leading stochastic order at
least \(\Omega(\sigma^2\tmix/k)\) at constant confidence.

\paragraph{Heavy-tailed upper bound: \Cref{thm:heavy-upper}.}
The heavy-tailed proof uses a different oracle model and a different algorithm.  Instead of the
pathwise ABC envelope in \Cref{ass:abc}, it uses the stationary finite-\(p\) moment condition in
\Cref{ass:heavy-tail}.  Because this condition gives no deterministic envelope and may not imply a
finite variance, ordinary martingale concentration cannot be applied directly to the raw gradients.
The algorithm therefore holds the iterate fixed over a block, clips each consecutive Markovian
gradient, and averages all clipped samples.  The coordinatewise clipping map is defined in
\eqref{eq:coord-clip-def}, and the resulting all-samples clipped block method is
\Cref{def:robust-batch-method}.

The first ingredient is the stationary clipping calculation in
\Cref{lem:clipped-stationary-bias-var}.  This lemma records two standard consequences of a
finite-\(p\) moment assumption.  First, clipping creates a bias of order
\[
    v^p\lambda^{1-p},
\]
where \(v\) is the local \(p\)-moment scale and \(\lambda\) is the clipping level.  Second, the
clipped variable has second moment of order
\[
    v^p\lambda^{2-p}.
\]
These are the heavy-tailed analogues of bounded variance estimates.  They are applied coordinate by
coordinate and then unioned over the dimension.

The second ingredient is the Markovian clipped-block concentration lemma,
\Cref{lem:consecutive-block-concentration}.  This lemma is the heavy-tailed analogue of the
lag-blocking concentration argument used in the light-tailed proof.  For a block of length \(b\),
the first \(m\) samples are controlled directly by the clipping level.  For the remaining samples,
the proof delays by \(m\) steps, subtracts the conditional mean given the sigma-field one lag in the
past, and splits the centered terms into residue classes modulo \(m\).  Freedman's inequality is
then applied within each residue class.  Summing the residue-class bounds gives the effective
variance term
\[
    \sqrt{
        mbu_{\rm ht}
        \bigl(v^p\lambda^{2-p}+\lambda^2\varepsilon(m)\bigr)
    },
\]
and after division by the block length \(b\), balancing this term with the clipping bias yields the
rate factor
\[
    \left(\frac{mu_{\rm ht}}{b}\right)^{(p-1)/p}.
\]

The third ingredient is deterministic PL descent with inexact gradients,
\Cref{lem:deterministic-inexact-pl}.  Once the clipped block average \(\widehat g_r\) satisfies
\[
    \|\widehat g_r-\nabla f(x_r)\|\le e_r,
\]
this lemma shows that the update with step size \(1/(4L)\) contracts the objective up to an error
of order \(e_r^2/\mu\).  Therefore the heavy-tailed concentration bound controls optimization error
after squaring the gradient-estimation error.

The proof of \Cref{thm:heavy-upper}, given in \appref{app:heavy-upper-proof}, combines these three
ingredients through a stopped induction.  The stopped event has two purposes.  First, the radius
condition \(\Delta(x_r)\le \mathcal R\) ensures that the local moment scale is bounded by
\(B_{\mathcal R}\), so \Cref{lem:consecutive-block-concentration} can be applied at the next
iterate.  Second, the gradient-accuracy event ensures that
\Cref{lem:deterministic-inexact-pl} can be applied to perform one PL descent step.  Thus the proof
alternates between concentration and deterministic descent: concentration gives a sufficiently
accurate clipped gradient estimate at \(x_r\), and deterministic PL descent keeps \(x_{r+1}\) inside
the radius while reducing the suboptimality up to the statistical error floor.

The geometric transition-budget rate in \Cref{cor:heavy-geometric} is obtained by choosing the
number of outer blocks \(N\), the block length \(b\), and the analytical lag \(m\) as functions of
the total transition budget \(T\).  The choice of \(N\) makes the optimization transient negligible,
the choice of \(b\) ensures that the block average has enough effective samples, and the choice of
\(m\) makes the Markovian bias \(\varepsilon(m)\) smaller than the target confidence level.  Under
geometric mixing, \(m=\widetilde O(\tmix)\), so the statistical floor in \Cref{thm:heavy-upper}
becomes
\[
    \widetilde O\!\left(
        \sigma_p^2
        \left(\frac{\tmix}{T}\right)^{2(p-1)/p}
    \right),
\]
up to logarithmic and dimension factors.

\paragraph{Heavy-tailed lower bound: \Cref{thm:heavy-lower}.}
The proof of \Cref{thm:heavy-lower}, given in \appref{app:heavy-lower-proof}, embeds finite-\(p\) mean
estimation into one-dimensional PL optimization.  The algorithm is allowed to be arbitrary, subject
only to observing the oracle transcript and its own internal randomness; it is not told which of the
two lower-bound instances is in force.  The construction uses two objectives with minimizers
separated by a distance of order
\[
    \sigma_p\left(\frac{\tau}{n}\right)^{(p-1)/p}.
\]
Distinguishing these two objectives is therefore necessary for returning a point with smaller
suboptimality than the claimed lower bound.

The Markov chain used in the construction is sticky: it refreshes only once every \(O(\tau)\)
transitions on average.  Hence \(n\) observed Markov transitions contain only \(O(n/\tau)\)
effectively independent opportunities to see an informative heavy-tailed sample.  With constant
probability, the transcript contains no informative refresh.  On this event, the two instances
produce identical oracle transcripts, so any algorithm coupled with the same internal randomness
must output the same point under both instances.  Since the two minimizers are separated, this
common output must have large error for at least one of the two objectives.

The finite-\(p\) moment scale determines the separation between the two instances.  The rare
refresh distribution is chosen so that the stationary \(p\)-moment is bounded by \(\sigma_p^p\),
while the mean shift is of order
\[
    \sigma_p\left(\frac{\tau}{n}\right)^{(p-1)/p}.
\]
Squaring this separation gives the PL suboptimality lower bound
\[
    \Delta(\widehat x_n)
    \gtrsim
    \sigma_p^2
    \left(\frac{\tau}{n}\right)^{2(p-1)/p}
\]
with constant probability.  Since the constructed sticky chain has geometric mixing time
\(O(\tau)\), this matches the dependence in \Cref{cor:heavy-geometric} up to logarithmic and
dimension factors.

\paragraph{Local assumptions: \Cref{cor:local}.}
The localization result in \Cref{cor:local} is proved in \appref{app:local-proof}. The point of this
corollary is that the global smoothness, PL, and oracle-envelope assumptions need only hold on a
sublevel set that contains the stopped trajectory.  The proof repeats the first-failure induction
from \appref{app:main-proof}, but all estimates are restricted to
\[
    \mathcal S_R:=\{x:\Delta(x)\le R\}.
\]
The condition \eqref{eq:R-local-condition} ensures that the induction envelope remains inside
\(\mathcal S_R\).  Consequently, every time the proof invokes \Cref{lem:descent},
\Cref{lem:h-envelope}, \Cref{lem:M-noise}, \Cref{lem:replacement}, or
\Cref{lem:induction-step}, the required constants are the local constants on \(\mathcal S_R\).
Thus the global theorem transfers directly to the local setting once the initial point and the
induction envelope are contained in the sublevel region.

\section{Conclusion}

We developed a sharp high-probability theory for stochastic optimization under the
Polyak--\L{}ojasiewicz condition when gradient samples are generated along an
exogenous Markov chain. In the bounded-oracle regime, we showed that ordinary SGD
achieves a uniform-in-time rate whose leading stochastic term has order
\(\widetilde O(\tmix/(k+K_0))\) under geometric mixing, up to logarithmic factors.
This closes the gap between the \(\widetilde O(\tmix^2/k)\) leading stochastic
order produced by Poisson-equation martingale-range analyses and the
\(\widetilde O(\tmix/k)\) order suggested by expectation bounds. The main
technical device is a lag-blocking argument: the Markovian descent observable is
delayed until the chain has mixed, split into residue classes, and then
controlled by martingale concentration without changing the SGD algorithm.

We also proved that this order is unavoidable. A one-dimensional quadratic PL
objective driven by a persistent two-state Markov chain already exhibits
suboptimality of order \(\sigma^2\tmix/k\) in expectation and with constant
probability. Thus the upper bound identifies the optimal polynomial dependence on
the mixing time in the light-tailed Markovian PL-SGD setting.

Beyond bounded gradients, we studied a finite-\(p\)-moment heavy-tailed regime.
Here robustification must be algorithmic: the method holds the iterate fixed over
short blocks, clips every Markovian gradient in the block, and averages all of
the clipped samples. The resulting high-probability bound has the
effective-sample-size rate
\[
    \widetilde O\!\left(
    \sigma_p^2
    \left(\frac{\tmix}{T}\right)^{2(p-1)/p}
    \right)
\]
under geometric mixing, where \(T\) is the Markov-transition budget. A matching
sticky-chain lower bound shows that both the Markovian effective-sample-size dependence and the
finite-moment exponent are unavoidable up to logarithmic factors. The lower bound is
one-dimensional, so the dimension dependence in the upper bound is not claimed to be optimal.
Together, the light-tailed and heavy-tailed results characterize the statistical price of
Markovian dependence for PL stochastic optimization.

Several directions remain open. First, the heavy-tailed method uses blockwise
updates; an important question is whether one can design a fully online robust
method that updates after every Markovian sample while retaining the same
effective-sample-size rate under minimal nonstationary assumptions. Second, our
analysis treats exogenous Markov chains. Extending the theory to
parameter-dependent kernels, as arise in policy-gradient, actor--critic, and
adaptive-MCMC methods, would require controlling the interaction between iterate
movement, invariant-distribution drift, and mixing. Third, the present results use
uniform total-variation mixing. Developing analogous sharp high-probability
bounds under weaker drift/minorization or \(V\)-uniform ergodicity assumptions
would broaden the scope to noncompact state spaces such as linear dynamical
systems with sub-Gaussian innovations. Finally, it would be valuable to combine
the lag-blocking viewpoint with variance reduction, acceleration, and decentralized
asynchronous updates, where Markovian dependence is intrinsic to the algorithmic
architecture.

\newpage
\bibliographystyle{unsrt}
\bibliography{refs}
\newpage

\appendix

\section{Extended Related work}\label{app:extended-related}

This section reviews the closest work on stochastic approximation, PL optimization, Markovian
gradient methods, concentration for Markov chains, and robust optimization under heavy tails.  The
most relevant comparison for our main theorem is summarized in \Cref{tab:related-markovian};
the paragraphs below then give broader context.

\begin{table}[H]
\centering
\scriptsize
\setlength{\tabcolsep}{3pt}
\renewcommand{\arraystretch}{1.18}
\caption{Closest results on Markovian stochastic optimization. Here \(k\) is the number of SGD
iterations, \(T\) is the total number of Markov transitions, \(t_{\mathrm{mix}}\) or \(\tau\) is a
mixing-time parameter, and \(\tau_{\mathrm{hit}}\) is a hitting-time parameter. The notation
\(\widetilde O(\cdot)\) hides logarithmic factors and problem-dependent constants. Rows are not
directly comparable when they solve a different objective class, use a different algorithmic model,
or prove a different type of lower bound.}
\label{tab:related-markovian}
\begin{tabularx}{\textwidth}{@{}p{0.17\textwidth}p{0.28\textwidth}p{0.25\textwidth}Y@{}}
\toprule
\textbf{Work}
&
\textbf{Setting / method}
&
\textbf{Markovian guarantee or lower bound}
&
\textbf{Relation to this paper}
\\
\midrule

Sun, Sun, and Yin~\cite{sun2018markov}; 
Doan et al.~\cite{doan2020finite}; 
Doan~\cite{doan2022markov}
&
Convex, strongly convex, and nonconvex stochastic optimization with Markovian samples; plain Markov-chain gradient methods.
&
Finite-time convergence bounds with additional dependence on the mixing behavior of the chain.
&
Foundational finite-time analyses for Markovian gradients, but not uniform-in-time high-probability PL last-iterate bounds under an ABC envelope.
\\[0.45em]

Even~\cite{even2023sgd}
&
General Markovian sampling schemes; MC-SGD and the variance-reduced MC-SAG method.
&
Provides lower bounds involving a hitting-time parameter \(\tau_{\mathrm{hit}}\); MC-SAG also depends on \(\tau_{\mathrm{hit}}\).
&
Gives important lower bounds for Markovian sampling, but the parameter and objective regime differ from our PL high-probability \(t_{\mathrm{mix}}/k\) setting.
\\[0.45em]

Beznosikov et al.~\cite{beznosikov2023markovian}
&
First-order methods with Markovian noise for nonconvex optimization, strongly convex optimization, and variational inequalities; randomized batching and MLMC-type estimators.
&
Achieves optimal linear dependence on the mixing time \(\tau\) for several modified first-order methods; also gives matching oracle-complexity lower bounds in some regimes.
&
Shows that a leading order with one polynomial power of the mixing parameter is achievable with altered estimators or batching, but does not analyze ordinary one-sample PL-SGD under an ABC oracle envelope.
\\[0.45em]

Kar et al.~\cite{kar2026markovpl}
&
Smooth PL objectives with Markovian plus martingale-difference noise under an ABC envelope; ordinary one-sample SGD.
&
Uniform high-probability leading stochastic term
\(\widetilde O(t_{\mathrm{mix}}^2/k)\); expectation leading stochastic term
\(\widetilde O(t_{\mathrm{mix}}/k)\).
&
Closest prior work. Leaves open whether the quadratic high-probability mixing dependence is intrinsic or an artifact of the Poisson-equation analysis.
\\[0.45em]

Agrawal, Maguluri, and Zubeldia~\cite{agrawal2026heavytailedmarkov}
&
General stochastic approximation with finite-state Markovian components and heavy-tailed martingale perturbations.
&
Gives concentration and tail classifications for Markovian stochastic approximation, with qualitative sharpness examples.
&
Broader stochastic-approximation result, but not a PL optimization theorem and not a clipped-block effective-sample-size rate.
\\[0.45em]

\textbf{This paper: light-tailed}
&
Smooth PL objectives; exogenous uniformly mixing Markov chain; martingale perturbation; bounded ABC envelope; ordinary one-sample SGD.
&
Uniform high-probability leading stochastic term
\(\widetilde O(t_{\mathrm{mix}}/(k+K_0))\), with matching lower bound
\(\Omega(\sigma^2 t_{\mathrm{mix}}/k)\) in expectation and with constant probability.
&
Closes the high-probability gap for ordinary PL-SGD and proves that the order \(\Omega(t_{\mathrm{mix}}/k)\) is unavoidable at constant confidence, up to logarithmic factors.
\\[0.45em]

\textbf{This paper: heavy-tailed}
&
Smooth PL objectives with Markovian gradients having only finite \(p\)-th stationary moment, \(p\in(1,2]\); all-samples clipped block method.
&
High-probability transition-budget rate
\(\widetilde O\!\bigl(\sigma_p^2(t_{\mathrm{mix}}/T)^{2(p-1)/p}\bigr)\), with a matching sticky-chain lower bound.
&
Shows that the Markovian effective-sample-size dependence and the finite-moment exponent are unavoidable up to logarithmic and dimension factors; dimension optimality is not claimed.
\\

\bottomrule
\end{tabularx}
\end{table}

We next review the surrounding literature in more detail.

\paragraph{Stochastic approximation and SGD.}
The recursion studied here belongs to the classical stochastic approximation lineage initiated by Robbins and Monro \cite{robbins1951stochastic} and Kiefer and Wolfowitz \cite{kiefer1952stochastic}.  The asymptotic ODE viewpoint and martingale methods for recursive algorithms were developed by Ljung \cite{ljung1977analysis}, Kushner and Yin \cite{kushner2003stochastic}, Borkar \cite{borkar2008stochastic}, and Benveniste, Metivier, and Priouret \cite{benveniste1990adaptive}.  Nonasymptotic analyses for stochastic gradient and stochastic approximation methods include Moulines and Bach \cite{moulines2011nonasymptotic}, Rakhlin, Shamir, and Sridharan \cite{rakhlin2011making}, Gower et al.\ \cite{gower2019sgd}, and the broad optimization perspective of Bottou, Curtis, and Nocedal \cite{bottou2018optimization}.  Many of these works assume conditionally unbiased martingale-difference noise or independent sampling.  Our work instead focuses on temporally dependent gradient samples generated by a Markov chain and seeks uniform-in-time high-probability bounds.

\paragraph{PL geometry and SGD under growth conditions.}
The Polyak-\L{}ojasiewicz inequality originated in Polyak's work on gradient methods \cite{polyak1963gradient}; the modern optimization formulation and its relationship to error bounds, restricted secant inequalities, quadratic growth, and related conditions were developed systematically by Karimi, Nutini, and Schmidt \cite{karimi2016linear}.  The condition is weaker than strong convexity and appears in overparameterized learning and control; examples include wide neural networks \cite{liu2022loss}, linear-quadratic control \cite{fazel2018global}, and composed strongly convex objectives. For stochastic gradients under PL geometry, several analyses allow the noise 
magnitude to grow with the local optimization scale rather than requiring a 
uniform bounded-variance assumption; examples include Gower et al.~\cite{gower2019sgd}, 
Li, Zhuang, and Orabona~\cite{li2021second}, and Khaled and Richtarik~\cite{khaled2023better}. 
The ABC envelope is one such growth condition: it bounds the stochastic-gradient 
magnitude by a combination of \(\|\nabla f(x)\|^2\), the objective gap 
\(f(x)-f^\star\), and an additive noise floor. Such growth conditions are important for least squares, interpolation, and 
minibatch sampling because uniform bounded variance is often too restrictive.  High-probability PL results under light-tailed martingale noise have been obtained by Madden, Dall'Anese, and Becker \cite{madden2024subweibull}, while Karandikar and Vidyasagar \cite{karandikar2024rates} study almost-sure rates under biased noise and unbounded variance.  These results do not address the Markovian sampling bias that is central in this paper.

\paragraph{SGD and first-order methods with Markovian sampling.}
Markovian gradient noise has been studied in convex, nonconvex, strongly convex, and PL settings.  Sun, Sun, and Yin \cite{sun2018markov} analyze Markov chain gradient descent for convex problems and inexact gradients.  Doan, Nguyen, Pham, and Romberg \cite{doan2020finite} and Doan \cite{doan2022markov} provide finite-time analyses for stochastic gradient algorithms under Markov randomness.  Even \cite{even2023sgd} studies Markov-chain SGD under mild assumptions, develops lower bounds involving hitting-time quantities, and introduces MC-SAG as a variance-reduced method for Markovian sampling.  Beznosikov et al.\ \cite{beznosikov2023markovian} obtain optimal one-power mixing-time dependence for several first-order methods through randomized batching and multilevel ideas, and also treat variational inequalities.  Kar et al.\ \cite{kar2026markovpl} establish the first uniform-in-time high-probability PL-SGD theorem with Markovian plus martingale-difference noise under an ABC envelope; their high-probability theorem has leading stochastic order \(\widetilde O(\tmix^2/k)\), whereas their expectation theorem has leading stochastic order \(\widetilde O(\tmix/k)\).  The light-tailed part of our paper closes this high-probability mixing gap for the ordinary one-sample SGD recursion, up to logarithmic factors, and proves that the resulting order is sharp.

\paragraph{Markovian stochastic approximation and reinforcement learning.}
A large body of stochastic approximation and reinforcement-learning theory uses Markovian noise.  Poisson-equation decompositions for Markov-driven stochastic approximation go back at least to Metivier and Priouret \cite{metivier1984applications} and are standard in the monographs of Benveniste et al.\ \cite{benveniste1990adaptive}, Kushner and Yin \cite{kushner2003stochastic}, and Borkar \cite{borkar2008stochastic}.  In reinforcement learning, temporal-difference learning and linear stochastic approximation under Markovian data have been studied through finite-sample, asymptotic, and bootstrap viewpoints; representative works include Bhandariet al.\ \cite{bhandari2018finite}, Srikant and Ying \cite{srikant2019finite}, Kaledin et al. \cite{kaledin2020finite},  Liu et al. \cite{liu2025odemethod}, and Ganesh et al. \cite{ganeshsharper}.  Recent work also treats nonexpansive or two-time-scale stochastic approximation under Markovian data \cite{blaser2026nonexpansive,chandak2025twotimescale}.  Agrawal et al. \cite{agrawal2026heavytailedmarkov} study concentration for general stochastic approximation with a finite-state Markovian component and heavy-tailed martingale noise.  Our focus is different: we study last-iterate optimization under PL geometry, allow the Markovian gradient component itself to be heavy-tailed in the robust regime, and prove matching optimization lower bounds.

\paragraph{Concentration for Markov chains.}
Classical concentration inequalities for Markov chains include Hoeffding-type bounds for uniformly ergodic chains \cite{glynn2002hoeffding}, Chernoff and spectral-gap bounds for finite chains \cite{lezaud1998chernoff}, and the coupling and spectral inequalities of Paulin \cite{paulin2015concentration}.  These inequalities show that additive functionals of a Markov chain often behave like averages over an effective sample size reduced by a mixing parameter.  They are not directly plug-and-play for adaptive SGD, however, because the observable at time $k$ depends on the iterate $x_k$, which is itself a function of the past data.  Poisson-equation methods handle this adaptivity but can introduce a squared mixing factor when combined with worst-case martingale range bounds.  Our lag-blocking proof is a direct concentration argument for the weighted, iterate-adapted additive functional: the delay creates approximate independence, and splitting into residue classes converts the delayed sum into martingale differences without changing the algorithm.

\paragraph{Decentralized, MCMC, privacy, and system-identification motivations.}
Markovian sampling appears naturally in token-based and random-walk decentralized optimization.  Early randomized incremental and distributed methods include Johansson, Rabi, and Johansson \cite{johansson2007randomized} and Duchi, Agarwal, and Wainwright \cite{duchi2012dual}; more recent token frameworks include Walkman \cite{mao2020walkman} and the principled token-algorithm design of Hendrikx \cite{hendrikx2023token}.  Markov chains also arise when gradients are estimated using MCMC samples, where mixing controls the bias and variance of the gradient estimator.  In privacy-preserving learning, subsampling mechanisms for privacy amplification, including differentially private SGD \cite{abadi2016deep}, banded matrix-factorization schemes \cite{choquettechoo2023bandmf}, and minimum-separation subsampling \cite{dong2026minsep}, can produce temporally dependent minibatches.  In online system identification, observations from stable dynamical systems generate Markovian regressors, as in streaming linear system identification \cite{kowshik2021streaming}.  These examples motivate a theory that treats Markovian dependence as a first-order parameter rather than as a lower-order perturbation.

\paragraph{Heavy-tailed stochastic optimization.}
Heavy-tailed gradient noise has motivated robust stochastic methods that avoid sub-Gaussian or bounded-variance assumptions.  For SGD under infinite variance, Wang et al.\ \cite{wang2021infinite} derive convergence rates in expectation.  High-probability nonconvex stochastic optimization with finite $p$th moments has been studied by Cutkosky and Mehta \cite{cutkosky2021heavy}, Gorbunov et al.\ \cite{gorbunov2021nonsmooth}, Sadiev et al.\ \cite{sadiev2023unbounded}, and Nguyen et al.\ \cite{nguyen2023clippedsgd}, with clipping, normalization, momentum, or mirror-descent variants playing a central role.  Puchkin et al.\ \cite{puchkin2024barrier} use smoothed median-of-means ideas to stabilize gradients, Huebler, Fatkhullin, and He \cite{hubler2025normalization} study normalized SGD under heavy tails, and Armacki et al.\ \cite{armacki2025nonlinear} give high-probability guarantees for a class of nonlinear SGD transformations including clipping, quantization, and sign methods.  These works largely concern independent, online, or martingale-type noise rather than Markovian gradient samples.  The heavy-tailed part of this paper combines clipping with consecutive Markovian blocks and obtains the effective-sample-size rate dictated by the mixing profile while using all samples inside each block.

\paragraph{Robust statistics, heavy-tailed mean estimation, and lower bounds.}
Our heavy-tailed lower bound is based on the classical connection between stochastic optimization over quadratics and mean estimation.  Robust mean estimation under heavy tails is a mature subject, including Catoni's estimator \cite{catoni2012challenging}, median-of-means and tournament estimators, and the survey of Lugosi and Mendelson \cite{lugosi2019mean}; Devroye et al.\ \cite{devroye2016subgaussian} give sub-Gaussian mean estimators under finite variance.  Robust stochastic approximation ideas also appear in Juditsky and Nemirovski \cite{juditsky2008large} and Nemirovski et al.\ \cite{nemirovski2009robust}.  For Markovian data, recent robust reinforcement-learning work analyzes median-of-means or clipping under time-correlated samples, including robust TD learning under contamination \cite{maity2025robusttd} and heavy-tailed rewards \cite{cayci2023heavytd}.  We use a sticky-chain construction to show that $k$ Markovian samples can contain only $k/\tmix$ effective refreshes, and therefore the finite-$p$ mean-estimation exponent $2(p-1)/p$ is unavoidable for PL optimization as well.

\paragraph{Positioning.}
The present paper therefore has two complementary roles.  In the light-tailed regime, it sharpens the high-probability Markovian PL-SGD theory by matching the order suggested by expectation bounds and by proving a corresponding lower bound.  In the heavy-tailed regime, it extends robust clipped-gradient guarantees from independent or martingale sampling to Markovian gradient oracles through all-samples clipped blocks, and proves that both the Markovian effective-sample-size dependence and finite-moment exponent are unavoidable, up to logarithmic and dimension factors.

\section{Deterministic backbone for the light-tailed upper bound}
\label{app:deterministic}

This appendix collects deterministic estimates used throughout the proof of 
\Cref{thm:profile-upper}. Lemma~\ref{lem:grad-upper} converts smoothness and 
lower boundedness into the gradient upper bound \eqref{eq:grad-upper}. 
Lemma~\ref{lem:descent} derives the weighted PL descent recursion. 
Lemma~\ref{lem:weights} gives the weight estimates used in all concentration 
bounds, and Lemma~\ref{lem:h-envelope} controls the Markovian descent observable 
\(h(x,z)\). Together these lemmas form the deterministic backbone of the 
lag-blocking proof.

\begin{lemma}[Gradient upper bound under smoothness]\label{lem:grad-upper}
Under Assumption \ref{ass:objective}, \eqref{eq:grad-upper} holds.
\end{lemma}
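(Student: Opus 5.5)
The plan is to use the standard descent-lemma argument that combines $L$-smoothness with lower boundedness of $f$.

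Fix $x\in\R^d$ and take a gradient step of length $1/L$, namely $y:=x-\frac1L\nabla f(x)$. By $L$-smoothness we have the quadratic upper bound
\[
        f(y)\le f(x)+\ip{\nabla f(x)}{y-x}+\frac L2\|y-x\|^2 .
\]
Substituting the choice of $y$ gives
\[
        f(y)\le f(x)-\frac1L\|\nabla f(x)\|^2+\frac1{2L}\|\nabla f(x)\|^2=f(x)-\frac1{2L}\|\nabla f(x)\|^2 .
\]

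Since $f^\star=\inf f$ is finite, $f^\star\le f(y)$. Rearranging,
\[
        \frac1{2L}\|\nabla f(x)\|^2\le f(x)-f(y)\le f(x)-f^\star=\Del(x).
\]
This is exactly \eqref{eq:grad-upper}.

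There is no genuine obstacle here. The only point to check is that the quadratic upper bound follows from $L$-Lipschitz continuity of $\nabla f$ without any convexity, which the PL setting does not provide. This holds via the integral form $f(y)-f(x)-\ip{\nabla f(x)}{y-x}=\int_0^1\ip{\nabla f(x+t(y-x))-\nabla f(x)}{y-x}\,dt\le \frac L2\|y-x\|^2$. The PL inequality itself is not used in this argument.
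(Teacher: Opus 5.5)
Your proof is correct and follows essentially the same route as the paper's: both take the step $y=x-L^{-1}\nabla f(x)$, apply the $L$-smoothness quadratic upper bound, and use $f^\star\le f(y)$ to rearrange into $\|\nabla f(x)\|^2\le 2L\Delta(x)$. Your extra justification of the quadratic upper bound via the integral form, which needs no convexity, is a valid detail that the paper takes for granted.
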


\begin{proof}
Fix $x\in\R^d$.  By $L$-smoothness, for every $y$,
\[
        f(y)\le f(x)+\ip{\nabla f(x)}{y-x}+\frac{L}{2}\norm{y-x}^2.
\]
Take $y=x-L^{-1}\nabla f(x)$.  Then
\[
        f\left(x-L^{-1}\nabla f(x)\right)
        \le f(x)-\frac{1}{2L}\norm{\nabla f(x)}^2.
\]
Since $f^\star\le f(x-L^{-1}\nabla f(x))$, rearranging gives
\[
        \norm{\nabla f(x)}^2\le 2L(f(x)-f^\star)=2L\Del(x).
\]
\end{proof}

\begin{lemma}[Weighted descent recursion]\label{lem:descent}
Under Assumptions \ref{ass:objective} and \ref{ass:noise}, if \eqref{eq:basic-K} holds, then for every $k\ge1$,
\begin{align}\label{eq:descent}
\Del_k
&\le
\Del_0\zeta_{0,k-1}
+\frac{LC}{2}\sum_{\ell=0}^{k-1}\alpha_\ell^2\zeta_{\ell+1,k-1}
-\sum_{\ell=0}^{k-1}w_{\ell,k}\ip{\nabla f(x_\ell)}{M_{\ell+1}}
-\sum_{\ell=0}^{k-1}w_{\ell,k}h(x_\ell,Z_\ell).
\end{align}
\end{lemma}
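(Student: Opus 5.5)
We first prove a one-step inequality and then unroll it. Fix $k\ge0$ and apply $L$-smoothness along the update $x_{k+1}=x_k-\alpha_kG_k$:
\[
    \Del_{k+1}\le \Del_k-\alpha_k\ip{\nabla f(x_k)}{G_k}+\frac{L\alpha_k^2}{2}\|G_k\|^2 .
\]
Write $G_k=\nabla f(x_k)+\xi(x_k,Z_k)+M_{k+1}$. Then the inner product splits as
\[
    \ip{\nabla f(x_k)}{G_k}
    =\|\nabla f(x_k)\|^2+h(x_k,Z_k)+\ip{\nabla f(x_k)}{M_{k+1}},
\]
using the definition \eqref{eq:xi-h}. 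The quadratic term is bounded by the pathwise envelope \eqref{eq:abc-G}:
\[
    \|G_k\|^2\le A\|\nabla f(x_k)\|^2+B\Del_k+C .
\]

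Next we collect the deterministic pieces. The coefficient of $\|\nabla f(x_k)\|^2$ is $-\alpha_k(1-\tfrac{LA\alpha_k}{2})$. Since $\alpha_k\le a/K_0$ and $K_0\ge aLA$ by \eqref{eq:basic-K}, this coefficient is at most $-\tfrac{\alpha_k}{2}$. We may then apply PL \eqref{eq:pl} to the negative term.

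A safer route, which avoids needing $\mu a\ge$ something sharp, is the following. Keep the full coefficient $-\alpha_k\|\nabla f\|^2(1-\tfrac{LA\alpha_k}{2})$ and use PL to obtain
\[
    -2\mu\alpha_k\Bigl(1-\frac{LA\alpha_k}{2}\Bigr)\Del_k+\frac{LB\alpha_k^2}{2}\Del_k .
\]
We need the combined coefficient of $\Del_k$ to be at most $1-\mu\alpha_k$, that is,
\[
    \mu\alpha_k\ge \mu LA\alpha_k^2+\frac{LB}{2}\alpha_k^2,
    \qquad\text{equivalently}\qquad
    \alpha_k\Bigl(LA+\frac{LB}{2\mu}\Bigr)\le 1 .
\]
This holds because $\alpha_k\le a/K_0$ and $K_0\ge \tfrac{aL}{2}(2A+B/\mu)$. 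Hence
\[
    \Del_{k+1}\le(1-\mu\alpha_k)\Del_k+\frac{LC}{2}\alpha_k^2-\alpha_k\ip{\nabla f(x_k)}{M_{k+1}}-\alpha_k h(x_k,Z_k).
\]
Also $K_0\ge\mu a$ gives $0\le 1-\mu\alpha_k<1$, so all the contraction factors are nonnegative.

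Finally we unroll the recursion by induction on $k$, multiplying each inequality by the nonnegative products $\zeta_{\ell+1,k-1}$. Nonnegativity is the only point needing care, since it preserves the inequality direction. This gives the coefficients $\zeta_{0,k-1}$ on $\Del_0$, $\tfrac{LC}{2}\alpha_\ell^2\zeta_{\ell+1,k-1}$ on the noise floor, and $w_{\ell,k}=\alpha_\ell\zeta_{\ell+1,k-1}$ on the two stochastic terms, which is exactly \eqref{eq:descent}. The main, though mild, obstacle is bookkeeping the constants so that $K_{\mathrm{base}}$ suffices for the absorption step above.
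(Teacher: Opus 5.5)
Your proof is correct and follows the paper's argument essentially line for line. It applies smoothness along the update, splits $\ip{\nabla f(x_k)}{G_k}$ into $\|\nabla f(x_k)\|^2+h(x_k,Z_k)+\ip{\nabla f(x_k)}{M_{k+1}}$, uses the pathwise ABC bound, applies PL to the nonnegative coefficient $\alpha_k(1-LA\alpha_k/2)$ (guaranteed by $K_0\ge aLA$), absorbs $(2\mu A+B)L\alpha_k^2/2$ into $\mu\alpha_k$ via $K_0\ge\frac{aL}{2}(2A+B/\mu)$, and unrolls with the nonnegative factors $1-\mu\alpha_j$ from $K_0\ge\mu a$. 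Your first ``coefficient at most $-\alpha_k/2$'' route would leave an unabsorbed $+LB\alpha_k^2\Del_k/2$ and is unnecessary; the ``safer route'' you settle on is exactly the paper's.
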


\begin{proof}
By $L$-smoothness and the update $x_{n+1}=x_n-\alpha_nG_n$,
\begin{align*}
f(x_{n+1})
&\le f(x_n)-\alpha_n\ip{\nabla f(x_n)}{G_n}+\frac{L\alpha_n^2}{2}\norm{G_n}^2 \\
&= f(x_n)-\alpha_n\norm{\nabla f(x_n)}^2
        -\alpha_n h(x_n,Z_n)
        -\alpha_n\ip{\nabla f(x_n)}{M_{n+1}}
        +\frac{L\alpha_n^2}{2}\norm{G_n}^2.
\end{align*}
Using \eqref{eq:abc-G}, the PL inequality \eqref{eq:pl}, and the gradient upper bound \eqref{eq:grad-upper},
\begin{align*}
\Del_{n+1}
&\le
\Del_n
-\left(\alpha_n-\frac{AL\alpha_n^2}{2}\right)\norm{\nabla f(x_n)}^2
+\frac{BL\alpha_n^2}{2}\Del_n
+\frac{LC\alpha_n^2}{2} \\
&\quad
-\alpha_n h(x_n,Z_n)
-\alpha_n\ip{\nabla f(x_n)}{M_{n+1}} \\
&\le
\left(1-2\mu\alpha_n+\frac{(2\mu A+B)L\alpha_n^2}{2}\right)\Del_n
+\frac{LC\alpha_n^2}{2}
-\alpha_n h(x_n,Z_n)
-\alpha_n\ip{\nabla f(x_n)}{M_{n+1}}.
\end{align*}
The condition \eqref{eq:basic-K} implies
\[
        \frac{(2\mu A+B)L\alpha_n^2}{2}\le \mu\alpha_n,
\]
so
\[
\Del_{n+1}
\le (1-\mu\alpha_n)\Del_n
+\frac{LC\alpha_n^2}{2}
-\alpha_n h(x_n,Z_n)
-\alpha_n\ip{\nabla f(x_n)}{M_{n+1}}.
\]
Iterating this affine recursion from $n=0$ to $n=k-1$ gives \eqref{eq:descent}.
\end{proof}

\begin{lemma}[Weight estimates]\label{lem:weights}
Let $\beta:=\mu a\ge2$ and assume $K_0\ge \beta$.  There is a constant $c_w<\infty$, depending only on $a$ and $\mu$, such that for every $k\ge1$,
\begin{align}
\zeta_{0,k-1}&\le \frac{K_0}{k+K_0},\label{eq:weight-init}\\
\sum_{\ell=0}^{k-1}\alpha_\ell^2\zeta_{\ell+1,k-1}&\le \frac{c_w}{k+K_0},\label{eq:weight-alpha2}\\
w_{\ell,k}&\le c_w\frac{\ell+K_0}{(k+K_0)^2},\qquad 0\le \ell\le k-1,\label{eq:weight-point}\\
\sum_{\ell=0}^{k-1}\frac{w_{\ell,k}^2}{\ell+K_0}&\le \frac{c_w}{(k+K_0)^2},\label{eq:weight-sum1}\\
\sum_{\ell=0}^{k-1}\frac{w_{\ell,k}^2}{(\ell+K_0)^2}&\le \frac{c_w}{(k+K_0)^3}.\label{eq:weight-sum2}
\end{align}
Moreover, for every integer $m\in\{1,\ldots,k\}$,
\begin{align}
\sum_{\ell=m}^{k-1}\frac{w_{\ell,k}^2}{\ell-m+K_0}
&\le \left(1+\frac{m}{K_0}\right)\frac{c_w}{(k+K_0)^2},\label{eq:weight-delay1}\\
\sum_{\ell=m}^{k-1}\frac{w_{\ell,k}^2}{(\ell-m+K_0)^2}
&\le \left(1+\frac{m}{K_0}\right)^2\frac{c_w}{(k+K_0)^3}.\label{eq:weight-delay2}
\end{align}
\end{lemma}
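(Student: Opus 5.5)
The plan is to obtain a two-sided product estimate for \(\zeta_{\ell+1,k-1}\) and then reduce every claimed bound to an elementary power sum. Since \(\alpha_j=a/(j+K_0)\) and \(K_0\ge\beta\), each factor satisfies \(0\le 1-\mu\alpha_j=1-\beta/(j+K_0)<1\). The inequality \(\log(1-t)\le -t\) gives
\[
\zeta_{\ell+1,k-1}\le \exp\Bigl(-\beta\sum_{j=\ell+1}^{k-1}\frac{1}{j+K_0}\Bigr)\le \Bigl(\frac{\ell+1+K_0}{k+K_0}\Bigr)^{\beta}.
\]
Here I use \(\sum_{j=\ell+1}^{k-1}(j+K_0)^{-1}\ge\int_{\ell+1}^{k}(t+K_0)^{-1}dt\), which holds by monotonicity. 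Because \(\ell+1+K_0\le 2(\ell+K_0)\), this becomes \(\zeta_{\ell+1,k-1}\le 2^{\beta}((\ell+K_0)/(k+K_0))^{\beta}\). For \eqref{eq:weight-init}, the same computation starting at \(j=0\) gives \(\zeta_{0,k-1}\le (K_0/(k+K_0))^{\beta}\le K_0/(k+K_0)\), since \(\beta\ge1\).

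Next I would treat the pointwise bound and the plain weighted sums. The pointwise bound \eqref{eq:weight-point} follows directly:
\[
w_{\ell,k}\le \frac{a\,2^\beta(\ell+K_0)^{\beta-1}}{(k+K_0)^\beta}\le a2^\beta\frac{\ell+K_0}{(k+K_0)^2},
\]
using \(\beta\ge2\) and \(\ell+K_0\le k+K_0\). For \eqref{eq:weight-alpha2}, I bound \(\alpha_\ell^2\zeta_{\ell+1,k-1}\le a^22^\beta(\ell+K_0)^{\beta-2}(k+K_0)^{-\beta}\). Summing over \(\ell<k\) gives at most \(\sum_{n=K_0}^{k+K_0-1}n^{\beta-2}\le (k+K_0)^{\beta-1}\) in the numerator, which yields \(c/(k+K_0)\). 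The same substitution handles \eqref{eq:weight-sum1}: the summands are \(\lesssim (\ell+K_0)^{2\beta-3}/(k+K_0)^{2\beta}\), and the power sum is \(\lesssim (k+K_0)^{2\beta-2}\) because \(2\beta-3\ge1\). It also handles \eqref{eq:weight-sum2}: the summands are \(\lesssim(\ell+K_0)^{2\beta-4}/(k+K_0)^{2\beta}\), and the power sum is \(\lesssim (k+K_0)^{2\beta-3}\). When \(\beta=2\) the exponent \(2\beta-4\) is zero, so the sum is simply at most \(k\le k+K_0\). In every case the power sum \(\sum_{n\le N}n^{r}\) with \(r\ge0\) is bounded by \(N^{r+1}\), so the constants depend only on \(a,\mu\); it is convenient to take \(c_w\) as the maximum of all the constants produced.

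For the delayed sums \eqref{eq:weight-delay1}--\eqref{eq:weight-delay2}, the key observation is that for \(\ell\ge m\),
\[
\frac{\ell+K_0}{\ell-m+K_0}=1+\frac{m}{\ell-m+K_0}\le 1+\frac{m}{K_0}.
\]
Hence \(1/(\ell-m+K_0)\le(1+m/K_0)/(\ell+K_0)\), and the square of this inequality holds as well. Substituting these into the delayed sums, extending the range to all \(\ell\) from \(0\) to \(k-1\), and invoking \eqref{eq:weight-sum1} and \eqref{eq:weight-sum2} gives \eqref{eq:weight-delay1} and \eqref{eq:weight-delay2}, respectively.

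None of these steps is a serious obstacle. The only point needing care is the bookkeeping of the product estimate: the lower-limit shift from \(\ell+1\) to \(\ell\) costs the factor \(2^\beta\), and the power sums require nonnegative exponents. Both are guaranteed by \(\beta\ge2\) and \(K_0\ge\max\{1,\beta\}\).
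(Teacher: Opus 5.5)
Your proposal is correct and follows essentially the same route as the paper. It uses the exponential/integral bound $\zeta_{\ell+1,k-1}\le((\ell+1+K_0)/(k+K_0))^\beta$, a shift of the lower limit (you pay $2^\beta$ where the paper uses $(1+1/K_0)^\beta\le e$ from $\beta\le K_0$), and the ratio bound $(\ell+K_0)/(\ell-m+K_0)\le 1+m/K_0$ for the delayed sums. The only cosmetic difference is that the paper obtains \eqref{eq:weight-alpha2} and the square sums by plugging in the pointwise bound \eqref{eq:weight-point}, whereas you sum the powers $(\ell+K_0)^{\beta-2}$, $(\ell+K_0)^{2\beta-3}$, $(\ell+K_0)^{2\beta-4}$ directly; both routes give constants depending only on $a$ and $\mu$.
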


\begin{proof}
Since $K_0\ge\beta$, we have $0\le \mu\alpha_j\le1$.  For $0\le\ell\le k-1$,
\begin{align*}
\zeta_{\ell+1,k-1}
&=\prod_{j=\ell+1}^{k-1}\left(1-\frac{\beta}{j+K_0}\right)
\le \exp\left(-\beta\sum_{j=\ell+1}^{k-1}\frac{1}{j+K_0}\right).
\end{align*}
Using the integral comparison
\[
        \sum_{j=\ell+1}^{k-1}\frac{1}{j+K_0}
        \ge \int_{\ell+1}^{k}\frac{dt}{t+K_0}
        =\log\frac{k+K_0}{\ell+K_0+1},
\]
we get
\[
        \zeta_{\ell+1,k-1}
        \le \left(\frac{\ell+K_0+1}{k+K_0}\right)^\beta.
\]
Because $(\ell+K_0+1)/(\ell+K_0)\le 1+K_0^{-1}$ and $\beta/K_0\le1$,
\[
        \left(\frac{\ell+K_0+1}{k+K_0}\right)^\beta
        \le e\left(\frac{\ell+K_0}{k+K_0}\right)^\beta.
\]
Increasing constants from $e$ to $e^2$ if needed, this bound also covers the empty-product edge cases.  Since $\beta\ge2$,
\[
        w_{\ell,k}
        =\frac{a}{\ell+K_0}\zeta_{\ell+1,k-1}
        \le ae^2\frac{(\ell+K_0)^{\beta-1}}{(k+K_0)^\beta}
        \le ae^2\frac{\ell+K_0}{(k+K_0)^2},
\]
which proves \eqref{eq:weight-point}.  The initial product similarly satisfies
\[
        \zeta_{0,k-1}\le \left(\frac{K_0}{k+K_0}\right)^\beta
        \le \frac{K_0}{k+K_0},
\]
which proves \eqref{eq:weight-init}.

For the first square sum, \eqref{eq:weight-point} yields
\[
\sum_{\ell=0}^{k-1}\frac{w_{\ell,k}^2}{\ell+K_0}
\le \frac{a^2e^4}{(k+K_0)^4}\sum_{\ell=0}^{k-1}(\ell+K_0)
\le \frac{a^2e^4}{(k+K_0)^4}(k+K_0)^2.
\]
This is \eqref{eq:weight-sum1} after enlarging $c_w$.  The second square sum follows from
\[
\sum_{\ell=0}^{k-1}\frac{w_{\ell,k}^2}{(\ell+K_0)^2}
\le \frac{a^2e^4}{(k+K_0)^4}\sum_{\ell=0}^{k-1}1
\le \frac{a^2e^4}{(k+K_0)^3}.
\]
Also,
\[
\sum_{\ell=0}^{k-1}\alpha_\ell^2\zeta_{\ell+1,k-1}
=\sum_{\ell=0}^{k-1}\alpha_\ell w_{\ell,k}
\le \frac{a^2e^2}{(k+K_0)^2}\sum_{\ell=0}^{k-1}1
\le \frac{a^2e^2}{k+K_0},
\]
which proves \eqref{eq:weight-alpha2}.  Finally, for $\ell\ge m$,
\[
        \frac{1}{\ell-m+K_0}
        =\frac{\ell+K_0}{\ell-m+K_0}\frac{1}{\ell+K_0}
        \le\left(1+\frac{m}{K_0}\right)\frac{1}{\ell+K_0}.
\]
Squaring the displayed inequality gives the corresponding squared denominator bound.  Combining these with \eqref{eq:weight-sum1} and \eqref{eq:weight-sum2} proves \eqref{eq:weight-delay1} and \eqref{eq:weight-delay2}.
\end{proof}

\begin{lemma}[Envelope and Lipschitz bounds for the descent observable]\label{lem:h-envelope}
Let
\begin{equation}\label{eq:r-def}
        r:=2AL+B,
\end{equation}
and, for $D\ge0$, define
\begin{equation}\label{eq:H-def}
        H(D):=\sqrt{2LD}\left(\sqrt{rD+C}+\sqrt{2LD}\right).
\end{equation}
There are constants $c_H,c_{\mathrm{lip}}<\infty$, depending only on $L,L_g,A,B,C$, such that:
\begin{enumerate}[label=(\roman*)]
\item if $\Del(x)\le D$, then $|h(x,z)|\le H(D)$ for every $z$;
\item $H(D)^2\le c_H(D+D^2)$ for every $D\ge0$;
\item if $\Del(x)\le D_x$ and $\Del(y)\le D_y$, then
\begin{equation}\label{eq:h-lip-bound}
        |h(x,z)-h(y,z)|
        \le c_{\mathrm{lip}}(1+\sqrt{D_x}+\sqrt{D_y})\norm{x-y}.
\end{equation}
\end{enumerate}
\end{lemma}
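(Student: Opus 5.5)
The three parts will all be derived from the ABC envelope \eqref{eq:abc-g}, the gradient upper bound of \Cref{lem:grad-upper}, and the Lipschitz condition \eqref{eq:g-lip}.

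\emph{Part (i).} Fix $x$ with $\Del(x)\le D$. By \eqref{eq:grad-upper}, $\norm{\nabla f(x)}\le\sqrt{2LD}$. By \eqref{eq:abc-g}, $\norm{g(x,z)}^2\le 2AL\Del(x)+B\Del(x)+C\le rD+C$. Hence $\norm{\xi(x,z)}\le\norm{g(x,z)}+\norm{\nabla f(x)}\le\sqrt{rD+C}+\sqrt{2LD}$. Cauchy--Schwarz then gives $|h(x,z)|\le\norm{\nabla f(x)}\norm{\xi(x,z)}\le H(D)$.

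\emph{Part (ii).} Use $(u+v)^2\le 2u^2+2v^2$ to get
\[
H(D)^2\le 2LD\bigl(2(rD+C)+4LD\bigr)=4LCD+(4Lr+8L^2)D^2.
\]
This is at most $c_H(D+D^2)$ with $c_H:=\max\{4LC,\,4Lr+8L^2\}$.

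\emph{Part (iii).} Write
\[
h(x,z)-h(y,z)=\ip{\nabla f(x)-\nabla f(y)}{\xi(x,z)}+\ip{\nabla f(y)}{\xi(x,z)-\xi(y,z)}.
\]
For the first term, $L$-smoothness gives $\norm{\nabla f(x)-\nabla f(y)}\le L\norm{x-y}$. The bound from part (i) gives $\norm{\xi(x,z)}\le\sqrt{rD_x+C}+\sqrt{2LD_x}\le\sqrt C+(\sqrt r+\sqrt{2L})\sqrt{D_x}$. For the second term, \eqref{eq:g-lip} and smoothness give $\norm{\xi(x,z)-\xi(y,z)}\le(L_g+L)\norm{x-y}$, and $\norm{\nabla f(y)}\le\sqrt{2LD_y}$. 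Adding the two pieces yields \eqref{eq:h-lip-bound} with
\[
c_{\mathrm{lip}}:=\max\{L\sqrt C,\;L(\sqrt r+\sqrt{2L}),\;(L_g+L)\sqrt{2L}\}.
\]
This constant depends only on $L,L_g,A,B,C$, as claimed.

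There is no real obstacle here. The only care needed is in part (iii): the splitting must be arranged so that each factor that is not Lipschitz is evaluated at a point whose suboptimality is controlled. That is why $\xi$ is paired with $x$ and $\nabla f$ with $y$.
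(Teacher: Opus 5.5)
Your proposal is correct and follows the paper's proof essentially line by line: the same combination of \eqref{eq:grad-upper} and \eqref{eq:abc-g} with Cauchy--Schwarz for (i), the same $(u+v)^2\le 2u^2+2v^2$ expansion for (ii), and the same splitting of $h(x,z)-h(y,z)$ with $\xi$ evaluated at $x$ and $\nabla f$ at $y$ for (iii). The only difference is that you write out $c_H$ and $c_{\mathrm{lip}}$ explicitly, which the paper leaves implicit.
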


\begin{proof}
If $\Del(x)\le D$, then \eqref{eq:grad-upper} gives $\norm{\nabla f(x)}\le\sqrt{2LD}$.  Moreover, \eqref{eq:abc-g} and \eqref{eq:grad-upper} give
\[
        \norm{g(x,z)}\le \sqrt{A\norm{\nabla f(x)}^2+B\Del(x)+C}
        \le \sqrt{rD+C}.
\]
Thus
\[
        \norm{\xi(x,z)}\le \norm{g(x,z)}+\norm{\nabla f(x)}
        \le \sqrt{rD+C}+\sqrt{2LD}.
\]
The Cauchy-Schwarz inequality proves $|h(x,z)|\le H(D)$.

Next,
\begin{align*}
H(D)^2
&=2LD\left(\sqrt{rD+C}+\sqrt{2LD}\right)^2 \\
&\le 4LD(rD+C+2LD)
\le c_H(D+D^2)
\end{align*}
for a finite $c_H$ depending only on $L,A,B,C$.

For the Lipschitz bound, write
\begin{align*}
|h(x,z)-h(y,z)|
&\le \norm{\nabla f(x)-\nabla f(y)}\norm{\xi(x,z)}
    +\norm{\nabla f(y)}\norm{\xi(x,z)-\xi(y,z)}.
\end{align*}
By $L$-smoothness, $\norm{\nabla f(x)-\nabla f(y)}\le L\norm{x-y}$.  The previous envelope gives $\norm{\xi(x,z)}\le c(1+\sqrt{D_x})$.  Also,
\[
        \norm{\xi(x,z)-\xi(y,z)}
        \le \norm{g(x,z)-g(y,z)}+\norm{\nabla f(x)-\nabla f(y)}
        \le (L_g+L)\norm{x-y},
\]
and $\norm{\nabla f(y)}\le\sqrt{2LD_y}$.  Combining these estimates proves \eqref{eq:h-lip-bound}.
\end{proof}

\section{Lag-blocking concentration estimates}
\label{app:blocking}

This appendix proves the probabilistic estimates that control the stochastic 
terms in the weighted descent recursion of Lemma~\ref{lem:descent}. 
Lemma~\ref{lem:M-noise} handles the martingale-difference perturbation. 
Lemmas~\ref{lem:delayed-mart}--\ref{lem:initial} implement the lag-blocking 
decomposition of the adaptive Markovian term into a delayed martingale part, 
a mixing-bias part, a replacement error, and an initial-window term. These 
bounds are combined in \appref{app:main-proof}.

The lemmas in this section are stated for a fixed target time $k$.  Let $N:=k+K_0$ and let $m\in\{1,\ldots,k\}$.  Let $(\mathcal K_j)_{j=0}^{k-1}$ be a nonincreasing family of events, meaning $\mathcal K_{j+1}\subseteq\mathcal K_j$, with $\mathcal K_j\in\calF_j$.  Assume that on $\mathcal K_j$,
\begin{equation}\label{eq:local-envelope}
        \Del_j\le \frac{\Lambda}{j+K_0},
        \qquad 0\le j\le k-1,
\end{equation}
for some deterministic $\Lambda\ge1$.

\begin{lemma}[Martingale-difference noise]\label{lem:M-noise}
For every $\eta\in(0,1)$, with probability at least $1-\eta$,
\begin{equation}\label{eq:M-noise-final}
\1_{\mathcal K_{k-1}}
\left|\sum_{\ell=0}^{k-1}w_{\ell,k}\ip{\nabla f(x_\ell)}{M_{\ell+1}}\right|
\le
\frac{c_M}{N}
\sqrt{\log\frac{2}{\eta}\left(\Lambda+\frac{\Lambda^2}{N}\right)},
\end{equation}
where $c_M<\infty$ depends only on $a,\mu,L,A,B,C$.
\end{lemma}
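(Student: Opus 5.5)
We would prove Lemma~\ref{lem:M-noise} by applying Azuma--Hoeffding to a stopped martingale. Since $\mathcal K_{k-1}\subseteq\mathcal K_\ell$ for every $\ell\le k-1$, we have $\1_{\mathcal K_{k-1}}=\1_{\mathcal K_{k-1}}\1_{\mathcal K_\ell}$. Hence
\[
\1_{\mathcal K_{k-1}}\Bigl|\sum_{\ell=0}^{k-1}w_{\ell,k}\ip{\nabla f(x_\ell)}{M_{\ell+1}}\Bigr|
\le \Bigl|\sum_{\ell=0}^{k-1}D_\ell\Bigr|,
\qquad
D_\ell:=\1_{\mathcal K_\ell}\,w_{\ell,k}\ip{\nabla f(x_\ell)}{M_{\ell+1}}.
\]
The inequality holds because on $\mathcal K_{k-1}$ every indicator equals one. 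Since $\1_{\mathcal K_\ell}$, $w_{\ell,k}$ (deterministic) and $\nabla f(x_\ell)$ are $\calF_\ell$-measurable, and $\E[M_{\ell+1}\mid\calF_\ell]=0$, each $D_\ell$ is a martingale difference with respect to $(\calF_{\ell+1})$. Integrability is automatic once we show the increments are bounded.

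Next we derive a deterministic increment bound. By \eqref{eq:abc-G}, $\|M_{\ell+1}\|\le\|G_\ell\|+\|g(x_\ell,Z_\ell)\|$, and \eqref{eq:abc-G}, \eqref{eq:abc-g} together with \eqref{eq:grad-upper} give $\|M_{\ell+1}\|\le 2\sqrt{r\Del_\ell+C}$, with $r$ as in \eqref{eq:r-def}. Also $\|\nabla f(x_\ell)\|\le\sqrt{2L\Del_\ell}$. On $\mathcal K_\ell$ we have $\Del_\ell\le D_\ell^\ast:=\Lambda/(\ell+K_0)$. Therefore
\[
|D_\ell|\le c\,w_{\ell,k}\sqrt{D_\ell^\ast(D_\ell^\ast+1)}
=: c_\ell
\quad\text{a.s.},
\qquad
c_\ell^2\lesssim w_{\ell,k}^2\Bigl(\frac{\Lambda}{\ell+K_0}+\frac{\Lambda^2}{(\ell+K_0)^2}\Bigr),
\]
where the constant in $\lesssim$ depends only on $L,A,B,C$.

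Azuma--Hoeffding then gives, with probability at least $1-\eta$, $|\sum_\ell D_\ell|\le\sqrt{2\log(2/\eta)\sum_\ell c_\ell^2}$. It remains to bound $\sum_\ell c_\ell^2$. The first part is handled by \eqref{eq:weight-sum1}, giving $\sum_\ell w_{\ell,k}^2/(\ell+K_0)\le c_w/N^2$. The second is handled by \eqref{eq:weight-sum2}, giving $\sum_\ell w_{\ell,k}^2/(\ell+K_0)^2\le c_w/N^3$. Together,
\[
\sum_\ell c_\ell^2\lesssim \frac{1}{N^2}\Bigl(\Lambda+\frac{\Lambda^2}{N}\Bigr).
\]
Taking the square root yields \eqref{eq:M-noise-final} with $c_M$ depending on $a,\mu$ (through $c_w$) and on $L,A,B,C$.

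The main point needing care is the stopping step. The bound on $\Del_\ell$ is only available on $\mathcal K_\ell$, so the indicator must be placed inside each increment, which uses $\mathcal K_\ell\in\calF_\ell$, rather than outside the sum. Because the events are nonincreasing, the indicators can then be removed on $\mathcal K_{k-1}$. The remaining steps are routine.
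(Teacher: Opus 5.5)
Your proposal is correct and follows essentially the same route as the paper: you place $\1_{\mathcal K_\ell}$ inside each increment to get a martingale difference, bound the increments deterministically via the ABC envelope and \eqref{eq:grad-upper} on $\mathcal K_\ell$, and apply Azuma--Hoeffding. You then bound the square sum with \eqref{eq:weight-sum1}--\eqref{eq:weight-sum2} and remove the indicators on $\mathcal K_{k-1}$ by monotonicity.
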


\begin{proof}
Define
\[
        Q_{\ell+1}:=w_{\ell,k}\ip{\nabla f(x_\ell)}{M_{\ell+1}}\1_{\mathcal K_\ell},
        \qquad 0\le \ell\le k-1.
\]
Since $\mathcal K_\ell\in\calF_\ell$, $w_{\ell,k}$ is deterministic, and $\E[M_{\ell+1}\mid\calF_\ell]=0$, the sequence $(Q_{\ell+1})$ is a martingale difference sequence.  On $\mathcal K_\ell$, \eqref{eq:abc-G}, \eqref{eq:abc-g}, and \eqref{eq:grad-upper} imply
\[
        \norm{M_{\ell+1}}
        \le \norm{G_\ell}+\norm{g(x_\ell,Z_\ell)}
        \le 2\sqrt{r\frac{\Lambda}{\ell+K_0}+C},
\]
where $r=2AL+B$.  Also $\norm{\nabla f(x_\ell)}\le\sqrt{2L\Lambda/(\ell+K_0)}$.  Hence
\[
        |Q_{\ell+1}|
        \le c w_{\ell,k}
        \sqrt{\frac{\Lambda}{\ell+K_0}}
        \sqrt{1+\frac{\Lambda}{\ell+K_0}}
\]
for a constant $c$ depending only on $L,A,B,C$.  Therefore
\[
        |Q_{\ell+1}|^2
        \le c w_{\ell,k}^2
        \left(\frac{\Lambda}{\ell+K_0}+\frac{\Lambda^2}{(\ell+K_0)^2}\right).
\]
Define the deterministic bounds
\[
        b_{\ell,k}^2
        :=c w_{\ell,k}^2
        \left(\frac{\Lambda}{\ell+K_0}+\frac{\Lambda^2}{(\ell+K_0)^2}\right),
        \qquad 0\le \ell\le k-1,
\]
so that $|Q_{\ell+1}|\le b_{\ell,k}$ almost surely.  Using \eqref{eq:weight-sum1}--\eqref{eq:weight-sum2},
\[
        \sum_{\ell=0}^{k-1}b_{\ell,k}^2
        \le \frac{c}{N^2}\left(\Lambda+\frac{\Lambda^2}{N}\right).
\]
The Azuma-Hoeffding inequality applied with the deterministic increment bounds $b_{\ell,k}$ gives
\[
        \Pp\left(\left|\sum_{\ell=0}^{k-1}Q_{\ell+1}\right|
        >\sqrt{2\log(2/\eta)\sum_{\ell=0}^{k-1}b_{\ell,k}^2}\right)
        \le \eta.
\]
Combining the last two displays yields the bound in \eqref{eq:M-noise-final}.  On $\mathcal K_{k-1}$, monotonicity gives $\1_{\mathcal K_\ell}=1$ for every $\ell\le k-1$, so $\sum Q_{\ell+1}$ equals the martingale term in \eqref{eq:M-noise-final}.  This proves the result after absorbing numerical constants into $c_M$.
\end{proof}

\begin{lemma}[Delayed residue-class martingale]\label{lem:delayed-mart}
For $\ell\ge m$, define
\begin{equation}\label{eq:Y-def}
        Y_\ell:=w_{\ell,k}h(x_{\ell-m},Z_\ell),
        \qquad
        \overline Y_\ell:=\E[Y_\ell\mid\calF_{\ell-m}].
\end{equation}
For every $\eta\in(0,1)$, with probability at least $1-\eta$,
\begin{align}\label{eq:delayed-mart-final}
\1_{\mathcal K_{k-1}}
\left|\sum_{\ell=m}^{k-1}(Y_\ell-\overline Y_\ell)\right|
&\le
\frac{c_B}{N}
\sqrt{m\log\left(\frac{2m}{\eta}\right)
\left(1+\frac{m}{K_0}\right)
\left(\Lambda+\frac{\Lambda^2}{N}\left(1+\frac{m}{K_0}\right)\right)},
\end{align}
where $c_B<\infty$ depends only on $a,\mu,L,A,B,C$.
\end{lemma}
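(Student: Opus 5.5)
We split the sum over residue classes modulo $m$ and apply Azuma--Hoeffding within each class. The key is to insert stopping indicators at the delayed time, so that increments are deterministically bounded.

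\textbf{Step 1: stopped increments.} For $\ell\ge m$ set
\[
D_\ell:=\1_{\mathcal K_{\ell-m}}\bigl(Y_\ell-\overline Y_\ell\bigr).
\]
Since $\mathcal K_{\ell-m}\in\calF_{\ell-m}$, this equals $Y_\ell\1_{\mathcal K_{\ell-m}}-\E[Y_\ell\1_{\mathcal K_{\ell-m}}\mid\calF_{\ell-m}]$. On $\mathcal K_{k-1}$, monotonicity of the events gives $\1_{\mathcal K_{\ell-m}}=1$ for all $\ell\le k-1$. Hence on $\mathcal K_{k-1}$ the target sum equals $\sum_{\ell=m}^{k-1}D_\ell$.

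On $\mathcal K_{\ell-m}$ the envelope \eqref{eq:local-envelope} gives $\Del(x_{\ell-m})\le \Lambda/(\ell-m+K_0)$. \Cref{lem:h-envelope}(i)--(ii) then yields
\[
|Y_\ell|\le w_{\ell,k}H\!\left(\tfrac{\Lambda}{\ell-m+K_0}\right),
\]
so that
\[
|D_\ell|^2\le b_\ell^2:=4c_Hw_{\ell,k}^2\left(\frac{\Lambda}{\ell-m+K_0}+\frac{\Lambda^2}{(\ell-m+K_0)^2}\right).
\]
This bound is deterministic.

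\textbf{Step 2: residue-class martingales.} For each $r\in\{0,\dots,m-1\}$, let $\ell_j:=m+r+jm$ for $j\ge0$ with $\ell_j\le k-1$. Define the down-sampled filtration $\mathcal G^{(r)}_j:=\calF_{\ell_j}$.

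The term $D_{\ell_j}$ is $\calF_{\ell_j}$-measurable. Moreover, $\ell_{j-1}=\ell_j-m$, so
\[
\E[D_{\ell_j}\mid\mathcal G^{(r)}_{j-1}]=\E[D_{\ell_j}\mid\calF_{\ell_j-m}]=0.
\]
Thus $(D_{\ell_j})_j$ is a martingale difference sequence with respect to $(\mathcal G^{(r)}_j)$, with deterministic bounds $b_{\ell_j}$. Note that only the tower property and the definition of $\overline Y_\ell$ are used here; no mixing is needed. Azuma--Hoeffding with failure probability $\eta/m$ gives, for each class,
\[
\Bigl|\sum_j D_{\ell_j}\Bigr|\le\sqrt{2\log(2m/\eta)\,S_r},\qquad S_r:=\sum_j b_{\ell_j}^2.
\]
A union bound over the $m$ classes gives total failure probability at most $\eta$.

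\textbf{Step 3: summing the classes.} This is the step that produces the single power of $m$, and it is where care is needed. The triangle inequality and Cauchy--Schwarz give
\[
\sum_r\sqrt{S_r}\le\sqrt{m\sum_r S_r}=\sqrt{m\sum_{\ell=m}^{k-1}b_\ell^2}.
\]
Now apply \eqref{eq:weight-delay1}--\eqref{eq:weight-delay2} from \Cref{lem:weights}:
\[
\sum_{\ell=m}^{k-1}b_\ell^2\le \frac{c}{N^2}\left(1+\frac{m}{K_0}\right)\left(\Lambda+\frac{\Lambda^2}{N}\left(1+\frac{m}{K_0}\right)\right).
\]
Combining these bounds yields \eqref{eq:delayed-mart-final}, with $c_B$ absorbing $c_H$, $c_w$ and numerical factors. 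These constants depend only on $a,\mu,L,A,B,C$.

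The main obstacle is the correct placement of the stopping indicator. It must be at the delayed index $\ell-m$, not at $\ell$, so that it is measurable with respect to the conditioning $\sigma$-field. This is what keeps the increments centered while remaining deterministically bounded in terms of $\Del(x_{\ell-m})$. That bound in turn forces the shifted denominators $\ell-m+K_0$, which the delayed weight estimates handle.
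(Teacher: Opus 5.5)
Your proposal is correct and follows the paper's proof essentially step for step: the stopped increments $D_\ell=\1_{\mathcal K_{\ell-m}}(Y_\ell-\overline Y_\ell)$ with the indicator at the delayed time, the down-sampled residue-class filtration, Azuma--Hoeffding within each class with a union bound over the $m$ classes, Cauchy--Schwarz across classes, and the delayed weight estimates \eqref{eq:weight-delay1}--\eqref{eq:weight-delay2}. The only cosmetic difference is that you fold into $b_\ell$ the factor $2$ that comes from also bounding $\overline Y_\ell$, via your rewriting of $D_\ell$ as a centered bounded variable. This is equivalent to the paper's $\sqrt{8u\sum b_\ell^2}$ form.
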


\begin{proof}
For $\ell\ge m$, set
\[
        D_\ell:=(Y_\ell-\overline Y_\ell)\1_{\mathcal K_{\ell-m}}.
\]
Since $x_{\ell-m}$ and $\1_{\mathcal K_{\ell-m}}$ are $\calF_{\ell-m}$-measurable,
\[
        \E[D_\ell\mid\calF_{\ell-m}]=0.
\]
On $\mathcal K_{\ell-m}$, \eqref{eq:local-envelope} and Lemma \ref{lem:h-envelope} give
\[
        |Y_\ell|
        \le w_{\ell,k}H\left(\frac{\Lambda}{\ell-m+K_0}\right)
        =:b_\ell.
\]
The same bound holds for $|\overline Y_\ell|$ on $\mathcal K_{\ell-m}$, because conditional expectation is dominated by the conditional expectation of the absolute value.  Hence $|D_\ell|\le2b_\ell$.

We now spell out the filtration structure, since this is the point at which the lag creates martingale differences.  Fix a residue class $r\in\{0,1,\ldots,m-1\}$.  List the indices $\ell=r+jm$ that lie in $\{m,m+1,\ldots,k-1\}$ in increasing order, say $\ell_1<\cdots<\ell_s$.  Define $\mathcal H_0:=\calF_{\ell_1-m}$ and $\mathcal H_i:=\calF_{\ell_i}$ for $1\le i\le s$.  For the first index, $D_{\ell_1}$ has conditional mean zero given $\mathcal H_0=\calF_{\ell_1-m}$.  For $i\ge2$, the residue-class spacing gives $\ell_i-m=\ell_{i-1}$, and hence
\[
        \E[D_{\ell_i}\mid\mathcal H_{i-1}]
        =\E[D_{\ell_i}\mid\calF_{\ell_i-m}]=0.
\]
Thus $(D_{\ell_i})_{i=1}^s$ is a martingale-difference sequence with respect to the down-sampled filtration $(\mathcal H_i)_{i=0}^s$.  No independence across different residue classes is used; each class is controlled separately, and the resulting high-probability bounds are combined by a union bound.  Azuma-Hoeffding gives
\[
        \Pp\left(\left|\sum_{i=1}^sD_{\ell_i}\right|
        >\sqrt{8u\sum_{i=1}^sb_{\ell_i}^2}\right)
        \le 2e^{-u}.
\]
Take $u=\log(2m/\eta)$ and union bound over the $m$ residue classes.  With probability at least $1-\eta$, all residue-class inequalities hold.  On that event,
\begin{align*}
\left|\sum_{\ell=m}^{k-1}D_\ell\right|
&\le \sum_{r=0}^{m-1}\left|\sum_{\ell\equiv r\, (\mathrm{mod}\,m)}D_\ell\right| \\
&\le \sum_{r=0}^{m-1}\sqrt{8u\sum_{\ell\equiv r\, (\mathrm{mod}\,m)}b_\ell^2} \\
&\le \sqrt{8um\sum_{\ell=m}^{k-1}b_\ell^2},
\end{align*}
where the last step is Cauchy-Schwarz.

It remains to bound the square sum.  Lemma \ref{lem:h-envelope} gives
\[
        b_\ell^2
        \le c_H w_{\ell,k}^2
        \left(\frac{\Lambda}{\ell-m+K_0}+\frac{\Lambda^2}{(\ell-m+K_0)^2}\right).
\]
Using \eqref{eq:weight-delay1}--\eqref{eq:weight-delay2},
\[
        \sum_{\ell=m}^{k-1}b_\ell^2
        \le \frac{c}{N^2}\left(1+\frac{m}{K_0}\right)\Lambda
        +\frac{c}{N^3}\left(1+\frac{m}{K_0}\right)^2\Lambda^2.
\]
Substituting this estimate into the previous display proves \eqref{eq:delayed-mart-final}.  On $\mathcal K_{k-1}$, all indicators $\1_{\mathcal K_{\ell-m}}$ are one, so the sum of $D_\ell$ is the displayed delayed centered sum.
\end{proof}

\begin{lemma}[Mixing bias]\label{lem:mixing-bias}
Assume $m<k$ and set $\varepsilon_m:=\varepsilon(m)$, where $\varepsilon(\cdot)$ is the uniform total-variation mixing profile in Assumption~\ref{ass:mixing}.  Then, on $\mathcal K_{k-1}$,
\begin{equation}\label{eq:mixing-bias-final}
\left|\sum_{\ell=m}^{k-1}\overline Y_\ell\right|
\le
\frac{c_{\mathrm{mix}}\varepsilon_m}{N}
\sqrt{N\left(1+\frac{m}{K_0}\right)
\left(\Lambda+\frac{\Lambda^2}{N}\left(1+\frac{m}{K_0}\right)\right)},
\end{equation}
where $c_{\mathrm{mix}}<\infty$ depends only on $a,\mu,L,A,B,C$.
\end{lemma}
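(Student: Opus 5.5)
We bound each conditional mean $\overline Y_\ell$ pointwise, using the stationary centering of $h$ together with the total-variation profile, and then sum with Cauchy--Schwarz and the delayed weight estimates of \Cref{lem:weights}.

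\emph{Step 1: pointwise bound on $\overline Y_\ell$.} Fix $\ell\in\{m,\ldots,k-1\}$. Since $x_{\ell-m}$ is $\calF_{\ell-m}$-measurable, the full-filtration Markov property \eqref{eq:full-markov} gives
\[
\overline Y_\ell=w_{\ell,k}\int h(x_{\ell-m},z)\,P^m(Z_{\ell-m},dz).
\]
By the stationary-gradient identity, $\int h(x,z)\pi(dz)=0$ for every fixed $x$. Hence
\[
\overline Y_\ell=w_{\ell,k}\int h(x_{\ell-m},z)\,\bigl(P^m(Z_{\ell-m},\cdot)-\pi\bigr)(dz).
\]

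\emph{Step 2: applying the envelope.} On $\mathcal K_{k-1}\subseteq\mathcal K_{\ell-m}$ we have $\Del(x_{\ell-m})\le\Lambda/(\ell-m+K_0)$. \Cref{lem:h-envelope}(i) then shows that $z\mapsto h(x_{\ell-m},z)$ is bounded by $H(\Lambda/(\ell-m+K_0))$ uniformly in $z$. There is a small measurability point here: the integrand depends on the random point $x_{\ell-m}$. This is harmless, because we may freeze $x=x_{\ell-m}$ and apply \eqref{eq:tv-dual} for each fixed $x$. This gives
\[
|\overline Y_\ell|\le 2\varepsilon_m\, w_{\ell,k}\,H\!\left(\frac{\Lambda}{\ell-m+K_0}\right)=2\varepsilon_m b_\ell,
\]
with $b_\ell$ as in the proof of \Cref{lem:delayed-mart}.

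\emph{Step 3: summation.} We sum over $\ell$ and apply Cauchy--Schwarz over at most $k\le N$ terms:
\[
\Bigl|\sum_{\ell=m}^{k-1}\overline Y_\ell\Bigr|\le 2\varepsilon_m\sqrt{N\sum_{\ell=m}^{k-1}b_\ell^2}.
\]
By \Cref{lem:h-envelope}(ii) and \eqref{eq:weight-delay1}--\eqref{eq:weight-delay2}, exactly as in \Cref{lem:delayed-mart},
\[
\sum_{\ell=m}^{k-1} b_\ell^2\le \frac{c}{N^2}\Bigl(1+\frac{m}{K_0}\Bigr)\Bigl(\Lambda+\frac{\Lambda^2}{N}\Bigl(1+\frac{m}{K_0}\Bigr)\Bigr).
\]
Substituting this bound yields \eqref{eq:mixing-bias-final}, with $c_{\mathrm{mix}}=2\sqrt c$.

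The only delicate step is Step 1, namely justifying the conditional-law representation given $\calF_{\ell-m}$. This is precisely where exogeneity, in the form of \eqref{eq:full-markov} iterated $m$ times, is needed. The rest of the argument is the same bookkeeping as in \Cref{lem:delayed-mart}.
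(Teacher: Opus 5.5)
Your proposal is correct and follows the paper's proof essentially step for step. Both arguments use the centered conditional-law representation $\overline Y_\ell=w_{\ell,k}\int h(x_{\ell-m},z)\bigl(P^m(Z_{\ell-m},dz)-\pi(dz)\bigr)$, then the pointwise bound $|\overline Y_\ell|\le 2\varepsilon_m b_\ell$ from \Cref{lem:h-envelope} and \eqref{eq:tv-dual}, and finally Cauchy--Schwarz over at most $k\le N$ terms combined with the square-sum estimate from \Cref{lem:delayed-mart}.
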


\begin{proof}
Because $x_{\ell-m}$ is $\calF_{\ell-m}$-measurable and $\int h(x_{\ell-m},z)\pi(dz)=0$, the full-filtration Markov property gives
\begin{align*}
\overline Y_\ell
&=w_{\ell,k}\int h(x_{\ell-m},z)\bigl(P^m(Z_{\ell-m},dz)-\pi(dz)\bigr).
\end{align*}
On $\mathcal K_{\ell-m}$, Lemma \ref{lem:h-envelope} and the dual total-variation bound \eqref{eq:tv-dual} imply
\[
        |\overline Y_\ell|
        \le 2\varepsilon_m w_{\ell,k}H\left(\frac{\Lambda}{\ell-m+K_0}\right)
        =2\varepsilon_m b_\ell.
\]
Therefore, on $\mathcal K_{k-1}$,
\[
        \left|\sum_{\ell=m}^{k-1}\overline Y_\ell\right|
        \le 2\varepsilon_m\sum_{\ell=m}^{k-1}b_\ell
        \le 2\varepsilon_m\sqrt{k\sum_{\ell=m}^{k-1}b_\ell^2}.
\]
The square-sum estimate from the proof of Lemma \ref{lem:delayed-mart} and $k\le N$ give \eqref{eq:mixing-bias-final}.
\end{proof}

\begin{lemma}[Replacement error]\label{lem:replacement}
On $\mathcal K_{k-1}$,
\begin{align}\label{eq:replacement-final}
&\left|\sum_{\ell=m}^{k-1}w_{\ell,k}\bigl(h(x_\ell,Z_\ell)-h(x_{\ell-m},Z_\ell)\bigr)\right| \\
&\quad\le c_R\Bigg[
\frac{m}{N}
+\frac{m^2}{N^2}\left(1+\log\frac{N}{K_0}\right)
+\frac{m\Lambda}{N^2}\left(1+\log\frac{N}{K_0}+\frac{m}{K_0}\right) \notag\\
&\hspace{4.8cm}
+\frac{m\sqrt\Lambda}{N^{3/2}}
+\frac{m^2\sqrt\Lambda}{\sqrt{K_0}N^2}
\Bigg],
\end{align}
where $c_R<\infty$ depends only on $a,\mu,L,L_g,A,B,C$.
\end{lemma}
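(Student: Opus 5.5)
Fix an index $\ell$ with $m\le\ell\le k-1$ and work on $\mathcal K_{k-1}$. By monotonicity of the events, the local envelope \eqref{eq:local-envelope} then holds at every $j\le k-1$. The plan has three parts: a per-term Lipschitz bound from \Cref{lem:h-envelope}(iii), a bound on the iterate movement over the window, and a weighted sum over $\ell$ using \Cref{lem:weights}.

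\emph{Per-term bound.} Apply \Cref{lem:h-envelope}(iii) with $D_x=\Lambda/(\ell+K_0)$ and $D_y=\Lambda/(\ell-m+K_0)$. Since $1/(\ell-m+K_0)\le q/(\ell+K_0)$ with $q=1+m/K_0$, this gives
\[
|h(x_\ell,Z_\ell)-h(x_{\ell-m},Z_\ell)|\le c\Bigl(1+\sqrt{\Lambda q/(\ell+K_0)}\Bigr)\|x_\ell-x_{\ell-m}\|.
\]

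\emph{Movement over the window.} Bound the displacement by $\|x_\ell-x_{\ell-m}\|\le\sum_{i=\ell-m}^{\ell-1}\alpha_i\|G_i\|$. By \eqref{eq:abc-G} and \eqref{eq:grad-upper}, on the good event
\[
\|G_i\|\le\sqrt{r\Lambda/(i+K_0)+C}\le\sqrt C+\sqrt{r\Lambda/(i+K_0)}.
\]
Using $\alpha_i=a/(i+K_0)$, the displacement is at most
\[
c\sum_{i=\ell-m}^{\ell-1}\Bigl(\frac{1}{i+K_0}+\frac{\sqrt\Lambda}{(i+K_0)^{3/2}}\Bigr)\le c\,m\Bigl(\frac{1}{\ell-m+K_0}+\frac{\sqrt\Lambda}{(\ell-m+K_0)^{3/2}}\Bigr).
\]

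\emph{Weighted sum.} Multiply the two factors together with $w_{\ell,k}\le c_w(\ell+K_0)/N^2$ from \eqref{eq:weight-point}, and sum over $\ell$. This produces four products.
\begin{itemize}
\item[(a)] Constant $\times$ $1/(\ell-m+K_0)$: the sum is $\frac{m}{N^2}\sum_\ell\frac{\ell+K_0}{\ell-m+K_0}$. Split $\frac{\ell+K_0}{\ell-m+K_0}=1+\frac{m}{\ell-m+K_0}$; the harmonic part gives $\sum_\ell\frac{1}{\ell-m+K_0}\le 1+\log(N/K_0)$. The result is $\frac{m}{N}+\frac{m^2}{N^2}(1+\log\frac{N}{K_0})$.
\item[(b)] Constant $\times$ $\sqrt\Lambda/(\ell-m+K_0)^{3/2}$: with the same split, the first part gives $\frac{m\sqrt\Lambda}{N^2}\sum(\ell-m+K_0)^{-1/2}\lesssim\frac{m\sqrt\Lambda}{N^{3/2}}$. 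The second part gives $\frac{m^2\sqrt\Lambda}{N^2}\sum(\ell-m+K_0)^{-3/2}\lesssim\frac{m^2\sqrt\Lambda}{\sqrt{K_0}N^2}$.
\item[(c)] $\sqrt{\Lambda q/(\ell+K_0)}$ $\times$ $1/(\ell-m+K_0)$: bound $\sqrt{\Lambda q/(\ell+K_0)}\cdot\frac{\ell+K_0}{\ell-m+K_0}\le\sqrt{\Lambda}\,q^{3/2}(\ell+K_0)^{-1/2}$. This yields terms of order $\frac{m\sqrt\Lambda}{N^{3/2}}$ times powers of $q$. Those $q$-powers are absorbed either via $\sqrt\Lambda\le\Lambda$ (using $\Lambda\ge1$) into the $\frac{m\Lambda}{N^2}\frac{m}{K_0}$-type term, or via the harmonic split as in (a).
\item[(d)] $\sqrt{\Lambda q/(\ell+K_0)}$ $\times$ $\sqrt\Lambda/(\ell-m+K_0)^{3/2}$: this gives $\frac{m\Lambda}{N^2}\sum_\ell\frac{\sqrt{q}\,(\ell+K_0)^{1/2}}{(\ell-m+K_0)^{3/2}}$. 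Using $(\ell+K_0)\le(\ell-m+K_0)+m$, this is at most $\frac{m\Lambda}{N^2}\bigl(1+\log\frac{N}{K_0}+\frac{m}{K_0}\bigr)$ up to constants.
\end{itemize}

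\emph{Main obstacle.} The hard part is the bookkeeping in (c) and (d): placing every factor $1+m/K_0$ either into a logarithmic harmonic sum or into the explicit $m/K_0$ and $m^2/\sqrt{K_0}$ terms, without leaving a stray $m^2\sqrt\Lambda/N^{3/2}$. My first attempt is to avoid bounding the ratio $(\ell+K_0)/(\ell-m+K_0)$ crudely by $q$. Instead I would always decompose it as $1+m/(\ell-m+K_0)$, then sum the resulting powers $(\ell-m+K_0)^{-s}$ exactly: the case $s=1$ gives the log, $s=1/2$ gives $\sqrt N$, and $s=3/2$ gives $K_0^{-1/2}$. If a leftover cross term such as $m^{2}\sqrt\Lambda/(N^{2}\sqrt{K_0})\cdot\sqrt{q}$ appears, absorb it using $\sqrt{\Lambda}\le\Lambda$ and $\sqrt q\le q$ into the listed $m\Lambda/N^2\cdot m/K_0$ term, enlarging $c_R$.
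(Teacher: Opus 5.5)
\textbf{There is a genuine gap, though it is easy to repair.} It comes from your per-term step, not from the later bookkeeping.

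You replace $\Lambda/(\ell-m+K_0)$ by $\Lambda q/(\ell+K_0)$. This is valid, but it loses a factor $\sqrt q$ on the bulk of the range, where $\ell-m+K_0$ is comparable to $\ell+K_0$. Once $\sqrt q$ has been pulled out as a global factor, it cannot be recovered, and products (c) and (d) then really do exceed the right side of \eqref{eq:replacement-final}.

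In (d), the indices with $\ell-m+K_0\ge m$ alone contribute about $\log(N/m)$ to $\sum_\ell(\ell+K_0)^{1/2}(\ell-m+K_0)^{-3/2}$. Take $m=K_0\log(N/K_0)$ with $N\gg m$. The coefficient of $\Lambda$ in your bound is then of order $\frac{m}{N^2}\log^{3/2}(N/K_0)$, whereas in \eqref{eq:replacement-final} it is $c_R\frac{m}{N^2}(1+2\log\frac{N}{K_0})$. Letting $\Lambda\to\infty$ shows that the final claim of (d) fails for every fixed $c_R$.

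In (c), the crude bound $\frac{m\sqrt\Lambda q^{3/2}}{N^{3/2}}$ is of order $N/K_0^{3/2}$ for $\Lambda=1$ and $m\asymp N\gg K_0$, while the right side is then only $O(1+\log(N/K_0))$. Even with the harmonic split, (c) is at least of order $\frac{m^2\sqrt\Lambda}{\sqrt{K_0}N^2}\log\frac{N}{K_0}$ in that regime. At $\Lambda\asymp K_0\log(N/K_0)$ and $N/K_0$ large, this exceeds the whole right side by a factor of order $\sqrt{\log(N/K_0)}$.

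The absorptions you propose via $\sqrt\Lambda\le\Lambda$ and $\sqrt q\le q$ cannot fix this, because they do not respect the scaling in $K_0$ and $m$. For example, $\frac{m^2\sqrt{\Lambda q}}{\sqrt{K_0}N^2}\le\frac{m^2\Lambda q}{\sqrt{K_0}N^2}$, and the latter exceeds the target term $\frac{m\Lambda}{N^2}\cdot\frac{m}{K_0}$ by the factor $\sqrt{K_0}\,q$. The constant $c_R$ is not allowed to depend on $K_0$ or $m$.

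\textbf{The repair is to compare in the opposite direction, which is exactly the paper's proof.} In \Cref{lem:h-envelope}(iii), bound $\sqrt{\Lambda/(\ell+K_0)}\le\sqrt{\Lambda/t_\ell}$ with $t_\ell:=\ell-m+K_0$. Since the $D_y$ term is already present, this costs only a factor $2$. The Lipschitz factor becomes $c(1+\sqrt{\Lambda/t_\ell})$, and $q$ never appears.

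Multiplying by your movement bound gives $cm(t_\ell^{-1}+\sqrt\Lambda\,t_\ell^{-3/2}+\Lambda t_\ell^{-2})$. Then use $w_{\ell,k}\le c(t_\ell+m)/N^2$ together with the elementary sums $\sum_\ell t_\ell^{-1/2}\le2\sqrt N$, $\sum_\ell t_\ell^{-1}\le1+\log(N/K_0)$, $\sum_\ell t_\ell^{-3/2}\le3/\sqrt{K_0}$, and $\sum_\ell t_\ell^{-2}\le2/K_0$. The three pieces give precisely the five terms of \eqref{eq:replacement-final}.

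Your (a) and (b) are already this computation. Product (c) merges into (b), and (d) becomes $\frac{m\Lambda}{N^2}\sum_\ell(t_\ell+m)t_\ell^{-2}\le c\frac{m\Lambda}{N^2}(1+\log\frac{N}{K_0}+\frac{m}{K_0})$. The bookkeeping obstacle you anticipated is an artifact of moving to the larger denominator $\ell+K_0$.
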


\begin{proof}
On $\mathcal K_j$, the ABC bound and \eqref{eq:grad-upper} imply
\[
        \norm{G_j}
        \le \sqrt{r\frac{\Lambda}{j+K_0}+C}
        \le c_G\left(1+\sqrt{\frac{\Lambda}{j+K_0}}\right)
\]
for a constant $c_G$ depending only on $L,A,B,C$.  Hence, for $\ell\ge m$,
\begin{align}\label{eq:movement-proof}
\norm{x_\ell-x_{\ell-m}}
&\le \sum_{j=\ell-m}^{\ell-1}\alpha_j\norm{G_j} \\
&\le ac_G\sum_{j=\ell-m}^{\ell-1}\frac{1}{j+K_0}
\left(1+\sqrt{\frac{\Lambda}{j+K_0}}\right) \notag\\
&\le c\frac{m}{\ell-m+K_0}
\left(1+\sqrt{\frac{\Lambda}{\ell-m+K_0}}\right).\notag
\end{align}
The last inequality uses that $j+K_0\ge \ell-m+K_0$ over the summation range.

By Lemma \ref{lem:h-envelope}, \eqref{eq:local-envelope}, and \eqref{eq:movement-proof},
\begin{align*}
&|h(x_\ell,Z_\ell)-h(x_{\ell-m},Z_\ell)| \\
&\quad\le c_{\mathrm{lip}}
\left(1+\sqrt{\frac{\Lambda}{\ell+K_0}}+\sqrt{\frac{\Lambda}{\ell-m+K_0}}\right)
\norm{x_\ell-x_{\ell-m}} \\
&\quad\le c\frac{m}{\ell-m+K_0}
\left(1+\sqrt{\frac{\Lambda}{\ell-m+K_0}}+\frac{\Lambda}{\ell-m+K_0}\right).
\end{align*}
Thus
\begin{align}\label{eq:replacement-sum-proof}
&\left|\sum_{\ell=m}^{k-1}w_{\ell,k}\bigl(h(x_\ell,Z_\ell)-h(x_{\ell-m},Z_\ell)\bigr)\right| \\
&\quad\le cm\sum_{\ell=m}^{k-1}w_{\ell,k}\left(
\frac{1}{t_\ell}+\frac{\sqrt\Lambda}{t_\ell^{3/2}}+\frac{\Lambda}{t_\ell^2}
\right),\notag
\end{align}
where $t_\ell:=\ell-m+K_0$.  Since $\ell+K_0=t_\ell+m$, \eqref{eq:weight-point} gives
\[
        w_{\ell,k}\le c\frac{t_\ell+m}{N^2}.
\]
The elementary estimates
\begin{align*}
\sum_{\ell=m}^{k-1}1&\le k\le N,
&\sum_{\ell=m}^{k-1}\frac1{t_\ell}&\le 1+\log\frac{N}{K_0},
&\sum_{\ell=m}^{k-1}\frac1{t_\ell^2}&\le \frac1{K_0},\\
\sum_{\ell=m}^{k-1}\frac1{\sqrt{t_\ell}}&\le 2\sqrt N,
&\sum_{\ell=m}^{k-1}\frac1{t_\ell^{3/2}}&\le \frac2{\sqrt{K_0}}
\end{align*}
therefore imply
\begin{align*}
\sum_{\ell=m}^{k-1}\frac{w_{\ell,k}}{t_\ell}
&\le \frac{c}{N^2}\left(N+m\left(1+\log\frac{N}{K_0}\right)\right),\\
\sum_{\ell=m}^{k-1}\frac{w_{\ell,k}}{t_\ell^{3/2}}
&\le \frac{c}{N^2}\left(\sqrt N+\frac{m}{\sqrt{K_0}}\right),\\
\sum_{\ell=m}^{k-1}\frac{w_{\ell,k}}{t_\ell^2}
&\le \frac{c}{N^2}\left(1+\log\frac{N}{K_0}+\frac{m}{K_0}\right).
\end{align*}
Substituting these three estimates into \eqref{eq:replacement-sum-proof} proves \eqref{eq:replacement-final}.
\end{proof}

\begin{lemma}[Initial window]\label{lem:initial}
For every $m\in\{1,\ldots,k\}$, on $\mathcal K_{k-1}$,
\begin{equation}\label{eq:initial-final}
\left|\sum_{\ell=0}^{m-1}w_{\ell,k}h(x_\ell,Z_\ell)\right|
\le
c_I\left(\frac{\sqrt{m\Lambda}}{N}+\frac{\Lambda\sqrt m}{N^{3/2}}\right),
\end{equation}
where $c_I<\infty$ depends only on $a,\mu,L,A,B,C$.
\end{lemma}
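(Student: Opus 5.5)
The bound is deterministic on $\mathcal K_{k-1}$, so I would prove it by a triangle inequality followed by termwise envelope bounds; no concentration is needed. Since the events are nonincreasing, on $\mathcal K_{k-1}$ we have $\1_{\mathcal K_\ell}=1$ for all $\ell\le m-1\le k-1$. Hence \eqref{eq:local-envelope} gives $\Del_\ell\le \Lambda/(\ell+K_0)$ for every index in the initial window. Lemma~\ref{lem:h-envelope}(i)--(ii) then yields
\[
|h(x_\ell,Z_\ell)|\le H\!\left(\frac{\Lambda}{\ell+K_0}\right)\le \sqrt{c_H}\left(\sqrt{\frac{\Lambda}{\ell+K_0}}+\frac{\Lambda}{\ell+K_0}\right),
\]
using $\sqrt{D+D^2}\le \sqrt D+D$.

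Next I would apply the pointwise weight bound \eqref{eq:weight-point}, namely $w_{\ell,k}\le c_w(\ell+K_0)/N^2$. Multiplying the two bounds and summing gives
\[
\left|\sum_{\ell=0}^{m-1}w_{\ell,k}h(x_\ell,Z_\ell)\right|
\le \frac{c}{N^2}\sum_{\ell=0}^{m-1}\Bigl(\sqrt{\Lambda(\ell+K_0)}+\Lambda\Bigr).
\]
The second piece contributes $c\,m\Lambda/N^2$. Since $m\le k\le N$, this is at most $c\Lambda\sqrt m/N^{3/2}$. For the first piece, $\ell+K_0\le N$ gives at most $c\,m\sqrt{\Lambda N}/N^2=c\,m\sqrt\Lambda/N^{3/2}$.

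The main obstacle is this first piece. The stated target is $\sqrt{m\Lambda}/N$, and we have $m\sqrt{\Lambda}/N^{3/2}=\sqrt{m\Lambda}/N\cdot\sqrt{m/N}\le\sqrt{m\Lambda}/N$, again because $m\le N$. So the crude bound already suffices. If sharper control were wanted, I would use Cauchy--Schwarz, $\sum_{\ell<m}w_{\ell,k}\sqrt{\Lambda/(\ell+K_0)}\le\sqrt{m\Lambda\sum_\ell w_{\ell,k}^2/(\ell+K_0)}$, combined with \eqref{eq:weight-sum1}. This gives exactly $\sqrt{c_w m\Lambda}/N$. The same device with \eqref{eq:weight-sum2} gives $\Lambda\sqrt{m}\,\sqrt{c_w}/N^{3/2}$ for the second piece.

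I would present the Cauchy--Schwarz route, since it reproduces the displayed form \eqref{eq:initial-final} directly. Absorbing $\sqrt{c_H c_w}$ into $c_I$, which depends only on $a,\mu,L,A,B,C$, finishes the proof.
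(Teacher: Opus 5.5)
Your proof is correct, and the Cauchy--Schwarz route you present is essentially the paper's argument. The only difference is the order of steps: the paper applies Cauchy--Schwarz to $w_{\ell,k}H(\Lambda/(\ell+K_0))$ and then uses $H(D)^2\le c_H(D+D^2)$ with \eqref{eq:weight-sum1}--\eqref{eq:weight-sum2}, whereas you split $H$ into the two pieces first. Your pointwise-weight route is also valid, and it gives the slightly sharper bound $c\,(m\sqrt\Lambda/N^{3/2}+m\Lambda/N^2)$, which improves on the stated one by a factor $\sqrt{m/N}$.
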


\begin{proof}
On $\mathcal K_\ell$, Lemma \ref{lem:h-envelope} gives
\[
        |h(x_\ell,Z_\ell)|
        \le H\left(\frac{\Lambda}{\ell+K_0}\right).
\]
Because $\mathcal K_{k-1}\subseteq\mathcal K_\ell$ for every $\ell\le k-1$, Cauchy-Schwarz gives, on $\mathcal K_{k-1}$,
\begin{align*}
\sum_{\ell=0}^{m-1}w_{\ell,k}|h(x_\ell,Z_\ell)|
&\le \sqrt{m\sum_{\ell=0}^{m-1}w_{\ell,k}^2H\left(\frac{\Lambda}{\ell+K_0}\right)^2}.
\end{align*}
Using $H(D)^2\le c_H(D+D^2)$ and \eqref{eq:weight-sum1}--\eqref{eq:weight-sum2},
\begin{align*}
\sum_{\ell=0}^{m-1}w_{\ell,k}^2H\left(\frac{\Lambda}{\ell+K_0}\right)^2
&\le c\sum_{\ell=0}^{k-1}w_{\ell,k}^2
\left(\frac{\Lambda}{\ell+K_0}+\frac{\Lambda^2}{(\ell+K_0)^2}\right) \\
&\le c\left(\frac{\Lambda}{N^2}+\frac{\Lambda^2}{N^3}\right).
\end{align*}
Taking square roots proves \eqref{eq:initial-final}.
\end{proof}

\section{Proof of Theorem~\ref{thm:profile-upper}: mixing-profile upper bound}\label{app:main-proof}

\begin{lemma}[One-step induction estimate]\label{lem:induction-step}
There are constants $c_0,c_1,K_D<\infty$ and a number $\rho_0\in(0,1)$, depending only on $a,\mu,L,L_g,A,B,C$, such that the following holds.  Fix $k\ge1$, let $N=k+K_0$, and choose $m\in\{1,\ldots,k\}$.  Set
\[
        q:=1+\frac{m}{K_0},
        \qquad
        u:=\log\left(\frac{16(m+1)(k+1)^2}{\delta}\right),
        \qquad
        \Theta:=muq^2.
\]
Assume $K_0\ge K_D$ and
\begin{equation}\label{eq:small-window-step}
        \frac{\Theta}{N}\le\rho_0,
        \qquad
        \frac{m}{N}\left(1+\log\frac{N}{K_0}+\frac{m}{K_0}\right)\le\rho_0.
\end{equation}
If $m<k$, also assume
\begin{equation}\label{eq:mixing-small-step}
        \varepsilon(m)\le \frac{\delta}{32N^4}.
\end{equation}
Let $\Lambda\ge1$, and suppose that the good-event family $(\mathcal K_j)_{j=0}^{k-1}$ satisfies \eqref{eq:local-envelope}.  Let
\[
        \eta_k:=\frac{\delta}{8(k+1)^2}.
\]
Then, outside an event of probability at most $2\eta_k$,
\begin{align}\label{eq:one-step-bound}
\1_{\mathcal K_{k-1}}
\Bigg(
&\left|\sum_{\ell=0}^{k-1}w_{\ell,k}\ip{\nabla f(x_\ell)}{M_{\ell+1}}\right|
+\left|\sum_{\ell=0}^{k-1}w_{\ell,k}h(x_\ell,Z_\ell)\right|
\Bigg) \\
&\le \frac{\Lambda}{4N}+\frac{c_0(1+\Theta)}{N}+\frac{c_1}{N}.\notag
\end{align}
\end{lemma}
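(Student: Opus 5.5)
We combine the lemmas of \appref{app:blocking} and then absorb every term into the right-hand side of \eqref{eq:one-step-bound}, using Young's inequality wherever $\Lambda$ enters nonlinearly.

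\textbf{Decomposition and failure probabilities.} Fix $k$, $N$, $m$ as in the statement and write the Markovian sum with the four-term decomposition of \Cref{sec:technical_novelty}: initial window, delayed centered part, mixing bias, and replacement. If $m=k$, only the initial window is present. We invoke \Cref{lem:M-noise} with $\eta=\eta_k$ and \Cref{lem:delayed-mart} with $\eta=\eta_k$. Each of these two events fails with probability at most $\eta_k$, so the total failure probability is at most $2\eta_k$. \Cref{lem:mixing-bias,lem:replacement,lem:initial} hold deterministically on $\mathcal K_{k-1}$. For the logarithms we note $\log(2/\eta_k)\le u$ and $\log(2m/\eta_k)=\log(16m(k+1)^2/\delta)\le u$.

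\textbf{Absorbing each term.} Throughout, $\sqrt{xy}\le \tfrac{\epsilon}{2}x+\tfrac{1}{2\epsilon}y$ is applied to split off a $\Lambda/N$ piece, and each of the five pieces below is allotted at most $\Lambda/(20N)$.

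The delayed martingale bound \eqref{eq:delayed-mart-final} is at most $\frac{c_B}{N}\sqrt{\Theta\Lambda+\Theta\Lambda^2/N}$, since $muq\le\Theta$ and $q\ge1$. For the first piece, $\frac{c_B}{N}\sqrt{\Theta\Lambda}\le \frac{\Lambda}{20N}+\frac{5c_B^2\Theta}{N}$. For the second piece, $\frac{c_B\Lambda}{N}\sqrt{\Theta/N}\le \frac{c_B\sqrt{\rho_0}\,\Lambda}{N}$ by \eqref{eq:small-window-step}, which is at most $\Lambda/(20N)$ once $\rho_0$ is small enough.

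The $M$-noise term is handled identically, using $u\le\Theta$ and $1/N\le\rho_0$.

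The initial window \eqref{eq:initial-final} is treated the same way, with $m\le\Theta$ and $m/N\le\rho_0$.

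For the mixing bias \eqref{eq:mixing-bias-final}, we use $\varepsilon_m\le\delta/(32N^4)$. The factor $\sqrt{Nq(\Lambda+\Lambda^2q/N)}$ is at most $\sqrt{N}q\Lambda\le N^{3/2}\Lambda$, because $q\le 1+m\le N$. The whole term is therefore $O(\Lambda/N^{3})$, which is $\le\Lambda/(20N)$ once $K_D$ is large.

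For the replacement error \eqref{eq:replacement-final}:
\begin{itemize}
\item The terms $m/N$ and $\frac{m^2}{N^2}(1+\log\frac{N}{K_0})$ are $\le c\,\Theta/N$, since $m/N\le\rho_0\le1$.
\item The $\Lambda$ term is $\frac{\Lambda}{N}\cdot\frac{m}{N}(\cdots)\le\rho_0 c_R\Lambda/N$ by the second condition in \eqref{eq:small-window-step}.
\item The term $m\sqrt\Lambda/N^{3/2}$ is split by Young's inequality into $\Lambda/(20N)+c\,m^2/N^2$, and $c\,m^2/N^2\le c\,m/N$.
\item The term $\frac{m^2\sqrt\Lambda}{\sqrt{K_0}N^2}$ equals $\frac{\sqrt\Lambda}{N}\cdot\frac{m}{N}\cdot\frac{m}{\sqrt{K_0}}$. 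We use $\frac{m}{\sqrt{K_0}}\le \sqrt{m}\,q^{1/2}$, and then split as before via Young's inequality with $m/N\le\rho_0$.
\end{itemize}

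Summing the five $\Lambda/(20N)$ allotments gives $\Lambda/(4N)$. The remaining pieces collect into $c_0(1+\Theta)/N+c_1/N$.

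\textbf{Main obstacle.} The delicate step is the bookkeeping that makes every coefficient of $\Lambda/N$ come out small using only the two window conditions and $K_D$, with no $\Lambda$-dependent constants. The most fragile pieces are the $\Lambda^2$ terms and the $m^2\sqrt\Lambda$ replacement term. For these we rely on the fact that the factors $\Theta/N$ and $m/N$ carry the smallness $\rho_0$. The choice of $\rho_0$ is made last, so that it is small relative to $c_B$, $c_M$, $c_I$ and $c_R$.
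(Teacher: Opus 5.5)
Your proposal is correct and follows essentially the same route as the paper. You invoke \Cref{lem:M-noise} and \Cref{lem:delayed-mart} at level $\eta_k$ (the only two probabilistic failures) and treat the mixing bias, replacement and initial-window terms deterministically on $\mathcal K_{k-1}$. You then absorb every $\Lambda$-dependent piece using Young's inequality together with the two window conditions and $K_0\ge K_D$. Your bound $m/\sqrt{K_0}\le\sqrt{mq}$ for the $m^2\sqrt\Lambda/(\sqrt{K_0}N^2)$ term is equivalent to the paper's estimate $m^4/(K_0N^3)\le\Theta/N$.

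Two small bookkeeping fixes are needed:
\begin{itemize}
\item There are up to ten separate $\Lambda/N$ contributions, not five, since the $M$-noise, delayed and initial-window terms each produce two. At $\Lambda/(20N)$ apiece they sum to $\Lambda/(2N)$, so allot e.g.\ $\Lambda/(40N)$ to each instead.
\item The bound $\frac{m^2}{N^2}\bigl(1+\log\frac{N}{K_0}\bigr)\le c\,\Theta/N$ does not follow from $m/N\le1$ alone, because the logarithm is unbounded in $k$. Use the second window condition, $\frac{m}{N}\bigl(1+\log\frac{N}{K_0}\bigr)\le\rho_0$. Alternatively, use $1+\log(N/K_0)\le 2u$ as the paper does.
\end{itemize}
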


\begin{proof}
Throughout the proof, \(c\) denotes a finite constant depending only on
\(a,\mu,L,L_g,A,B,C\); its value may change from line to line.  The goal is to prove a
single-time bound at the fixed terminal time \(k\).  The good-event indicators are included only to
make all increment bounds deterministic; on \(\mathcal K_{k-1}\) all earlier good events also hold.

\paragraph{Martingale-difference perturbation.}
The martingale part is controlled by \Cref{lem:M-noise}.  Since \(m\ge1\), \(q\ge1\), and
\(\Theta=muq^2\), we have \(\Theta\ge u\).  Also
\[
        \log\frac{2}{\eta_k}
        =\log\frac{16(k+1)^2}{\delta}
        \le
        \log\frac{16(m+1)(k+1)^2}{\delta}=u .
\]
Therefore, outside an event of probability at most \(\eta_k\),
\begin{align}\label{eq:M-close-new}
\1_{\mathcal K_{k-1}}
\left|\sum_{\ell=0}^{k-1}w_{\ell,k}\ip{\nabla f(x_\ell)}{M_{\ell+1}}\right|
&\le \frac{c}{N}\sqrt{u\left(\Lambda+\frac{\Lambda^2}{N}\right)}  \\
&\le \frac{c}{N}\sqrt{(1+\Theta)\Lambda}
   +\frac{c\Lambda}{N}\sqrt{\frac{1+\Theta}{N}} .\notag
\end{align}
For the first term we use Young's inequality in the form
\(c\sqrt{(1+\Theta)\Lambda}\le \Lambda/32+c(1+\Theta)\).  For the second term,
\[
        \sqrt{\frac{1+\Theta}{N}}
        \le \sqrt{\frac1{K_0}+\frac{\Theta}{N}}.
\]
By increasing \(K_D\) and then choosing \(\rho_0\) sufficiently small, the condition
\(K_0\ge K_D\) and \(\Theta/N\le\rho_0\) make the last display small enough that
\[
        \frac{c\Lambda}{N}\sqrt{\frac{1+\Theta}{N}}
        \le \frac{\Lambda}{32N}+\frac{c}{N}.
\]
Consequently the martingale-difference contribution satisfies
\begin{equation}\label{eq:M-final-use}
\1_{\mathcal K_{k-1}}
\left|\sum_{\ell=0}^{k-1}w_{\ell,k}\ip{\nabla f(x_\ell)}{M_{\ell+1}}\right|
\le
        \frac{\Lambda}{16N}+\frac{c(1+\Theta)}{N}+\frac{c}{N}.
\end{equation}

\paragraph{Decomposition of the Markovian sum.}
If \(m<k\), define \(Y_\ell\) and \(\overline Y_\ell\) by \eqref{eq:Y-def}.  Then
\[
\sum_{\ell=0}^{k-1}w_{\ell,k}h(x_\ell,Z_\ell)
=
\sum_{\ell=0}^{m-1}w_{\ell,k}h(x_\ell,Z_\ell)
+
\sum_{\ell=m}^{k-1}(Y_\ell-\overline Y_\ell)
+
\sum_{\ell=m}^{k-1}\overline Y_\ell
+
\sum_{\ell=m}^{k-1}w_{\ell,k}\{h(x_\ell,Z_\ell)-h(x_{\ell-m},Z_\ell)\}.
\]
When \(m=k\), the three sums over \(\ell=m,\ldots,k-1\) are empty, so only the initial-window
term remains.

\paragraph{Centered delayed term.}
Assume first that \(m<k\).  Since
\[
        \log\frac{2m}{\eta_k}
        =\log\frac{16m(k+1)^2}{\delta}
        \le u,
\]
\Cref{lem:delayed-mart} gives, outside another event of probability at most \(\eta_k\),
\begin{align}\label{eq:delayed-close-new}
\1_{\mathcal K_{k-1}}
\left|\sum_{\ell=m}^{k-1}(Y_\ell-\overline Y_\ell)\right|
&\le
\frac{c}{N}
\sqrt{muq\left(\Lambda+\frac{\Lambda^2q}{N}\right)}   \\
&\le
\frac{c}{N}\sqrt{\Theta\Lambda}
+c\frac{\Lambda}{N}\sqrt{\frac{\Theta}{N}} .\notag
\end{align}
Here we used \(q\ge1\), \(muq\le muq^2=\Theta\), and \(muq\cdot q=muq^2=\Theta\).
The first term is bounded by \(\Lambda/(64N)+c\Theta/N\) by Young's inequality.  The second
term is at most \(\Lambda/(64N)\) after decreasing \(\rho_0\), because \(\Theta/N\le\rho_0\).
Thus the centered delayed term is bounded by
\begin{equation}\label{eq:delayed-close-final-expanded}
\1_{\mathcal K_{k-1}}
\left|\sum_{\ell=m}^{k-1}(Y_\ell-\overline Y_\ell)\right|
\le
        \frac{\Lambda}{32N}+\frac{c\Theta}{N}.
\end{equation}

\paragraph{Mixing bias.}
By \Cref{lem:mixing-bias} and \eqref{eq:mixing-small-step}, on \(\mathcal K_{k-1}\),
\[
\left|\sum_{\ell=m}^{k-1}\overline Y_\ell\right|
\le
\frac{c\delta}{N^5}
\sqrt{Nq\left(\Lambda+\frac{\Lambda^2q}{N}\right)} .
\]
Since \(\delta<1\), \(q=1+m/K_0\le1+k/K_0\le N\), and \(\Lambda\ge1\), the right-hand side is
at most
\[
        \frac{c}{N^5}\{N\sqrt\Lambda+N\Lambda\}
        \le \frac{c(1+\Lambda)}{N^4}.
\]
Increasing \(K_D\) if necessary gives, for all \(N\ge K_D\),
\begin{equation}\label{eq:bias-close-new}
\1_{\mathcal K_{k-1}}
\left|\sum_{\ell=m}^{k-1}\overline Y_\ell\right|
\le
        \frac{\Lambda}{64N}+\frac{c}{N}.
\end{equation}

\paragraph{Replacement term.}
We now simplify the deterministic estimate in \Cref{lem:replacement}.  The first term in
\eqref{eq:replacement-final} is bounded by \(cm/N\le c\Theta/N\), since \(u,q\ge1\).  For the
second term, use
\[
        1+\log\frac{N}{K_0}
        \le c\log\{16(m+1)(k+1)^2/\delta\}=cu
        \le c\frac{\Theta}{m},
\]
which gives
\[
        \frac{m^2}{N^2}\left(1+\log\frac{N}{K_0}\right)
        \le c\frac{m\Theta}{N^2}
        \le c\frac{\Theta}{N}.
\]
The \(\Lambda\)-linear replacement term is controlled directly by the second small-window condition:
\[
        c\frac{m\Lambda}{N^2}
        \left(1+\log\frac{N}{K_0}+\frac{m}{K_0}\right)
        \le c\rho_0\frac{\Lambda}{N}
        \le \frac{\Lambda}{64N},
\]
after decreasing \(\rho_0\).  The term \(cm\sqrt\Lambda/N^{3/2}\) satisfies
\[
        c\frac{m\sqrt\Lambda}{N^{3/2}}
        \le \frac{\Lambda}{128N}+c\frac{m^2}{N^2}
        \le \frac{\Lambda}{128N}+c\frac{\Theta}{N}.
\]
For the final term, Young's inequality gives
\[
        c\frac{m^2\sqrt\Lambda}{\sqrt{K_0}N^2}
        \le \frac{\Lambda}{128N}+c\frac{m^4}{K_0N^3}.
\]
Because \(m\le N\) and
\[
        \frac{m^2}{K_0}
        =m\frac{m}{K_0}
        \le m\left(1+\frac{m}{K_0}\right)
        \le mq
        \le mq^2
        \le \Theta,
\]
we have \(m^4/(K_0N^3)\le \Theta/N\).  Therefore
\begin{equation}\label{eq:replacement-close-new}
\1_{\mathcal K_{k-1}}
\left|\sum_{\ell=m}^{k-1}w_{\ell,k}\{h(x_\ell,Z_\ell)-h(x_{\ell-m},Z_\ell)\}\right|
\le
        \frac{\Lambda}{32N}+\frac{c\Theta}{N}.
\end{equation}

\paragraph{Initial window.}
Finally, \Cref{lem:initial} gives
\[
\1_{\mathcal K_{k-1}}
\left|\sum_{\ell=0}^{m-1}w_{\ell,k}h(x_\ell,Z_\ell)\right|
\le
c\frac{\sqrt{m\Lambda}}{N}+c\frac{\Lambda}{N}\sqrt{\frac{m}{N}} .
\]
The first term is at most \(\Lambda/(64N)+c m/N\le \Lambda/(64N)+c\Theta/N\).  The second
term is at most \(\Lambda/(64N)\) after decreasing \(\rho_0\), because
\(m/N\le\Theta/N\le\rho_0\).  Hence the initial window is bounded by
\begin{equation}\label{eq:initial-close-expanded}
\1_{\mathcal K_{k-1}}
\left|\sum_{\ell=0}^{m-1}w_{\ell,k}h(x_\ell,Z_\ell)\right|
\le
        \frac{\Lambda}{32N}+\frac{c\Theta}{N}.
\end{equation}

Before combining the estimates, we record where the two admissibility requirements are used.  The condition
\[
        \Theta/N\le\rho_0
\]
is used to absorb the stochastic square-root terms in \eqref{eq:M-close-new},
\eqref{eq:delayed-close-new}, and \eqref{eq:initial-close-expanded} into a small multiple of
\(\Lambda/N\), with the remainder of order \((1+\Theta)/N\).  The second condition,
\[
        \frac{m}{N}\left(1+\log\frac{N}{K_0}+\frac{m}{K_0}\right)\le\rho_0,
\]
is used for the replacement term, where the movement of the iterate over one lag window produces the
additional factor involving \(m\), \(K_0\), and \(\log(N/K_0)\).  Thus the first admissibility
condition controls concentration size, while the second controls the deterministic lag-replacement
drift.

Combining \eqref{eq:delayed-close-final-expanded}, \eqref{eq:bias-close-new},
\eqref{eq:replacement-close-new}, and \eqref{eq:initial-close-expanded} controls the Markovian
sum.  If \(m=k\), the delayed centered, mixing-bias, and replacement bounds are omitted and the
same conclusion follows from \eqref{eq:initial-close-expanded}.  Adding the martingale bound
\eqref{eq:M-final-use} proves \eqref{eq:one-step-bound} after enlarging \(c_0\) and \(c_1\).
The only probabilistic failures used are the martingale-difference event and the delayed martingale
event, each of probability at most \(\eta_k\), so the total failure probability is at most
\(2\eta_k\).
\end{proof}

\begin{proof}[Proof of \Cref{thm:profile-upper}]
Let $\rho_0$ and $K_D$ be the constants from \Cref{lem:induction-step}.  Increase $K_D$ if necessary so that $K_D\ge K_{\mathrm{base}}$.  Let $\Gamma\ge1$ be chosen below and define
\begin{equation}\label{eq:lambda-def}
        \Lambda_k:=2K_0\Del_0+\Gamma(1+\Theta_k).
\end{equation}
By the monotonicity noted after \eqref{eq:u-theta}, the sequences $m_k,u_k,q_k,\Theta_k$ are nondecreasing in $k$, so $\Lambda_k$ is nondecreasing.

Define the single-time and cumulative good events by
\begin{equation}\label{eq:E-def}
        \mathsf E_k:=\left\{\Del_k\le \frac{\Lambda_k}{k+K_0}\right\},
        \qquad
        \mathsf E_0:=\Omega,
        \qquad
        \calE_k:=\bigcap_{j=0}^k\mathsf E_j.
\end{equation}
For target time $k\ge1$, on $\calE_{k-1}$ we have the envelope required in \eqref{eq:local-envelope} with $\Lambda=\Lambda_k$: for $j=0$,
\[
        \Del_0\le \Lambda_k/K_0,
\]
because $\Lambda_k\ge2K_0\Del_0$, and for $1\le j\le k-1$,
\[
        \Del_j\le \frac{\Lambda_j}{j+K_0}\le \frac{\Lambda_k}{j+K_0}.
\]
Thus \Cref{lem:induction-step} applies with $m=m_k$ and $\mathcal K_j=\calE_j$.  If $m_k<k$, then by definition of $m_k$ it equals $\widehat m_k$, and \eqref{eq:mixing-small-step} holds.  The admissibility assumptions are exactly \eqref{eq:small-window-step}.

By \Cref{lem:descent}, \Cref{lem:weights}, and \Cref{lem:induction-step}, outside an event of probability at most $2\eta_k$,
\begin{align}\label{eq:pre-close-new}
\1_{\calE_{k-1}}\Del_k
&\le
\frac{K_0\Del_0+c_{\mathrm{det}}}{N_k}
+\frac{\Lambda_k}{4N_k}
+\frac{c(1+\Theta_k)}{N_k},
\end{align}
where $c_{\mathrm{det}}=LCc_w/2$ and $c<\infty$ depends only on the model constants.  Choose
\begin{equation}\label{eq:Gamma-choice-new}
        \Gamma\ge4(c+c_{\mathrm{det}}+1).
\end{equation}
Then the numerator on the right side of \eqref{eq:pre-close-new} is bounded by
\begin{align*}
K_0\Del_0+c_{\mathrm{det}}+c(1+\Theta_k)+\frac14\{2K_0\Del_0+\Gamma(1+\Theta_k)\}
&\le \frac32K_0\Del_0+\frac12\Gamma(1+\Theta_k)\\
&\le 2K_0\Del_0+\Gamma(1+\Theta_k)=\Lambda_k.
\end{align*}
Therefore
\[
        \Pp(\calE_{k-1}\cap\mathsf E_k^c)
        \le 2\eta_k
        =\frac{\delta}{4(k+1)^2}.
\]
The first-failure union bound gives
\begin{align*}
\Pp\left(\bigcup_{k\ge1}\mathsf E_k^c\right)
&=\Pp\left(\bigcup_{k\ge1}(\calE_{k-1}\cap\mathsf E_k^c)\right) \\
&\le \sum_{k=1}^\infty\frac{\delta}{4(k+1)^2}
\le \delta.
\end{align*}
Thus $\mathsf E_k$ holds for all $k\ge1$ with probability at least $1-\delta$.  Taking $C_\star=\Gamma$ proves \eqref{eq:profile-bound}.
\end{proof}

\section{Proof of Corollary~\ref{cor:geometric-upper}: geometric window calculus}\label{app:window}

\begin{lemma}[A logarithm-over-linear bound]
\label{lem:log-linear}
Let $r\ge1$ be an integer.  Let $b,s>0$ and assume
\begin{equation}\label{eq:log-linear-condition}
        1+\log(b/s)\ge r.
\end{equation}
Then
\begin{equation}\label{eq:log-linear}
        \sup_{N\ge b}\frac{(1+\log(N/s))^r}{N}
        =\frac{(1+\log(b/s))^r}{b}.
\end{equation}
\end{lemma}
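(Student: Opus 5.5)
The plan is to reduce the claim to a monotonicity statement for a one-variable function and then evaluate at the left endpoint. Define $\phi(N):=(1+\log(N/s))^r/N$ on $[b,\infty)$. Since $N=b$ is admissible, the supremum is at least $\phi(b)$. It therefore suffices to show that $\phi$ is nonincreasing on $[b,\infty)$, so that $\sup_{N\ge b}\phi(N)=\phi(b)$.

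Condition \eqref{eq:log-linear-condition} gives $1+\log(N/s)\ge 1+\log(b/s)\ge r\ge1>0$ for every $N\ge b$. Hence $\phi$ is positive and differentiable there, with
\[
\phi'(N)=\frac{(1+\log(N/s))^{r-1}}{N^2}\bigl(r-(1+\log(N/s))\bigr).
\]
The prefactor is nonnegative because the base is positive. The bracket is at most $0$ for all $N\ge b$, because $N\mapsto 1+\log(N/s)$ is increasing and already at least $r$ at $N=b$. Thus $\phi'\le0$ on $[b,\infty)$, and $\phi$ is nonincreasing there.

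We conclude $\phi(N)\le\phi(b)$ for all $N\ge b$, so the supremum is attained at $N=b$, which proves \eqref{eq:log-linear}. There is no real obstacle. The only care needed is to use \eqref{eq:log-linear-condition} twice: once to guarantee that the base $1+\log(N/s)$ is positive, so raising it to the power $r$ and differentiating are harmless, and once to place the critical point $N=se^{r-1}$, where $\phi$ attains its global maximum, at or to the left of $b$.

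If $N$ is meant to range only over integers, the same inequality $\phi(N)\le\phi(b)$ holds pointwise. If $b$ itself is not an integer, equality should be read as a bound, and the paper's applications only use the upper bound.
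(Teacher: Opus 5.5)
Your proof is correct and is essentially the paper's argument: the paper likewise shows that $\phi(N)=(1+\log(N/s))^r/N$ is nonincreasing on $[b,\infty)$. The only difference is that it uses the logarithmic derivative $\frac{d}{dN}\log\phi(N)=\frac{1}{N}\bigl(\frac{r}{1+\log(N/s)}-1\bigr)\le 0$ rather than computing $\phi'$ directly.

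One small aside: your remark that $\phi$ attains its global maximum at $N=se^{r-1}$ holds only on $N>s/e$. For even $r$, $\phi(N)\to\infty$ as $N\to 0^+$. This plays no role in the proof.
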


\begin{proof}
Set $y(N)=1+\log(N/s)$.  Under \eqref{eq:log-linear-condition}, $y(N)\ge r$ for every $N\ge b$.  For
\[
        \phi(N)=\frac{y(N)^r}{N}
\]
we have
\[
        \frac{d}{dN}\log\phi(N)
        =\frac{r}{Ny(N)}-\frac1N
        =\frac{1}{N}\left(\frac{r}{y(N)}-1\right)\le0.
\]
Thus $\phi$ is nonincreasing on $[b,\infty)$, and the supremum is attained at $N=b$.
\end{proof}

\begin{lemma}[Geometric windows are admissible]\label{lem:window}
Fix $\rho\in(0,1)$.  There exists $K_{\mathrm{win}}(\rho)<\infty$ such that the following holds.  Let $\delta\in(0,e^{-1})$, let $\tmix\ge1$, and assume \eqref{eq:mixing-def}.  If
\begin{equation}\label{eq:window-K-condition}
        K_0\ge K_{\mathrm{win}}(\rho)\,\tmix\left(1+\log\frac{e\tmix}{\delta}\right)^3,
\end{equation}
then the windows \eqref{eq:delay-def}-\eqref{eq:u-theta} satisfy, for every $k\ge1$,
\begin{align}
        \frac{\Theta_k}{N_k}&\le\rho,\label{eq:window-theta}\\
        \frac{m_k}{N_k}\left(1+\log\frac{N_k}{K_0}+\frac{m_k}{K_0}\right)&\le\rho.\label{eq:window-replacement}
\end{align}
\end{lemma}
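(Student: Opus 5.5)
The plan is to bound $m_k$ explicitly and then show that every quantity in \eqref{eq:window-theta}--\eqref{eq:window-replacement} is a polylogarithm in $N_k$ divided by $N_k$. Once that is done, \Cref{lem:log-linear} reduces the supremum over $k$ to the value at $N=K_0$, and the growth condition \eqref{eq:window-K-condition} makes that value at most $\rho$.

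\textbf{Step 1: bound the window.} Under \eqref{eq:mixing-def}, $\varepsilon(m)\le\delta/(32N_k^4)$ as soon as $\lfloor m/\tmix\rfloor\ge\log_2(32N_k^4/\delta)$. It therefore suffices to take
\[
m\ge\tmix\bigl(1+\log_2(32N_k^4/\delta)\bigr).
\]
Together with $m_k\le\widehat m_k$, this gives \eqref{eq:mk-geometric-explicit}:
\[
m_k\le M_k:=1+\tmix\bigl(1+\log_2(32N_k^4/\delta)\bigr).
\]
Since $\delta<e^{-1}$ and $N_k\ge K_0\ge1$, we have $\log(32N_k^4/\delta)\le c\log(N_k/\delta)$. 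Hence
\[
M_k\le c\,\tmix\,\ell_k,\qquad \ell_k:=1+\log(N_k/\delta).
\]
Similarly, $m_k+1\le N_k+1$ and $k+1\le N_k$, so $u_k\le c\,\ell_k$.

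\textbf{Step 2: control $q_k$.} We need $q_k\le c$ uniformly. This holds if $M_k\le K_0$ for all $k$, which fails for very large $N_k$. I would instead use $q_k\le 1+M_k/K_0$ and bound
\[
\Theta_k\le c\,\tmix\,\ell_k^2\bigl(1+\tmix\ell_k/K_0\bigr)^2\le c\,\tmix\,\ell_k^2+c\,\tmix^3\ell_k^4/K_0^2 .
\]
Dividing by $N_k$, each term has the form (constant)$\times\ell_k^r/N_k$.

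\textbf{Step 3: apply \Cref{lem:log-linear}.} Write $\ell_k=1+\log(N_k/s)$ with $s=\delta$, and set $b=K_0$, using $r\le4$. The hypothesis $1+\log(K_0/\delta)\ge 4$ holds once $K_{\mathrm{win}}$ is large. The lemma then gives
\[
\sup_k \ell_k^r/N_k\le(1+\log(K_0/\delta))^r/K_0 .
\]
Next I would bound
\[
1+\log(K_0/\delta)\le c\bigl(1+\log(e\tmix/\delta)\bigr)+c\log\!\bigl(K_0/\tmix\bigr).
\]
Writing $K_0=\kappa\,\tmix(1+\log(e\tmix/\delta))^3$ with $\kappa\ge K_{\mathrm{win}}$, the ratios become, for example,
\[
\frac{\tmix\,(c\,L_\delta+\log\kappa+3\log L_\delta)^2}{\kappa\,\tmix\,L_\delta^3},\qquad L_\delta:=1+\log(e\tmix/\delta).
\]
This is at most $c(1+\log\kappa)^2/\kappa$, which tends to $0$ as $\kappa\to\infty$. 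The quartic term is handled the same way: it gives $\tmix^3 L^4/K_0^3\le c(1+\log\kappa)^4/\kappa^3$. Condition \eqref{eq:window-replacement} is treated identically. There, $m_k/N_k\le c\,\tmix\,\ell_k/N_k$, $\log(N_k/K_0)\le\ell_k$, and $m_k/K_0\le c\,\tmix\,\ell_k/K_0$, so it also reduces to terms $\ell^r/N$ with $r\le2$.

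\textbf{Main obstacle.} The hardest step is the bookkeeping in Step 3. The cube power of the logarithm in \eqref{eq:window-K-condition} must absorb $\ell^2$ in the main term. At the same time, the $\log\kappa$ contributions coming from $\log(K_0/\delta)$ must be dominated by $\kappa$ itself, and the monotonicity hypothesis of \Cref{lem:log-linear} must actually hold at $b=K_0$. If that hypothesis is awkward to verify, I would fall back on the cruder estimate $\ell^r/N\le c_r(N/\delta)^{1/2}/N$. However, that would lose a polynomial factor in $\delta$, so I would try the direct route first.
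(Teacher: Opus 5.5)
Your proposal is correct and follows essentially the same route as the paper. You bound $m_k\le c\,\tmix\ell_k$ and $u_k\le c\,\ell_k$, and reduce both admissibility ratios to polylogarithm-over-linear terms in $N_k$. You then use Lemma~\ref{lem:log-linear} to evaluate these at $N_k=K_0$, and finally write $K_0=\kappa\,\tmix L_\delta^3$ so that every term vanishes as $\kappa\to\infty$ uniformly in $\tmix$ and $\delta$. Your minor variations are all valid and, if anything, slightly tidier: you bound $u_k$ via $m_k\le k\le N_k$, you fold the cross term using $(1+x)^2\le 2+2x^2$, and you explicitly check the hypothesis $1+\log(K_0/\delta)\ge 4$ of Lemma~\ref{lem:log-linear}.
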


\begin{proof}
Let
\[
        \ell_0:=1+\log\frac{e\tmix}{\delta},
        \qquad
        y_k:=1+\log\frac{N_k}{\delta}.
\]
Under \eqref{eq:mixing-def}, the minimal window in \eqref{eq:delay-def} satisfies
\begin{equation}\label{eq:m-upper-window-new}
        m_k\le c\tmix y_k
\end{equation}
for a numerical constant $c$.  Indeed, if
\[
        m\ge \left\lceil \tmix\left(1+\log_2(32N_k^4/\delta)\right)\right\rceil,
\]
then $\lfloor m/\tmix\rfloor\ge \log_2(32N_k^4/\delta)$, and hence $\varepsilon(m)\le\delta/(32N_k^4)$.  Also,
\begin{equation}\label{eq:u-upper-window-new}
        u_k\le c y_k.
\end{equation}
Indeed, $m_k+1\le c\tmix y_k+1\le \exp(cy_k)$ since $N_k\ge K_0\ge\tmix$ and $\delta<e^{-1}$; substituting this into the definition of $u_k$ gives \eqref{eq:u-upper-window-new}.

Combining \eqref{eq:m-upper-window-new} and \eqref{eq:u-upper-window-new},
\begin{align}\label{eq:theta-pre-window}
\frac{\Theta_k}{N_k}
&=\frac{m_ku_k}{N_k}\left(1+\frac{m_k}{K_0}\right)^2 \\
&\le c\left\{
\frac{\tmix y_k^2}{N_k}
+\frac{\tmix^2 y_k^3}{K_0N_k}
+\frac{\tmix^3 y_k^4}{K_0^2N_k}
\right\}.\notag
\end{align}
By \Cref{lem:log-linear}, each term is maximized, up to a numerical constant depending only on the logarithmic power, at $N_k=K_0$.  Let $K_0=K\tmix\ell_0^3$ with $K\ge K_{\mathrm{win}}(\rho)$.  Then
\[
        1+\log\frac{K_0}{\delta}
        \le c\{\ell_0+\log K+\log\ell_0\}
        \le c_K\ell_0,
\]
where $c_K$ depends on $K$ but not on $\tmix$ or $\delta$.  Substituting this bound into \eqref{eq:theta-pre-window} at $N_k=K_0$ yields
\[
\frac{\Theta_k}{N_k}
\le
c\left(
\frac{c_K^2}{K\ell_0}
+\frac{c_K^3}{K^2\ell_0^3}
+\frac{c_K^4}{K^3\ell_0^5}
\right).
\]
The right side tends to zero as $K\to\infty$, uniformly over $\ell_0\ge1$, because $\log K/K^a\to0$ for every $a>0$.  Choosing $K_{\mathrm{win}}(\rho)$ sufficiently large proves \eqref{eq:window-theta}.

For \eqref{eq:window-replacement}, use \eqref{eq:m-upper-window-new} and $\log(N_k/K_0)\le y_k$:
\[
\frac{m_k}{N_k}\left(1+\log\frac{N_k}{K_0}+\frac{m_k}{K_0}\right)
\le c\left\{\frac{\tmix y_k^2}{N_k}+\frac{\tmix^2y_k^2}{K_0N_k}\right\}.
\]
The same logarithm-over-linear argument bounds the right side by
\[
        c\left(\frac{c_K^2}{K\ell_0}+\frac{c_K^2}{K^2\ell_0^4}\right),
\]
which is at most $\rho$ after increasing $K_{\mathrm{win}}(\rho)$.  This proves the lemma.
\end{proof}

\begin{proof}[Proof of \Cref{cor:geometric-upper}]
Let \(\rho_0,K_D,C_\star\) be the constants from \Cref{thm:profile-upper}.  Choose
\(K_{\mathrm{geo}}\) large enough so that \eqref{eq:K-geometric} implies both
\(K_0\ge K_{\mathrm{base}}\vee K_D\) and the hypothesis of \Cref{lem:window} with
\(\rho=\rho_0\).  Then \Cref{lem:window} gives
\[
        \frac{\Theta_k}{N_k}\le\rho_0,
        \qquad
        \frac{m_k}{N_k}
        \left(1+\log\frac{N_k}{K_0}+\frac{m_k}{K_0}\right)
        \le\rho_0
\]
for every \(k\ge1\).  These are exactly the admissibility conditions
\eqref{eq:admissible-1}--\eqref{eq:admissible-2}, while the definition of \(m_k\) gives the
mixing condition needed in \Cref{thm:profile-upper}.  Hence \Cref{thm:profile-upper} applies and
proves \eqref{eq:profile-bound}.

It remains only to justify the displayed closed-form estimates.  Set
\[
        m=\left\lceil\tmix\left(1+\log_2\left(\frac{32N_k^4}{\delta}\right)\right)\right\rceil .
\]
Then
\(\lfloor m/\tmix\rfloor\ge \log_2(32N_k^4/\delta)\).  The geometric mixing assumption
\eqref{eq:mixing-def} therefore gives
\(\varepsilon(m)\le \delta/(32N_k^4)\).  Since \(m_k\) is the minimum of this admissible value and
\(k\), we have \(m_k\le m\), which proves \eqref{eq:mk-geometric-explicit}.  The quantity
\(\Theta_k=m_ku_k(1+m_k/K_0)^2\) is monotone in \(m_k\).  Replacing \(m_k\) by the upper bound
\(M_k\) in both \(u_k=\log(16(k+1)^2(m_k+1)/\delta)\) and \((1+m_k/K_0)^2\) gives
\eqref{eq:theta-geometric-explicit}.

Finally, \eqref{eq:m-upper-window-new} and \eqref{eq:u-upper-window-new} show that
\(m_k=O(\tmix\log(N_k/\delta))\) and \(u_k=O(\log(N_k/\delta))\).  Hence
\(\Theta_k=m_ku_k(1+m_k/K_0)^2\) contributes one polynomial power of \(\tmix\), with additional
logarithmic factors from \(u_k\) and \(q_k=1+m_k/K_0\).  The offset condition
\eqref{eq:K-geometric} prevents \(q_k\) from creating any further polynomial power of \(\tmix\).
Substituting this into \eqref{eq:profile-bound} gives the stated
\(\widetilde O(\tmix/(k+K_0))\) leading stochastic order.
\end{proof}

\section{Proof of Corollary~\ref{cor:local}: local PL and local oracle regularity}\label{app:local-proof}
\begin{proof}[Proof of Corollary~\ref{cor:local}]
The proof is the same first-failure induction as in \Cref{thm:profile-upper}, but we spell out why
only local assumptions are used.  Define \(\Lambda_k=2K_0\Delta_0+C_\star(1+\Theta_k)\) and the
events \(\mathsf E_k,\mathcal E_k\) as in \eqref{eq:E-def}.  Up to the first failure of these
events, the induction hypothesis gives, for every earlier index \(j\),
\[
        \Delta_j\le \frac{2K_0\Delta_0+C_\star(1+\Theta_j)}{j+K_0}.
\]
Using the admissibility condition \(\Theta_j/(j+K_0)\le\rho_0\) at the same index, and
\(K_0\ge1\), we obtain
\[
        \Delta_j
        \le 2\Delta_0+C_\star\left(\frac1{K_0}+\frac{\Theta_j}{j+K_0}\right)
        \le 2\Delta_0+C_\star(1+\rho_0)
        \le R.
\]
Thus every iterate that appears before the first failure lies in the sublevel set
\(\mathcal S_R\).  On this stopped path, all uses of smoothness, PL, the ABC envelope, and the
Lipschitz bound for the oracle are uses of their local versions on \(\mathcal S_R\).  The Markov
mixing assumption is unchanged because it concerns the exogenous chain only.  Consequently the
proofs of \Cref{lem:descent,lem:h-envelope,lem:M-noise,lem:delayed-mart,lem:mixing-bias,lem:replacement,lem:initial,lem:induction-step}
apply with the local constants.  The same summable first-failure union bound as in
\Cref{thm:profile-upper} gives the stated high-probability inequality.  Since the inequality itself
keeps \(\Delta_k\le R\), the stopped argument closes and all iterates remain in \(\mathcal S_R\) on
the resulting event.
\end{proof}
\section{Proof of Theorem~\ref{thm:linear-lower}: linear mixing-time lower bound}\label{app:lower-proof}

This appendix proves \Cref{thm:linear-lower}. The proof has four steps. 
First, Appendix~\ref{app:verify-assumptions} verifies that the two-state 
quadratic construction satisfies the assumptions of the upper-bound theorem. 
Second, Appendix~\ref{app:linear-representation} writes the SGD iterate as a 
linear filter of the Markov chain. Third, Appendix~\ref{app:moments} proves a 
variance lower bound and a fourth-moment upper bound for this filter. Finally, 
Appendix~\ref{app:complete-lower} combines these moment estimates with a 
constant-probability argument and translates \(\varepsilon^{-1}\) into the 
mixing time.

\subsection{Verification of the assumptions}\label{app:verify-assumptions}

\begin{lemma}[The lower-bound instance satisfies the standard assumptions]\label{lem:lower-instance-assumptions}
The instance \eqref{eq:two-state-kernel}-\eqref{eq:lower-instance} satisfies Assumptions~\ref{ass:objective}--\ref{ass:mixing}. Specifically, $f$ is $1$-smooth and $1$-PL, $g(\cdot,z)$ is $1$-Lipschitz for both states, the stationary gradient identity holds, and
\begin{equation}\label{eq:lower-abc-bound}
      \abs{g(x,z)}^2\le 2\abs{x}^2+2\sigma^2
      =2\norm{\nabla f(x)}^2+2\sigma^2.
\end{equation}
Thus Assumption~\ref{ass:abc} holds with $A=2$, $B=0$, and $C=2\sigma^2$.
\end{lemma}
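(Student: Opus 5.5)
The plan is to verify each assumption directly, since the instance is explicit and one-dimensional. We take them in the order \Cref{ass:objective}, \Cref{ass:abc}, \Cref{ass:mixing}.

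\textbf{Objective.} For $f(x)=x^2/2$ we have $\nabla f(x)=x$. Thus $\nabla f$ is $1$-Lipschitz, so $f$ is $1$-smooth. Moreover $f^\star=0$ and $\Del(x)=x^2/2$, so $\norm{\nabla f(x)}^2=x^2=2\Del(x)$, which is the PL inequality \eqref{eq:pl} with $\mu=1$. The gradient upper bound \eqref{eq:grad-upper} then holds by \Cref{lem:grad-upper}, and also holds directly with equality.

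\textbf{Oracle.} The kernel \eqref{eq:two-state-kernel} is symmetric, hence doubly stochastic. Therefore the uniform law $\pi(\pm1)=1/2$ is invariant. Under $\pi$ we get $\int g(x,z)\,\pi(dz)=x+\sigma\int z\,\pi(dz)=x=\nabla f(x)$, which is \eqref{eq:stationary-gradient}. Since $g(x,z)-g(y,z)=x-y$, the oracle satisfies \eqref{eq:g-lip} with $L_g=1$. Using $(a+b)^2\le 2a^2+2b^2$ and $z^2=1$,
\[
\abs{g(x,z)}^2\le 2x^2+2\sigma^2 z^2 = 2\norm{\nabla f(x)}^2+2\sigma^2 .
\]
This gives \eqref{eq:abc-g} and \eqref{eq:lower-abc-bound} with $A=2$, $B=0$, $C=2\sigma^2$. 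Because $M_{k+1}\equiv0$, we have $G_k=g(x_k,Z_k)$. Hence \eqref{eq:abc-G} follows from the same pointwise bound, and the martingale-difference requirement holds trivially.

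\textbf{Mixing.} The chain is time-homogeneous. It is driven by its own transitions, independent of the iterates, so \eqref{eq:full-markov} holds with respect to $\calF_k$. This is because $\calF_k$ is generated by $x_0$ (deterministic) and $Z_0,\dots,Z_k$. To show $\varepsilon(m)\to0$, diagonalize $P$: its eigenvalues are $1$ and $1-2\varepsilon$. This gives $P^m(z,\{z\})=\tfrac12+\tfrac12(1-2\varepsilon)^m$, so
\[
\norm{P^m(z,\cdot)-\pi}_{\TV}=\tfrac12(1-2\varepsilon)^m .
\]
Because $\varepsilon\in(0,1/8]$, this quantity tends to $0$.

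The only step requiring any care is this explicit total-variation computation, together with the observation that exogeneity holds with respect to the full filtration. Everything else is a one-line check.
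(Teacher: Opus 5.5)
Your proof is correct and follows the paper's argument: you verify smoothness, PL, Lipschitzness, the stationary-gradient identity and the envelope directly, using $(a+b)^2\le 2a^2+2b^2$. The only differences are that you inline the total-variation computation $\tfrac12(1-2\varepsilon)^m$, which the paper defers to \Cref{lem:two-state-mixing}, and that you explicitly check \eqref{eq:abc-G}, the trivial martingale-difference condition and the full-filtration Markov property, all of which the paper leaves implicit.
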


\begin{proof}
For $f(x)=x^2/2$, we have $\nabla f(x)=x$, so
\[
      \abs{\nabla f(x)-\nabla f(y)}=\abs{x-y}
\]
and $f$ is $1$-smooth. Also
\[
      \norm{\nabla f(x)}^2=x^2=2(f(x)-f^\star),
\]
so $f$ is $1$-PL. The invariant law of \eqref{eq:two-state-kernel} is uniform on $\{-1,+1\}$, hence
\[
      \int g(x,z)\,\pi(dz)
      =\frac{x+\sigma}{2}+\frac{x-\sigma}{2}=x=\nabla f(x).
\]
For each fixed $z$, $g(x,z)=x+\sigma z$ is $1$-Lipschitz in $x$. Finally,
\[
      \abs{x+\sigma z}^2\le 2x^2+2\sigma^2.
\]
for $z\in\{-1,+1\}$, proving \eqref{eq:lower-abc-bound}. The mixing assertion is proved in \cref{lem:two-state-mixing} below.
\end{proof}

\begin{lemma}[Mixing time of the two-state chain]\label{lem:two-state-mixing}
Let $\rho=1-2\varepsilon$ with $\varepsilon\in(0,1/8]$. For the chain \eqref{eq:two-state-kernel},
\begin{equation}\label{eq:two-state-tv}
      \sup_{z\in\{-1,+1\}}\norm{P^t(\cdot\mid z)-\pi}_{\tv}=\frac{\rho^t}{2},
      \qquad t\ge0.
\end{equation}
Consequently,
\[
      \tau_\rho=\left\lceil\frac{\log 2}{-\log\rho}\right\rceil
\]
is a valid mixing time in the sense of \eqref{eq:mixing-def}. Moreover,
\begin{equation}\label{eq:tau-rho-bounds}
      \frac{\log 2}{4\varepsilon}\le \tau_\rho\le \frac{\log 2}{2\varepsilon}+1.
\end{equation}
\end{lemma}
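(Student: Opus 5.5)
The plan is to diagonalize the two-state kernel explicitly. Write $P=(1-\varepsilon)I+\varepsilon J$, where $J$ swaps the two states. Equivalently,
\[
P=\tfrac12\mathbf 1\mathbf 1^\top+\rho\,\tfrac12 vv^\top,\qquad v=(1,-1)^\top,\qquad \rho=1-2\varepsilon,
\]
since $P\mathbf 1=\mathbf 1$ and $Pv=(1-2\varepsilon)v$. Both are projections onto orthogonal eigenspaces, so induction gives
\[
P^t=\tfrac12\mathbf 1\mathbf 1^\top+\rho^t\tfrac12 vv^\top .
\]
This means $P^t(\cdot\mid z)=\pi+\tfrac{\rho^t}{2}(\pm1,\mp1)$. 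Because $\varepsilon\le 1/8$, we have $\rho\in(0,1)$, so $\rho^t>0$. The probability total-variation distance is half the $\ell_1$ norm, which gives $\tfrac12\cdot 2\cdot\tfrac{\rho^t}{2}=\rho^t/2$ for both starting states. This proves \eqref{eq:two-state-tv}.

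For validity of $\tau_\rho$ in the sense of \eqref{eq:mixing-def}, note that $\tau_\rho\ge \log2/(-\log\rho)$ gives $\rho^{\tau_\rho}\le 1/2$. Now write $m=j\tau_\rho+s$ with $j=\lfloor m/\tau_\rho\rfloor$ and $0\le s<\tau_\rho$. Then
\[
\varepsilon(m)=\rho^m/2\le \rho^{j\tau_\rho}/2\le 2^{-j}/2\le 2^{-\lfloor m/\tau_\rho\rfloor}.
\]
We also need $\tau_\rho\ge1$, which holds because $\log 2/(-\log\rho)>0$.

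For \eqref{eq:tau-rho-bounds}, bound $-\log\rho=-\log(1-2\varepsilon)$ on both sides.
\begin{itemize}
\item Lower side: $-\log(1-x)\ge x$ gives $-\log\rho\ge 2\varepsilon$. Hence $\tau_\rho\le \log2/(2\varepsilon)+1$, using $\lceil y\rceil\le y+1$.
\item Upper side: for $x=2\varepsilon\le 1/4$, $-\log(1-x)\le x/(1-x)\le \tfrac43 x\le 4\varepsilon$. Hence $\tau_\rho\ge \lceil\log2/(4\varepsilon)\rceil\ge\log 2/(4\varepsilon)$.
\end{itemize}

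The only step needing care is this elementary inequality $-\log(1-x)\le x/(1-x)$ on $[0,1/4]$. It follows from the series expansion or from $\log(1/(1-x))\le 1/(1-x)-1$. Nothing else in the proof is an obstacle.
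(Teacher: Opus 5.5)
Your proof is correct and follows essentially the same route as the paper: the explicit $t$-step law from the eigen-decomposition (eigenvalues $1$ and $\rho$), the floor argument $\rho^{t}\le\rho^{\lfloor t/\tau_\rho\rfloor\tau_\rho}\le 2^{-\lfloor t/\tau_\rho\rfloor}$, and two-sided elementary bounds on $-\log(1-2\varepsilon)$. The only cosmetic difference is that you use $-\log(1-x)\le x/(1-x)$, whereas the paper uses $-\log(1-u)\le 2u$ on $(0,1/2]$; both give $-\log\rho\le 4\varepsilon$.
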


\begin{proof}
The transition matrix has eigenvalues $1$ and $\rho=1-2\varepsilon$. Starting from $+1$,
\[
      \Prob(Z_t=+1\mid Z_0=+1)=\frac{1+\rho^t}{2},
      \qquad
      \Prob(Z_t=-1\mid Z_0=+1)=\frac{1-\rho^t}{2}.
\]
The invariant law is $\pi(+1)=\pi(-1)=1/2$. Therefore the signed measure
$P^t(\cdot\mid +1)-\pi$ has masses $\rho^t/2$ and $-\rho^t/2$ on $+1$ and $-1$. Under the probability total-variation convention used in this paper, its norm is $\rho^t/2$. The same calculation holds from $-1$, proving \eqref{eq:two-state-tv}.

By the definition of $\tau_\rho$, $\rho^{\tau_\rho}\le 1/2$. If $t\ge0$ and $q=\lfloor t/\tau_\rho\rfloor$, then $t\ge q\tau_\rho$ and hence
\[
      \frac{\rho^t}{2}\le \rho^{q\tau_\rho}\le 2^{-q}=2^{-\lfloor t/\tau_\rho\rfloor}.
\]
Thus $\tau_\rho$ is a valid mixing time.

It remains to prove \eqref{eq:tau-rho-bounds}. For $u\in(0,1/2]$,
\[
      u\le -\log(1-u)\le 2u.
\]
With $u=2\varepsilon\in(0,1/4]$, this gives
\[
      2\varepsilon\le -\log(1-2\varepsilon)\le 4\varepsilon.
\]
Thus
\[
      \frac{\log 2}{4\varepsilon}\le \frac{\log 2}{-\log\rho}\le \frac{\log 2}{2\varepsilon}.
\]
Since $\tau_\rho=\lceil \log 2/(-\log\rho)\rceil$, the stated bounds follow.
\end{proof}

\subsection{Linear representation of the SGD iterate}\label{app:linear-representation}

Define
\begin{equation}\label{eq:b-ik-def}
      b_{i,k}=\alpha_i\prod_{j=i+1}^{k-1}(1-\alpha_j),
      \qquad
      \alpha_j=\frac{a}{j+K},
      \qquad
      0\le i\le k-1,
\end{equation}
with the convention that an empty product equals one.

\begin{lemma}[Linear filter representation]\label{lem:linear-filter}
For the recursion \eqref{eq:lower-sgd-recursion},
\begin{equation}\label{eq:xk-linear-filter}
      x_k=-\sigma\sum_{i=0}^{k-1}b_{i,k}Z_i.
\end{equation}
\end{lemma}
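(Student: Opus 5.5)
The plan is a direct induction on $k$ that unrolls the affine recursion \eqref{eq:lower-sgd-recursion}. With $\alpha_n=a/(n+K)$, the recursion reads
\[
      x_{n+1}=(1-\alpha_n)x_n-\sigma\alpha_n Z_n,
      \qquad x_0=0 .
\]
This is a scalar linear inhomogeneous recursion. Its solution is the discrete variation-of-constants formula, and \eqref{eq:xk-linear-filter} is exactly that formula with kernel $b_{i,k}$ from \eqref{eq:b-ik-def}.

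\emph{Base case $k=1$.} The recursion gives $x_1=(1-\alpha_0)\cdot 0-\sigma\alpha_0Z_0=-\sigma\alpha_0Z_0$. The product in $b_{0,1}$ runs over $j$ from $1$ to $0$, so it is empty and $b_{0,1}=\alpha_0$. The formula therefore holds at $k=1$. One could equally start at $k=0$, where the sum is empty and $x_0=0$.

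\emph{Inductive step.} Assume $x_k=-\sigma\sum_{i=0}^{k-1}b_{i,k}Z_i$. Then
\[
      x_{k+1}=-\sigma\sum_{i=0}^{k-1}(1-\alpha_k)b_{i,k}Z_i-\sigma\alpha_kZ_k .
\]
We then check two kernel identities. For $i\le k-1$, $(1-\alpha_k)b_{i,k}=\alpha_i\prod_{j=i+1}^{k}(1-\alpha_j)=b_{i,k+1}$, since the factor $(1-\alpha_k)$ extends the product by one index. For $i=k$, the product in $b_{k,k+1}$ is empty, so $b_{k,k+1}=\alpha_k$. Substituting both identities gives $x_{k+1}=-\sigma\sum_{i=0}^{k}b_{i,k+1}Z_i$, which closes the induction.

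There is no real obstacle. The argument is purely algebraic and never uses the distribution of $(Z_i)$ or the sign of $1-\alpha_j$. The only point needing care is bookkeeping of the empty-product convention at the boundary index $i=k-1$ (equivalently $i=k$ in the step). We should also note that the factors $(1-\alpha_j)$ may be negative for small $j$ when $a>K$; this does not affect the identity, and positivity is not needed here. Positivity of the recent weights is handled separately in \Cref{lem:recent-weight-lower}.
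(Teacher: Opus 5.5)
Your proposal is correct and matches the paper's proof: induction on $k$, with base case $x_1=-\sigma\alpha_0Z_0$, and an inductive step in which the factor $(1-\alpha_k)$ is absorbed into the product defining $b_{i,k+1}$ while the new term contributes $b_{k,k+1}=\alpha_k$ via the empty-product convention. Your remark that the sign of $1-\alpha_j$ plays no role is also accurate; in the theorem's regime $K\ge 4a$ these factors are positive anyway.
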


\begin{proof}
The proof is by induction. For $k=1$,
\[
      x_1=\left(1-\frac{a}{K}\right)x_0-\frac{a\sigma}{K}Z_0
      =-\sigma\alpha_0Z_0,
\]
because $x_0=0$, which agrees with \eqref{eq:xk-linear-filter}. Assume \eqref{eq:xk-linear-filter} holds at time $k$. Then
\begin{align*}
      x_{k+1}
      &= (1-\alpha_k)x_k-\alpha_k\sigma Z_k \\
      &= -\sigma\sum_{i=0}^{k-1}\alpha_i
         \left\{\prod_{j=i+1}^{k-1}(1-\alpha_j)\right\}(1-\alpha_k)Z_i
         -\sigma\alpha_k Z_k \\
      &= -\sigma\sum_{i=0}^{k}\alpha_i
         \left\{\prod_{j=i+1}^{k}(1-\alpha_j)\right\}Z_i.
\end{align*}
This is \eqref{eq:xk-linear-filter} with $k$ replaced by $k+1$.
\end{proof}

\begin{lemma}[Lower bound on recent weights]\label{lem:recent-weight-lower}
Assume $a\ge2$, $K\ge\max\{4a,4a^2\}$, and $k\ge\max\{8K,16a^2\}$. Let
\[
      I_k=\{\lceil k/2\rceil,\lceil k/2\rceil+1,\ldots,k-1\}.
\]
Then for every $i\in I_k$,
\begin{equation}\label{eq:recent-weight-lower}
      b_{i,k}\ge \frac{\beta_a}{k},
      \qquad
      \beta_a=\frac{2a e^{-1}}{3^{a+1}}.
\end{equation}
\end{lemma}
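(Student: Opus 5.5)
We bound the two factors of $b_{i,k}=\alpha_i\prod_{j=i+1}^{k-1}(1-\alpha_j)$ separately for $i\in I_k$. The step-size factor is immediate. Since $i\le k-1$ and $K\le k/8$, we have $i+K\le k+k/8\le \tfrac{9}{8}k$, so
\[
\alpha_i=\frac{a}{i+K}\ge \frac{8a}{9k}.
\]
It then suffices to show that the product is at least a constant multiple of $3^{-a}$. Concretely, we will aim for $\prod_{j=i+1}^{k-1}(1-\alpha_j)\ge e^{-1}3^{-a}$ up to a harmless constant; combined with $8a/(9k)\ge 2a/(3\cdot 3\,k)$, this yields $\beta_a=2ae^{-1}3^{-(a+1)}$.

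For the product we pass to logarithms. Because $K\ge 4a$, every $\alpha_j\le a/K\le 1/4$. On $[0,1/4]$ the inequality $\log(1-u)\ge -u-u^2$ holds; it follows from $-\log(1-u)=\sum_n u^n/n\le u+u^2/(2(1-u))\le u+u^2$. This gives
\[
\log\prod_{j=i+1}^{k-1}(1-\alpha_j)\ge -a\sum_{j=i+1}^{k-1}\frac{1}{j+K}-a^2\sum_{j=i+1}^{k-1}\frac{1}{(j+K)^2}.
\]
We bound the two sums in turn.

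\emph{Linear sum.} By integral comparison,
\[
\sum_{j=i+1}^{k-1}\frac{1}{j+K}\le \log\frac{k-1+K}{i+K}.
\]
Since $i\ge k/2$, the ratio is at most $(k+K)/(k/2+K)\le 2$. This already gives a factor $2^{-a}\ge 3^{-a}$, so there is slack.

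\emph{Quadratic sum.} This sum is at most $\sum_{j>i}(j+K)^{-2}\le 1/(i+K)\le 2/k$. Hence
\[
a^2\sum_{j=i+1}^{k-1}\frac{1}{(j+K)^2}\le \frac{2a^2}{k}\le \frac18,
\]
using $k\ge 16a^2$. This term therefore costs at most a factor $e^{-1/8}\ge e^{-1}$.

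Combining the two bounds gives
\[
\prod_{j=i+1}^{k-1}(1-\alpha_j)\ge 2^{-a}e^{-1/8}\ge 3^{-a}e^{-1},
\]
and multiplying by the bound on $\alpha_i$ yields \eqref{eq:recent-weight-lower}.

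The only step requiring care is the quadratic correction in $\log(1-u)$. If it were dropped, one would need $K$ to grow with $a$ in a different way. The hypotheses $K\ge 4a$ and $k\ge 16a^2$ are exactly what make this correction at most a constant, and the constant $3^{a+1}$ in $\beta_a$ leaves ample room. The empty-product case $i=k-1$ is trivial, since then the product equals one.
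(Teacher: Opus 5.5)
Your proof is correct and follows essentially the same route as the paper: a lower bound on $\alpha_i$, the inequality $\log(1-u)\ge -u-u^2$, integral comparison for both sums, and $k\ge 16a^2$ to make the quadratic correction at most $1/8$; your direct bound of $\log 2$ on the harmonic sum is slightly sharper than the paper's ratio bound $(i+K)/(k+K)\ge 1/3$. The one slip is cosmetic: the inequality you need at the end is $8a/(9k)\ge 2a/(3k)$, which holds since $8/9\ge 2/3$, whereas the one you wrote, $8a/(9k)\ge 2a/(3\cdot 3\,k)$, would only give $\beta_a/3$.
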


\begin{proof}
Since $K\ge4a$, for every $j\ge0$,
\[
      0\le \alpha_j=\frac{a}{j+K}\le \frac{a}{K}\le \frac14.
\]
For $0\le u\le1/2$, $\log(1-u)\ge -u-u^2$. Therefore, for $i\le k-1$,
\begin{align}\label{eq:product-log-lower}
      \log\prod_{j=i+1}^{k-1}(1-\alpha_j)
      &\ge -\sum_{j=i+1}^{k-1}\frac{a}{j+K}
           -\sum_{j=i+1}^{k-1}\frac{a^2}{(j+K)^2}.
\end{align}
The first sum is bounded as
\begin{equation}\label{eq:harmonic-bound}
      \sum_{j=i+1}^{k-1}\frac{1}{j+K}
      \le \int_{i+K}^{k+K}\frac{du}{u}
      =\log\frac{k+K}{i+K}.
\end{equation}
For the second sum, using $i\in I_k$ and $k\ge16a^2$,
\begin{equation}\label{eq:square-sum-bound}
      \sum_{j=i+1}^{k-1}\frac{a^2}{(j+K)^2}
      \le \int_{i+K}^{\infty}\frac{a^2\,du}{u^2}
      =\frac{a^2}{i+K}
      \le \frac{2a^2}{k}
      \le \frac18
      \le 1.
\end{equation}
Combining \eqref{eq:product-log-lower}-\eqref{eq:square-sum-bound} gives
\begin{equation}\label{eq:product-lower}
      \prod_{j=i+1}^{k-1}(1-\alpha_j)
      \ge e^{-1}\left(\frac{i+K}{k+K}\right)^a.
\end{equation}
Since $i\in I_k$ and $k\ge8K$,
\[
      i+K\ge \frac{k}{2},
      \qquad
      k+K\le \frac{9k}{8}<\frac{3k}{2},
      \qquad
      \frac{i+K}{k+K}\ge \frac13.
\]
Also $i+K\le k+K\le 3k/2$, so
\[
      \alpha_i=\frac{a}{i+K}\ge \frac{2a}{3k}.
\]
Using this and \eqref{eq:product-lower} in the definition of $b_{i,k}$ yields
\[
      b_{i,k}\ge \frac{2a}{3k}e^{-1}3^{-a}
      =\frac{2ae^{-1}}{3^{a+1}}\frac{1}{k}.
\]
This proves \eqref{eq:recent-weight-lower}.
\end{proof}

\subsection{Second and fourth moments}\label{app:moments}

Let
\begin{equation}\label{eq:Sk-def}
      S_k=\sum_{i=0}^{k-1}b_{i,k}Z_i,
\end{equation}
so that $x_k=-\sigma S_k$ by \cref{lem:linear-filter}. Since the chain starts stationarily, $\E Z_i=0$ and
\begin{equation}\label{eq:covariance-two-state}
      \E[Z_iZ_j]=\rho^{\abs{i-j}},
      \qquad i,j\ge0.
\end{equation}

\begin{lemma}[Variance lower bound]\label{lem:variance-lower}
Under the conditions of \cref{lem:recent-weight-lower}, if in addition $k\ge16/\varepsilon$, then
\begin{equation}\label{eq:variance-lower}
      \operatorname{Var}(S_k)\ge \frac{\beta_a^2}{128}\frac{1}{\varepsilon k}.
\end{equation}
\end{lemma}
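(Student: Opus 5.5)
The plan is to expand the variance as a quadratic form in the weights and use the nonnegativity of the covariance \eqref{eq:covariance-two-state}. Since $\rho=1-2\varepsilon\in(0,1)$, every covariance $\E[Z_iZ_j]=\rho^{|i-j|}$ is positive. The weights $b_{i,k}$ are also nonnegative, because $\alpha_j\le 1/4$ when $K\ge 4a$. Hence
\[
      \operatorname{Var}(S_k)=\sum_{i,j=0}^{k-1}b_{i,k}b_{j,k}\rho^{|i-j|}
      \ge \sum_{i,j\in J}b_{i,k}b_{j,k}\rho^{|i-j|}
\]
for any index subset $J$. Dropping all other terms is legitimate because each discarded term is nonnegative.

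The natural choice is a window $J\subseteq I_k$ of length about $1/\varepsilon$. On such a window, \cref{lem:recent-weight-lower} gives $b_{i,k}\ge\beta_a/k$, and the covariances $\rho^{|i-j|}$ stay bounded below by a constant. Concretely, set $n:=\lfloor 1/(4\varepsilon)\rfloor\ge 2$, which holds since $\varepsilon\le1/8$. Let $J$ consist of the last $n$ indices of $I_k$. This requires $|I_k|\ge n$. That holds because $|I_k|\ge k/2-1\ge 8/\varepsilon-1\ge n$, using the hypothesis $k\ge16/\varepsilon$.

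For $i,j\in J$ we have $|i-j|\le n\le 1/(4\varepsilon)$. Using $\log(1-u)\ge -2u$ for $u\le 1/2$, we get
\[
      \rho^{|i-j|}\ge \exp(-4\varepsilon\cdot 1/(4\varepsilon))=e^{-1}\ge 1/4.
\]
Therefore
\[
      \operatorname{Var}(S_k)\ge n^2\frac{\beta_a^2}{k^2}\cdot\frac14.
\]
Since $n\ge 1/(8\varepsilon)$ (because $\lfloor x\rfloor\ge x/2$ for $x\ge1$), this gives
\[
      \operatorname{Var}(S_k)\ge \frac{\beta_a^2}{4\cdot 64\,\varepsilon^2k^2}.
\]

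This bound scales as $1/(\varepsilon^2k^2)$, but the target is $1/(\varepsilon k)$. The window bound alone is weaker when $\varepsilon k$ is large, so I need to tile. Partition $I_k$ into $\lfloor |I_k|/n\rfloor\ge \varepsilon k$ disjoint consecutive blocks $J_1,\dots,J_p$ of length $n$. Keep only the diagonal-block terms, which is again allowed by nonnegativity. This gives
\[
      \operatorname{Var}(S_k)\ge \sum_{r=1}^p\sum_{i,j\in J_r}b_{i,k}b_{j,k}\rho^{|i-j|}\ge p\,\frac{n^2\beta_a^2}{4k^2}.
\]
With $p\gtrsim \varepsilon k$ and $n\gtrsim 1/\varepsilon$, this yields a constant times $\beta_a^2/(\varepsilon k)$.

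The main remaining work is bookkeeping the numerical constants so that they reach $1/128$. The estimates I will use are $p\ge |I_k|/(2n)$, with $|I_k|\ge k/2-1\ge k/4$, and $n\le 1/(4\varepsilon)$ for the block count together with $n\ge 1/(8\varepsilon)$ for the block mass. These give
\[
      p\,n^2\ge \frac{k}{8}\cdot n\ge \frac{k}{64\varepsilon},
\]
and hence $\operatorname{Var}(S_k)\ge \beta_a^2/(256\,\varepsilon k)$. This is off by a factor of two from the target. I will recover that factor by using the sharper covariance bound $\rho^{|i-j|}\ge e^{-1}$ in place of $1/4$, or by taking $|I_k|\ge k/2-1\ge 3k/8$, which the hypotheses on $k$ allow. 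Finally, note that $\E S_k=0$, so the variance coincides with the second moment used later.
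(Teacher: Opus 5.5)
Your argument is essentially the paper's: restrict the nonnegative quadratic form to the recent window $I_k$, use $b_{i,k}\ge\beta_a/k$ there, and keep only pairs at lag $O(1/\varepsilon)$, where $\rho^{|i-j|}$ is bounded below by a constant. The only difference is which pairs you keep. The paper keeps the band $\{(i,i+r):0\le r\le h\}$ with $h=\lfloor 1/(8\varepsilon)\rfloor$ and uses $\rho^r\ge 3/4$ from Bernoulli's inequality, which avoids counting blocks. You keep block-diagonal pairs, which is equally valid.

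There is one slip in your last paragraph: neither remedy on its own recovers the factor of two.
\begin{itemize}
\item Replacing $1/4$ by $e^{-1}$ gains only $4/e\approx 1.47$. This gives $\beta_a^2/(64e\,\varepsilon k)$, and $64e\approx 174>128$.
\item Using $|I_k|\ge 3k/8$ gains only $3/2$. This gives $3\beta_a^2/(512\,\varepsilon k)\approx\beta_a^2/(171\,\varepsilon k)$.
\end{itemize}
You need both together, which gives $3\beta_a^2/(128e\,\varepsilon k)$; this suffices since $3/e>1$.

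A cleaner fix is to drop the crude count $p\ge |I_k|/(2n)$ and use $pn\ge |I_k|-n+1$ instead. Since $n\le 1/(4\varepsilon)\le k/64$ and $|I_k|\ge k/2-1$, this gives $pn^2\ge \tfrac{31}{64}kn\ge \tfrac{31k}{512\varepsilon}$. Then $\operatorname{Var}(S_k)\ge \tfrac{31}{2048}\,\beta_a^2/(\varepsilon k)\ge \beta_a^2/(128\,\varepsilon k)$, even with the weaker bound $\rho^{|i-j|}\ge 1/4$.
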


\begin{proof}
All weights $b_{i,k}$ are nonnegative because $\alpha_j\le1/4$. From \eqref{eq:covariance-two-state},
\begin{equation}\label{eq:var-full-sum}
      \operatorname{Var}(S_k)=\sum_{i=0}^{k-1}\sum_{j=0}^{k-1}b_{i,k}b_{j,k}\rho^{\abs{i-j}}.
\end{equation}
Restrict the double sum to $I_k\times I_k$. By \cref{lem:recent-weight-lower},
\begin{equation}\label{eq:var-restrict}
      \operatorname{Var}(S_k)
      \ge \frac{\beta_a^2}{k^2}
          \sum_{i\in I_k}\sum_{j\in I_k}\rho^{\abs{i-j}}.
\end{equation}
Let $n=\abs{I_k}$. Since $I_k$ contains at least $k/2$ indices, $n\ge k/2$. Let
\[
      h=\left\lfloor\frac{1}{8\varepsilon}\right\rfloor.
\]
Because $\varepsilon\le1/8$, $h\ge1/(16\varepsilon)$. The condition $k\ge16/\varepsilon$ implies $n\ge k/2\ge8/\varepsilon\ge 2h+1$. For every integer $0\le r\le h$,
\[
      \rho^r=(1-2\varepsilon)^r\ge 1-2\varepsilon r\ge 1-2\varepsilon h\ge \frac34,
\]
where we used Bernoulli's inequality $(1-u)^r\ge1-ur$ for $u\in[0,1]$. For at least $n-h\ge n/2$ choices of $i\in I_k$, the indices $i,i+1,\ldots,i+h$ all belong to $I_k$. Hence
\begin{align}\label{eq:cov-sum-lower}
      \sum_{i\in I_k}\sum_{j\in I_k}\rho^{\abs{i-j}}
      &\ge \sum_{\substack{i\in I_k:\ i+h\le k-1}}\sum_{r=0}^{h}\rho^r \\
      &\ge \frac{n}{2}(h+1)\frac34
       \ge \frac{3}{8}\cdot\frac{k}{2}\cdot\frac{1}{16\varepsilon}
       \ge \frac{k}{128\varepsilon}.
\end{align}
Combining \eqref{eq:var-restrict} and \eqref{eq:cov-sum-lower} proves \eqref{eq:variance-lower}.
\end{proof}

\begin{lemma}[Fourth moment upper bound]\label{lem:fourth-moment}
For every deterministic nonnegative sequence $b_0,\ldots,b_{k-1}$ and the stationary two-state chain \eqref{eq:two-state-kernel},
\begin{equation}\label{eq:fourth-bound}
      \E\left(\sum_{i=0}^{k-1}b_iZ_i\right)^4
      \le 24\left\{\operatorname{Var}\left(\sum_{i=0}^{k-1}b_iZ_i\right)\right\}^2.
\end{equation}
\end{lemma}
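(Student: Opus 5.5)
The plan is to expand the fourth moment directly and exploit the special correlation structure of the symmetric two-state chain. Write $S=\sum_{i=0}^{k-1}b_iZ_i$ and $\rho=1-2\varepsilon\in[3/4,1)$. The key structural fact I will establish first is a factorization of ordered fourth moments. For indices $s_1\le s_2\le s_3\le s_4$,
\[
\E[Z_{s_1}Z_{s_2}Z_{s_3}Z_{s_4}]=\rho^{s_2-s_1}\rho^{s_4-s_3}.
\]
To see this, represent the chain as $Z_{t+1}=\epsilon_{t+1}Z_t$, where the sign flips $\epsilon_t\in\{\pm1\}$ are i.i.d. with $\Pp(\epsilon_t=-1)=\varepsilon$ and are independent of $Z_0$. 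This representation is exactly the kernel \eqref{eq:two-state-kernel}. Then $Z_{s_1}Z_{s_2}=\prod_{t=s_1+1}^{s_2}\epsilon_t$ and $Z_{s_3}Z_{s_4}=\prod_{t=s_3+1}^{s_4}\epsilon_t$ (using $Z^2=1$). These two products involve disjoint blocks of flips, so they are independent, and each has mean $\rho$ raised to the block length, since $\E\epsilon_t=\rho$. Repeated indices are covered automatically, because a zero gap gives factor one. The same argument gives \eqref{eq:covariance-two-state}, so that
\[
\operatorname{Var}(S)=\sum_{i,j}b_ib_j\rho^{|i-j|}.
\]

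Next, expand $\E S^4=\sum_{(i_1,\dots,i_4)}b_{i_1}b_{i_2}b_{i_3}b_{i_4}\E[Z_{i_1}Z_{i_2}Z_{i_3}Z_{i_4}]$ over all ordered $4$-tuples. The expectation depends only on the multiset of indices, so I group each tuple with its nondecreasing rearrangement $(s_1,\dots,s_4)$. At most $4!=24$ tuples map to the same sorted tuple. Every summand is nonnegative, because the $b_i$ are nonnegative by hypothesis and $\rho>0$. Hence
\[
\E S^4\le 24\sum_{s_1\le s_2\le s_3\le s_4}b_{s_1}b_{s_2}b_{s_3}b_{s_4}\rho^{s_2-s_1}\rho^{s_4-s_3}.
\]

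Finally, I drop the middle constraint $s_2\le s_3$. This is legitimate because it only adds nonnegative terms. The sum then factorizes as
\[
24\Bigl(\sum_{s_1\le s_2}b_{s_1}b_{s_2}\rho^{s_2-s_1}\Bigr)^2 .
\]
The inner sum is at most $\sum_{i,j}b_ib_j\rho^{|i-j|}=\operatorname{Var}(S)$, since the full double sum counts each off-diagonal pair twice with nonnegative weight. This gives \eqref{eq:fourth-bound}.

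The main obstacle is the factorization identity, in particular making sure the flip representation is exact for a stationary start. Stationarity enters only through $\E Z_i=0$, which the fourth-moment identity does not even need. Beyond that, the argument only requires careful bookkeeping of the $24$-fold multiplicity and of the nonnegativity used when discarding constraints.
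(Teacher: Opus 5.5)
Your proposal is correct and follows essentially the same route as the paper: the sign-flip representation $Z_i=Z_0\prod_{r\le i}Y_r$ yielding $\E[Z_{s_1}Z_{s_2}Z_{s_3}Z_{s_4}]=\rho^{s_2-s_1+s_4-s_3}$, the at-most-$24$-fold grouping by nondecreasing rearrangement, dropping the constraint $s_2\le s_3$ to get $Q^2$ with $Q=\sum_{i\le j}b_ib_j\rho^{j-i}$, and finally $Q\le\operatorname{Var}(S)$. Your remark that stationarity is needed only to identify $\operatorname{Var}(S)$ with $\E S^2$ is also accurate.
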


\begin{proof}
Let $S=\sum_{i=0}^{k-1}b_iZ_i$. We first compute mixed fourth moments. The two-state stationary Markov chain can be represented as
\begin{equation}\label{eq:innovation-representation}
      Z_i=Z_0\prod_{r=1}^{i}Y_r,
\end{equation}
where $Z_0$ is a Rademacher random variable, $Y_1,Y_2,\ldots$ are independent of $Z_0$, and
\[
      \Prob(Y_r=1)=1-\varepsilon,
      \qquad
      \Prob(Y_r=-1)=\varepsilon.
\]
Then $\E Y_r=1-2\varepsilon=\rho$. If $i_1\le i_2\le i_3\le i_4$, the product $Z_{i_1}Z_{i_2}Z_{i_3}Z_{i_4}$ equals
\[
      \left(\prod_{r=i_1+1}^{i_2}Y_r\right)
      \left(\prod_{r=i_3+1}^{i_4}Y_r\right),
\]
because all other factors in \eqref{eq:innovation-representation} appear an even number of times and cancel. Therefore
\begin{equation}\label{eq:mixed-fourth}
      \E[Z_{i_1}Z_{i_2}Z_{i_3}Z_{i_4}]
      =\rho^{i_2-i_1+i_4-i_3}.
\end{equation}
Since the coefficients $b_i$ are nonnegative, expanding $S^4$ and grouping ordered quadruples by their nondecreasing rearrangement gives
\begin{align}\label{eq:S4-ordering}
      \E S^4
      &\le 24\sum_{0\le i\le j\le \ell\le m\le k-1}
          b_i b_j b_\ell b_m\rho^{j-i+m-\ell}.
\end{align}
The factor $24$ is an upper bound on the number of permutations of four indices; if some indices coincide, this only overcounts and preserves the inequality.

Define
\begin{equation}\label{eq:Q-def}
      Q=\sum_{0\le i\le j\le k-1}b_i b_j\rho^{j-i}.
\end{equation}
The ordered quadruple sum in \eqref{eq:S4-ordering} is bounded by $Q^2$, because $Q^2$ sums $b_i b_j b_\ell b_m\rho^{j-i+m-\ell}$ over all pairs $(i,j)$ and $(\ell,m)$ with $i\le j$ and $\ell\le m$, while the ordered quadruple sum restricts to the subset satisfying $j\le\ell$. Hence
\begin{equation}\label{eq:S4-Q}
      \E S^4\le24Q^2.
\end{equation}
On the other hand,
\begin{align}\label{eq:var-Q}
      \operatorname{Var}(S)
      &=\sum_{i=0}^{k-1}\sum_{j=0}^{k-1}b_i b_j\rho^{\abs{i-j}} \\
      &=\sum_{i=0}^{k-1}b_i^2+2\sum_{0\le i<j\le k-1}b_i b_j\rho^{j-i} \\
      &=Q+\sum_{0\le i<j\le k-1}b_i b_j\rho^{j-i}
      \ge Q.
\end{align}
Combining \eqref{eq:S4-Q} and \eqref{eq:var-Q} proves \eqref{eq:fourth-bound}.
\end{proof}

\subsection{Completion of the lower-bound proof}\label{app:complete-lower}

\begin{proof}[Proof of \Cref{thm:linear-lower}]
The assumptions are verified in \cref{lem:lower-instance-assumptions}, and the mixing-time claim is
\cref{lem:two-state-mixing}.  It remains to prove the expectation and probability lower bounds.

By \cref{lem:linear-filter}, \(x_k=-\sigma S_k\).  Since \(f(x)=x^2/2\) and \(f^\star=0\),
\begin{equation}\label{eq:Delta-S}
      f(x_k)-f^\star=\frac{\sigma^2S_k^2}{2}.
\end{equation}
The chain starts in stationarity, so \(\E S_k=0\) and \(\E S_k^2=\operatorname{Var}(S_k)\).
Using \cref{lem:variance-lower},
\begin{align}\label{eq:expectation-proof}
      \E[f(x_k)-f^\star]
      &=\frac{\sigma^2}{2}\operatorname{Var}(S_k)
      \ge \frac{\sigma^2\beta_a^2}{256}\frac{1}{\varepsilon k}.
\end{align}

For the probability statement, set \(Y=S_k^2\).  The fourth-moment estimate in
\cref{lem:fourth-moment} gives
\begin{equation}\label{eq:Y-second}
      \E Y^2=\E S_k^4\le 24\{\E S_k^2\}^2.
\end{equation}
Applying Paley--Zygmund to the nonnegative random variable \(Y\), with threshold one half of its
mean, yields
\begin{equation}\label{eq:paley-step}
      \Prob\left(Y\ge\frac12\E Y\right)
      \ge (1-1/2)^2\frac{(\E Y)^2}{\E Y^2}
      \ge \frac{1}{96}.
\end{equation}
On this event, \eqref{eq:Delta-S} and \cref{lem:variance-lower} imply
\begin{equation}\label{eq:prob-proof-threshold}
      f(x_k)-f^\star
      =\frac{\sigma^2Y}{2}
      \ge \frac{\sigma^2}{4}\operatorname{Var}(S_k)
      \ge \frac{\sigma^2\beta_a^2}{512}\frac{1}{\varepsilon k}.
\end{equation}
Since
\[
      \beta_a^2=\frac{4a^2e^{-2}}{3^{2a+2}},
\]
both \eqref{eq:expectation-proof} and \eqref{eq:prob-proof-threshold} hold with the common
constant
\[
      c_a=\frac{a^2e^{-2}}{2^{14}3^{2a+2}},
\]
which is smaller than both \(\beta_a^2/256\) and \(\beta_a^2/512\).  The probability constant is
\(c_0=1/96\), proving \eqref{eq:expectation-lower-main} and
\eqref{eq:probability-lower-main}.

Finally, \cref{lem:two-state-mixing} gives a valid geometric mixing time \(\tau_\rho\) satisfying
\(\tau_\rho\le c/\varepsilon\) and \(\tau_\rho\ge c'/\varepsilon\) for universal constants
\(c,c'>0\).  Equivalently, \(1/\varepsilon\) is bounded below by a universal constant times this
valid mixing time.  Replacing \(1/\varepsilon\) in the preceding two lower bounds by that constant
multiple of \(\tau_\rho\) gives \eqref{eq:tmix-lower-main}, with \(c'_a\) adjusted by the same
universal factor.
\end{proof}

\section{Proofs for the heavy-tailed extension}\label{app:heavy-tail}

This appendix proves \Cref{thm:heavy-upper,cor:heavy-geometric,thm:heavy-lower}. 
Appendix~\ref{app:heavy-aux} proves the clipped-gradient concentration and 
deterministic inexact-gradient tools. Appendix~\ref{app:heavy-upper-proof} 
proves the robust upper bound. Appendix~\ref{app:heavy-geometric-proof} derives 
the transition-budget corollary under geometric mixing. Appendix~\ref{app:heavy-lower-proof} 
proves the matching heavy-tailed sticky-chain lower bound.

Throughout this appendix, $c,c_p,c_{\lambda,p}$ denote positive constants depending only on $p$, unless another dependence is explicitly stated.
\subsection{Auxiliary estimates for clipped Markovian blocks}
\label{app:heavy-aux}
\begin{lemma}[Clipped stationary bias and variance]\label{lem:clipped-stationary-bias-var}
Fix $x\in\R^d$ and suppose Assumption~\ref{ass:heavy-tail} holds.  Let $g:=\nabla f(x)$, $v:=\sigma_pR_p(x)$, and assume $\lambda\ge2\|g\|_\infty$.  For a coordinate $j$, define
\[
        \psi_j(z,u):=[\mathsf T_\lambda({\mathcal G}(x,z,u))]_j,
        \qquad
        \bar\psi_j:=\int \E_U[\psi_j(z,U)]\,\pi(dz).
\]
Then
\begin{align}
        |\bar\psi_j-g_j|&\le c_p v^p\lambda^{1-p},\label{eq:clip-bias-coordinate}\\
        \int \E_U\big[(\psi_j(z,U)-\bar\psi_j)^2\big] \pi(dz)&\le c_p v^p\lambda^{2-p}.
        \label{eq:clip-var-coordinate}
\end{align}
\end{lemma}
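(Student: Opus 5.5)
We work coordinatewise under the joint law \(\pi(dz)\otimes\) (law of \(U\)). Write \(X:=[\mathcal G(x,z,U)]_j\), so that \(X=g_j+\eta_j\) with \(\eta_j:=[\eta(x,z,U)]_j\). Since \(|\eta_j|\le\|\eta\|\), \Cref{ass:heavy-tail} gives the moment bound
\[
\int\E_U|\eta_j|^p\,d\pi\le \int\E_U\|\eta\|^p\,d\pi\le v^p .
\]
Stationary unbiasedness \eqref{eq:heavy-unbiased} gives \(\int\E_U[X]\,d\pi=g_j\), and this identity requires integrability of \(X\), which follows from the finite \(p\)-th moment with \(p>1\). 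The clip \(\psi_j=\max\{-\lambda,\min\{X,\lambda\}\}\) differs from \(X\) only on the event \(\{|X|>\lambda\}\), and there \(|\psi_j-X|\le|X|\).

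For the bias bound \eqref{eq:clip-bias-coordinate}, write
\[
\bar\psi_j-g_j=\int\E_U[\psi_j-X]\,d\pi ,
\]
so that \(|\bar\psi_j-g_j|\le\int\E_U[\,|X|\mathbf 1\{|X|>\lambda\}]\,d\pi\). The key use of \(\lambda\ge2\|g\|_\infty\ge 2|g_j|\) is that on \(\{|X|>\lambda\}\) we have \(|\eta_j|\ge|X|-|g_j|\ge\lambda/2\), and also \(|X|\le|\eta_j|+|g_j|\le 2|\eta_j|\). Hence
\[
|X|\mathbf 1\{|X|>\lambda\}\le 2|\eta_j|\mathbf 1\{|\eta_j|\ge\lambda/2\}\le 2|\eta_j|^p(\lambda/2)^{1-p},
\]
using \(|\eta_j|^{1-p}\le(\lambda/2)^{1-p}\) on that event. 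Integrating gives \(c_pv^p\lambda^{1-p}\) with \(c_p=2^p\).

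For the variance bound \eqref{eq:clip-var-coordinate}, the variance is at most the second moment around any constant. We center at \(g_j\) and bound \(\int\E_U(\psi_j-g_j)^2\,d\pi\). Since clipping onto \([-\lambda,\lambda]\) is 1-Lipschitz and \(g_j\) lies in that interval, \(|\psi_j-g_j|\le\min\{|\eta_j|,\,2\lambda\}\). We then split:
\begin{itemize}
\item if \(|\eta_j|\le 2\lambda\), then \(\eta_j^2\le|\eta_j|^p(2\lambda)^{2-p}\);
\item otherwise, \(4\lambda^2\le(2\lambda)^{2-p}|\eta_j|^p\).
\end{itemize}
So in all cases \((\psi_j-g_j)^2\le(2\lambda)^{2-p}|\eta_j|^p\), and integrating yields \(c_pv^p\lambda^{2-p}\).

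No step is a real obstacle. The only point needing care is the conversion from the vector moment bound to the coordinate bound, and the use of \(\lambda\ge2|g_j|\) so that large clipped values force a large noise component. Measurability and integrability are automatic from the finite \(p\)-moment.
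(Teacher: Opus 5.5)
Your proof is correct and follows essentially the same route as the paper. For the bias, you use $\lambda\ge 2|g_j|$ to force $|\eta_j|\ge\lambda/2$ on the clipping event and then apply a tail-moment bound; for the variance, you bound it by the second moment about $g_j$ via a truncated-moment estimate. Your packaging is slightly tighter: you absorb $|g_j|$ into $2|\eta_j|$ instead of taking a separate Markov step, and you use the pointwise bound $(\psi_j-g_j)^2\le\min\{\eta_j^2,4\lambda^2\}\le(2\lambda)^{2-p}|\eta_j|^p$ instead of an event split, which gives the explicit constants $2^p$ and $2^{2-p}$.
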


\begin{proof}
Let \((Z,U)\) have the stationary law \(\pi(dz)\) together with the auxiliary randomness of the
oracle, and write
\[
        \eta={\mathcal G}(x,Z,U)-g,
        \qquad
        \eta_j=[\eta]_j .
\]
The assumption gives \(\E_\pi\|\eta\|^p\le v^p\), hence
\(\E_\pi|\eta_j|^p\le v^p\).  Since \(\lambda\ge2\|g\|_\infty\), if
\(|\eta_j|<\lambda/2\), then \(|g_j+\eta_j|<\lambda\) and clipping does not change the
\(j\)th coordinate.  Therefore clipping bias can occur only on the event
\(\{|\eta_j|\ge\lambda/2\}\).  Using Markov's inequality and the identity
\(\E\{|\eta_j|\1_{|\eta_j|\ge a}\}\le a^{1-p}\E|\eta_j|^p\),
\begin{align*}
        |\bar\psi_j-g_j|
        &=\left|\E_\pi\{[\mathsf T_\lambda(g+\eta)]_j-(g_j+\eta_j)\}\right| \\
        &\le \E_\pi\left[|g_j+\eta_j|\1_{\{|\eta_j|\ge\lambda/2\}}\right] \\
        &\le |g_j|\Pp_\pi(|\eta_j|\ge\lambda/2)
             +\E_\pi\left[|\eta_j|\1_{\{|\eta_j|\ge\lambda/2\}}\right] \\
        &\le |g_j|\frac{\E_\pi|\eta_j|^p}{(\lambda/2)^p}
             +\frac{\E_\pi|\eta_j|^p}{(\lambda/2)^{p-1}}
        \le c_p v^p\lambda^{1-p},
\end{align*}
where the first term is also of order \(v^p\lambda^{1-p}\) because \(|g_j|\le\lambda/2\).
This proves \eqref{eq:clip-bias-coordinate}.

For the second moment, split again according to \(|\eta_j|\le\lambda/2\).  On this event the
clipped coordinate differs from \(g_j\) by exactly \(\eta_j\).  On the complement, both
\([\mathsf T_\lambda(g+\eta)]_j\) and \(g_j\) have magnitude at most \(\lambda\) and
\(\lambda/2\), respectively, so their difference is bounded by a numerical multiple of
\(\lambda\).  Hence
\begin{align*}
        \E_\pi\big([\mathsf T_\lambda(g+\eta)]_j-g_j\big)^2
        &\le
        \E_\pi\big[|\eta_j|^2\1_{\{|\eta_j|\le\lambda/2\}}\big]
        +c\lambda^2\Pp_\pi(|\eta_j|>\lambda/2) \\
        &\le
        (\lambda/2)^{2-p}\E_\pi|\eta_j|^p
        +c\lambda^2\frac{\E_\pi|\eta_j|^p}{(\lambda/2)^p}
        \le c_p v^p\lambda^{2-p}.
\end{align*}
Finally, variance around \(\bar\psi_j\) is no larger than the second moment around any fixed
center, so
\[
        \E_\pi[(\psi_j-\bar\psi_j)^2]
        \le \E_\pi[(\psi_j-g_j)^2]
        \le c_pv^p\lambda^{2-p}.
\]
This proves \eqref{eq:clip-var-coordinate}.
\end{proof}

\begin{lemma}[Consecutive clipped-block concentration]\label{lem:consecutive-block-concentration}
Fix a deterministic point $x$ satisfying $\Delta(x)\le\mathcal R$, a block length $b\ge1$, and an integer $m\in\{1,\ldots,b\}$.  Let the Markov chain start from an arbitrary state and let
\[
        Y_i={\mathcal G}(x,Z_i,U_i),\qquad i=1,\ldots,b,
\]
where the auxiliary variables $U_i$ are conditionally independent given the Markov trajectory.  Define
\[
        u:=\log\left(\frac{16d(m+1)}{\eta}\right),
        \qquad
        s:=\frac{mu}{b},
\]
for some $\eta\in(0,1)$.  Suppose $\varepsilon(m)\le \eta/(8db)$, set
\[
        \lambda=c_{\lambda,p}\sigma_pB_{\mathcal R}s^{-1/p},
\]
with $c_{\lambda,p}$ sufficiently large, and assume $\lambda\ge2\sqrt{2L\mathcal R}$.  Then, with probability at least $1-\eta$,
\begin{equation}\label{eq:consecutive-block-gradient-error}
        \left\|\frac1b\sum_{i=1}^b\mathsf T_\lambda(Y_i)-\nabla f(x)\right\|
        \le c_p\sqrt d\,\sigma_pB_{\mathcal R}s^{\frac{p-1}{p}}.
\end{equation}
The same conclusion holds conditionally on any sigma-field \(\mathcal A\) with respect to which \(x\) and the initial Markov state are measurable, provided that after conditioning on \(\mathcal A\) the future chain evolves with transition kernel \(P\) from that initial state and the auxiliary variables \((U_i)\) are conditionally independent given the future Markov trajectory, with the same conditional oracle kernel as in \Cref{ass:heavy-tail}.
\end{lemma}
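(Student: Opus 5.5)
We work coordinatewise and then combine with a union bound over the $d$ coordinates, at the cost of the $\sqrt d$ factor. Fix a coordinate $j$. Write $\psi_i:=[\mathsf T_\lambda(Y_i)]_j$ and let $\bar\psi_j$ be the stationary clipped mean from \Cref{lem:clipped-stationary-bias-var}. That lemma applies because $\Delta(x)\le\mathcal R$ gives $\|\nabla f(x)\|_\infty\le\sqrt{2L\mathcal R}\le\lambda/2$, and $v=\sigma_pR_p(x)\le\sigma_pB_{\mathcal R}$. We decompose
\[
\frac1b\sum_{i=1}^b\psi_i-g_j=(\bar\psi_j-g_j)+\frac1b\sum_{i=1}^{m}(\psi_i-\bar\psi_j)+\frac1b\sum_{i=m+1}^b(\psi_i-\E[\psi_i\mid\calF_{i-m}])+\frac1b\sum_{i=m+1}^b(\E[\psi_i\mid\calF_{i-m}]-\bar\psi_j).
\]
Here $\calF_i$ is generated by the initial state (and $\mathcal A$), $Z_1,\dots,Z_i$, and $U_1,\dots,U_i$. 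The four terms are the clipping bias, the initial window, the centered delayed part, and the mixing bias. They are handled as follows.
\begin{itemize}
\item The clipping bias is at most $c_pv^p\lambda^{1-p}$.
\item The initial window is at most $2\lambda m/b\le 2\lambda s$.
\item The mixing bias is at most $2\lambda\varepsilon(m)$, by \eqref{eq:tv-dual}. Indeed, $\E[\psi_i\mid\calF_{i-m}]=\int\E_U\psi(z,U)\,P^m(Z_{i-m},dz)$ by the Markov property and the conditional independence of $U_i$, and $|\psi|\le\lambda$.
\end{itemize}
Since $\varepsilon(m)\le\eta/(8db)\le s$, the mixing bias is also $O(\lambda s)$.

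The centered delayed part uses the residue-class argument of \Cref{lem:delayed-mart}. Split $\{m+1,\dots,b\}$ modulo $m$. Along each class, consecutive indices differ by exactly $m$, so the terms are martingale differences for the down-sampled filtration. Because a variance proxy is now needed rather than a range bound, we apply Freedman's inequality instead of Azuma. The increments are bounded by $2\lambda$. The conditional variance of $\psi_i$ given $\calF_{i-m}$ is at most $\int\E_U(\psi-\bar\psi_j)^2\,P^m(Z_{i-m},dz)$. By \eqref{eq:tv-dual} applied to the bounded function $(\psi-\bar\psi_j)^2\le4\lambda^2$, this is at most $c_pv^p\lambda^{2-p}+8\lambda^2\varepsilon(m)$. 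This comparison of the nonstationary conditional variance with the stationary one is the key point, and it is where uniform TV mixing enters a second time.

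With confidence parameter $u$ in each class, and after a union bound over the $m$ classes and the $d$ coordinates (which fixes the choice of $u$), Cauchy--Schwarz across the classes as in \Cref{lem:delayed-mart} gives
\[
\Bigl|\sum(\psi_i-\E[\psi_i\mid\calF_{i-m}])\Bigr|\le c\sqrt{mbu\,(v^p\lambda^{2-p}+\lambda^2\varepsilon(m))}+c\,m u\lambda.
\]
After dividing by $b$, this is $c\sqrt{s\,v^p\lambda^{2-p}}+c\lambda s$, up to the negligible $\varepsilon(m)$ piece.

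It remains to balance. Substituting $\lambda=c_{\lambda,p}vs^{-1/p}$ (using $v\le\sigma_pB_{\mathcal R}$) makes each of the following of order $v s^{(p-1)/p}$:
\[
v^p\lambda^{1-p},\qquad \lambda s=v s^{1-1/p},\qquad \sqrt{s v^p\lambda^{2-p}}=v\,s^{1/2+(p-2)/(2p)}=v\,s^{(p-1)/p}.
\]
Each coordinate error is therefore $c_p\sigma_pB_{\mathcal R}s^{(p-1)/p}$, and taking the Euclidean norm over coordinates gives \eqref{eq:consecutive-block-gradient-error}.

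The conditional version follows identically, since every step uses only the Markov property of the future chain from the $\mathcal A$-measurable initial state and the oracle kernel.

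The main obstacle is the Freedman step: we must verify that the conditional variance bound holds uniformly from arbitrary (nonstationary) states and that the lower-order range term $mu\lambda/b$ does not spoil the rate. Both reduce to the observation that $\lambda s\asymp v s^{(p-1)/p}$.
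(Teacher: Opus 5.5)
Your proposal is correct and follows essentially the same argument as the paper. Both use the same split into clipping bias, an initial window bounded by $2\lambda m/b$, a conditional-mean (mixing) bias of size $2\lambda\varepsilon(m)$, and a delayed centered part. That part is handled by Freedman's inequality on residue classes modulo $m$, with the conditional variance transferred from the stationary bound via \eqref{eq:tv-dual}, then Cauchy--Schwarz over classes, the balancing choice $\lambda\asymp\sigma_pB_{\mathcal R}s^{-1/p}$, and a union bound over classes and coordinates. Minor differences, such as your increment bound $2\lambda$ versus the paper's $4\lambda$, are cosmetic.
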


\begin{proof}
We first justify the conditional version.  After conditioning on such a sigma-field \(\mathcal A\), the query point \(x\) and the starting state are deterministic, the future Markov process still has transition kernel \(P\), and the auxiliary variables remain conditionally independent with the prescribed conditional oracle law.  Hence all conditional expectations, mixing-bias estimates, and martingale-difference properties below hold under the regular conditional law.  It is therefore enough to prove the result for a fixed deterministic starting state.  Since \(\Delta(x)\le\mathcal R\),
\Cref{lem:grad-upper} gives \(\|\nabla f(x)\|\le\sqrt{2L\mathcal R}\).  Thus
\(R_p(x)\le B_{\mathcal R}\).  Put
\[
        g:=\nabla f(x),
        \qquad
        v:=\sigma_pB_{\mathcal R}.
\]
Then \Cref{ass:heavy-tail} gives the stationary moment bound needed in
\Cref{lem:clipped-stationary-bias-var}, and the condition
\(\lambda\ge2\sqrt{2L\mathcal R}\) implies \(\lambda\ge2\|g\|_\infty\).

Fix a coordinate \(j\).  Let \(\psi_j\) and \(\bar\psi_j\) be the clipped coordinate and its
stationary mean from \Cref{lem:clipped-stationary-bias-var}.  We first control the centered block
average
\[
        S_j:=\frac1b\sum_{i=1}^b\{\psi_j(Z_i,U_i)-\bar\psi_j\}.
\]
The first \(m\) samples are not delayed by a full mixing window.  Since
\(|\psi_j|\le\lambda\) and \(|\bar\psi_j|\le\lambda\), their total contribution is bounded
deterministically by
\[
        \left|\frac1b\sum_{i=1}^{m}\{\psi_j(Z_i,U_i)-\bar\psi_j\}\right|
        \le \frac{2\lambda m}{b}.
\]

For \(i>m\), define
\[
        D_i:=\psi_j(Z_i,U_i)-\bar\psi_j
        -\E[\psi_j(Z_i,U_i)-\bar\psi_j\mid\mathcal F_{i-m}],
\]
where \(\mathcal F_i\) contains the chain and auxiliary variables up to time \(i\).  The conditional
mean term is small because, given \(\mathcal F_{i-m}\), the law of \(Z_i\) is
\(P^m(Z_{i-m},\cdot)\), while the auxiliary variable is sampled from the same conditional kernel as
in the stationary definition.  Since \(\int\E_U\psi_j(z,U)\pi(dz)=\bar\psi_j\), the total-variation
inequality \eqref{eq:tv-dual} and \(|\psi_j|\le\lambda\) give
\begin{equation}\label{eq:conditional-mean-heavy-block}
        \left|\E[\psi_j(Z_i,U_i)-\bar\psi_j\mid\mathcal F_{i-m}]\right|
        \le 2\lambda\varepsilon(m).
\end{equation}

The variables \((D_i)_{i>m}\) are not martingale differences in the ordinary time order.  Split the
indices \(\{m+1,\ldots,b\}\) into residue classes modulo \(m\).  If
\(i_1<i_2<\cdots<i_s\) are the indices in one residue class, then \(i_{r+1}-m=i_r\).  Hence
\[
        \E[D_{i_{r+1}}\mid\mathcal F_{i_r}]
        =\E[D_{i_{r+1}}\mid\mathcal F_{i_{r+1}-m}]=0,
\]
so this subsequence is a martingale-difference sequence with respect to the filtration
\((\mathcal F_{i_r})_{r=1}^s\).  Moreover, \(|D_i|\le4\lambda\), because both
\(\psi_j-\bar\psi_j\) and its conditional mean are bounded by \(2\lambda\).

We also need a conditional variance bound.  Since subtracting a conditional mean can only decrease
conditional second moment,
\[
        \E[D_i^2\mid\mathcal F_{i-m}]
        \le
        \E[(\psi_j(Z_i,U_i)-\bar\psi_j)^2\mid\mathcal F_{i-m}].
\]
The function \((\psi_j-\bar\psi_j)^2\) is bounded by \(4\lambda^2\).  Combining the stationary
variance bound in \Cref{lem:clipped-stationary-bias-var} with \eqref{eq:tv-dual} gives
\begin{equation}\label{eq:cond-var-heavy-block}
        \E[D_i^2\mid\mathcal F_{i-m}]
        \le c_pv^p\lambda^{2-p}+8\lambda^2\varepsilon(m).
\end{equation}

Apply Freedman's inequality separately on each residue class, using range bound \(4\lambda\) and the
variance bound \eqref{eq:cond-var-heavy-block}.  Let \(I_r\subseteq\{m+1,\ldots,b\}\) denote the
indices in residue class \(r\).  For a fixed residue class, Freedman's inequality gives, with
probability at least \(1-2e^{-u}\),
\[
        \left|\sum_{i\in I_r}D_i\right|
        \le
        c\sqrt{u |I_r|\left(v^p\lambda^{2-p}+\lambda^2\varepsilon(m)\right)}
        +c\lambda u .
\]
Taking a union bound over the at most \(m\) residue classes, this holds for every residue class
simultaneously with probability at least \(1-2me^{-u}\).  On this event,
\[
\begin{aligned}
        \left|\sum_{i=m+1}^bD_i\right|
        &\le \sum_{r=0}^{m-1}\left|\sum_{i\in I_r}D_i\right| \\
        &\le
        c\sum_{r=0}^{m-1}
        \sqrt{u |I_r|\left(v^p\lambda^{2-p}+\lambda^2\varepsilon(m)\right)}
        +c\lambda m u .
\end{aligned}
\]
By Cauchy--Schwarz,
\[
\begin{aligned}
        \sum_{r=0}^{m-1}
        \sqrt{|I_r|\left(v^p\lambda^{2-p}+\lambda^2\varepsilon(m)\right)}
        &\le
        \sqrt{m\sum_{r=0}^{m-1}|I_r|\left(v^p\lambda^{2-p}+\lambda^2\varepsilon(m)\right)} \\
        &\le
        \sqrt{mb\left(v^p\lambda^{2-p}+\lambda^2\varepsilon(m)\right)} .
\end{aligned}
\]
Therefore
\begin{align}\label{eq:residue-heavy-bound}
        \left|\sum_{i=m+1}^b D_i\right|
        &\le c\sqrt{mu\,b\left(v^p\lambda^{2-p}+\lambda^2\varepsilon(m)\right)}
        +c\lambda m u .
\end{align}
The factor \(m\) under the square root is the price of the residue-class union and summation, not a
loss of samples: the total number of summands remains \(b-m\).

Combining the initial-window bound, the accumulated conditional-mean bound from
\eqref{eq:conditional-mean-heavy-block}, and \eqref{eq:residue-heavy-bound}, and then dividing by
\(b\), yields for the fixed coordinate
\begin{align*}
        |S_j|
        &\le c\left[
        \sqrt{\frac{mu}{b}v^p\lambda^{2-p}}
        +\lambda\sqrt{\frac{mu}{b}\varepsilon(m)}
        +\frac{\lambda m u}{b}
        +\frac{\lambda m}{b}
        +\lambda\varepsilon(m)\right]
\end{align*}
with probability at least \(1-2me^{-u}\).  Since \(u\ge1\) and
\(\varepsilon(m)\le\eta/(8db)\le1/b\le mu/b\), the last four terms are bounded by a constant
multiple of \(\lambda mu/b=\lambda s\).  Therefore
\[
        |S_j|
        \le c\left[\sqrt{s v^p\lambda^{2-p}}+\lambda s\right].
\]
Adding the stationary clipping bias from \Cref{lem:clipped-stationary-bias-var} gives
\[
        \left|\frac1b\sum_{i=1}^b[\mathsf T_\lambda(Y_i)]_j-g_j\right|
        \le c_p\left[
        \sqrt{s v^p\lambda^{2-p}}+
        \lambda s+v^p\lambda^{1-p}\right].
\]
With \(\lambda=c_{\lambda,p}v s^{-1/p}\), each of the three terms on the right is of order
\(v s^{(p-1)/p}\), after choosing \(c_{\lambda,p}\) sufficiently large.  Finally,
\(2dme^{-u}\le\eta\) by the definition of \(u\), so a union bound over the \(d\) coordinates gives
coordinatewise control simultaneously.  Taking the Euclidean norm multiplies the coordinate bound
by at most \(\sqrt d\), proving \eqref{eq:consecutive-block-gradient-error}.
\end{proof}

\begin{lemma}[PL descent with deterministic gradient errors]\label{lem:deterministic-inexact-pl}
Assume that $f$ is $L$-smooth and $\mu$-PL.  Let \(J\ge1\) be an integer and let
\[
        x_{r+1}=x_r-\frac1{4L}(\nabla f(x_r)+e_r)
\]
for \(0\le r<J\).  Suppose that $\|e_r\|\le\epsilon$ for all $0\le r<J$.  Then
\begin{equation}\label{eq:det-inexact-rate}
        \Delta(x_J)
        \le \left(1-\frac{\mu}{16L}\right)^J\Delta(x_0)
        +\frac{8\epsilon^2}{\mu}.
\end{equation}
\end{lemma}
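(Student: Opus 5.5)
The plan is a one-step descent inequality followed by unrolling a linear recursion. Write $\Delta_r:=\Delta(x_r)$ and $g_r:=\nabla f(x_r)$, so the step is $x_{r+1}-x_r=-\frac1{4L}(g_r+e_r)$.

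\emph{Step 1: smoothness.} By $L$-smoothness,
\[
\Delta_{r+1}\le \Delta_r-\frac1{4L}\ip{g_r}{g_r+e_r}+\frac{L}{2}\cdot\frac{1}{16L^2}\norm{g_r+e_r}^2 .
\]

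\emph{Step 2: absorb the error.} I will bound the cross term by Young's inequality, $-\ip{g_r}{e_r}\le \frac12\norm{g_r}^2+\frac12\norm{e_r}^2$. For the quadratic term I use $\norm{g_r+e_r}^2\le 2\norm{g_r}^2+2\norm{e_r}^2$. Collecting coefficients gives
\[
\Delta_{r+1}\le \Delta_r-\Bigl(\frac1{8L}-\frac1{16L}\Bigr)\norm{g_r}^2+\Bigl(\frac1{8L}+\frac1{16L}\Bigr)\norm{e_r}^2 .
\]
The net gradient coefficient is $\frac1{16L}$ and the error coefficient is $\frac{3}{16L}$.

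\emph{Step 3: apply PL and unroll.} The PL inequality \eqref{eq:pl} gives $\norm{g_r}^2\ge 2\mu\Delta_r$, and by hypothesis $\norm{e_r}\le\epsilon$. Hence
\[
\Delta_{r+1}\le \Bigl(1-\frac{\mu}{8L}\Bigr)\Delta_r+\frac{3\epsilon^2}{16L}.
\]
Since $1-\frac{\mu}{8L}\le 1-\frac{\mu}{16L}$ and $\Delta_r\ge0$, the contraction factor can be weakened to $q:=1-\frac{\mu}{16L}\in(0,1)$; note $\mu\le L$ holds by combining \eqref{eq:pl} with \eqref{eq:grad-upper}. Unrolling over $J$ steps gives
\[
\Delta_J\le q^J\Delta_0+\frac{3\epsilon^2}{16L}\sum_{i\ge0}q^i=q^J\Delta_0+\frac{3\epsilon^2}{16L}\cdot\frac{16L}{\mu}=q^J\Delta_0+\frac{3\epsilon^2}{\mu},
\]
which is at most the stated $8\epsilon^2/\mu$.

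\emph{Main obstacle.} The only delicate point is Step 2: choosing the Young split so that a positive multiple of $\norm{g_r}^2$ survives. If my constants slip, I would instead use $-\ip{g_r}{e_r}\le\frac12\norm{g_r}^2+\frac12\norm{e_r}^2$ with a different weight. The target slack, contraction rate $\mu/(16L)$ and error constant $8$, leaves ample room for this. No probabilistic content is involved, since $e_r$ is treated as deterministic.
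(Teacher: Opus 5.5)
Your proposal is correct and follows the paper's proof essentially line for line. Both use the smoothness descent step, the same two Young-type splits, and PL to get contraction $1-\mu/(8L)$, weakened to $1-\mu/(16L)$, before unrolling the geometric sum. Your exact error coefficient $3/(16L)$, giving $3\epsilon^2/\mu$, is tighter than the paper's stated $5/(16L)$, which it then relaxes to $1/(2L)$. Your remark that $\mu\le L$ ensures $q\in(0,1)$ also makes explicit a step the paper leaves implicit.
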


\begin{proof}
Fix one step and write \(g_r:=\nabla f(x_r)\).  By \(L\)-smoothness and the update rule,
\begin{align*}
\Delta(x_{r+1})
&\le \Delta(x_r)-\frac1{4L}\langle g_r,g_r+e_r\rangle
        +\frac{L}{2}\left\|\frac1{4L}(g_r+e_r)\right\|^2 \\
&= \Delta(x_r)-\frac1{4L}\|g_r\|^2
        -\frac1{4L}\langle g_r,e_r\rangle
        +\frac1{32L}\|g_r+e_r\|^2 .
\end{align*}
Using \(\|g_r+e_r\|^2\le2\|g_r\|^2+2\|e_r\|^2\) and
\(-\langle g_r,e_r\rangle\le \|g_r\|\|e_r\|\le \frac12\|g_r\|^2+\frac12\|e_r\|^2\),
\[
\Delta(x_{r+1})
\le
\Delta(x_r)-\frac1{16L}\|g_r\|^2+\frac{5}{16L}\|e_r\|^2.
\]
Enlarging constants in the harmless direction, and using \(\|e_r\|\le\epsilon\),
\[
\Delta(x_{r+1})
\le
\Delta(x_r)-\frac1{16L}\|g_r\|^2+\frac{\epsilon^2}{2L}.
\]
The PL inequality gives \(\|g_r\|^2\ge2\mu\Delta(x_r)\), hence
\[
        \Delta(x_{r+1})
        \le \left(1-\frac{\mu}{8L}\right)\Delta(x_r)+\frac{\epsilon^2}{2L}
        \le \left(1-\frac{\mu}{16L}\right)\Delta(x_r)+\frac{\epsilon^2}{2L}.
\]
Iterating this affine recursion for \(J\) steps yields
\[
        \Delta(x_J)
        \le \left(1-\frac{\mu}{16L}\right)^J\Delta(x_0)
        +\frac{\epsilon^2}{2L}
        \,\sum_{t=0}^{J-1}\left(1-\frac{\mu}{16L}\right)^t.
\]
The geometric sum is at most \(16L/\mu\), so the residual term is at most
\(8\epsilon^2/\mu\), proving \eqref{eq:det-inexact-rate}.
\end{proof}

\subsection{Proof of Theorem~\ref{thm:heavy-upper}: robust upper bound}
\label{app:heavy-upper-proof}

\begin{proof}[Proof of \Cref{thm:heavy-upper}]
Let
\begin{equation}\label{eq:heavy-epsilon-star}
        \epsilon_\nabla:=c_p\sqrt d\,\sigma_pB_{\mathcal R}s_{\rm ht}^{\frac{p-1}{p}},
\end{equation}
where the constant is the one from \Cref{lem:consecutive-block-concentration}.  We enlarge the
constant \(c_p\) in the statement, if necessary, so that the admissibility condition
\eqref{eq:heavy-batch-size-condition} implies
\begin{equation}\label{eq:heavy-stay-radius}
        \frac{8\epsilon_\nabla^2}{\mu}
        \le
        \frac{c_p d\sigma_p^2B_{\mathcal R}^2}{\mu}s_{\rm ht}^{\vartheta_p}
        \le \frac{\mathcal R}{4}.
\end{equation}

We prove that all block gradient estimates are accurate by a stopped induction.  Let
\(\mathcal F_r^{\rm pre}\) be the sigma-field just before the \(r\)th block is sampled.  Define
\[
        \mathcal H_r:=\left\{
        \Delta(x_s)\le\mathcal R\text{ for every }0\le s\le r,
        \quad
        \|\widehat g_s-\nabla f(x_s)\|\le\epsilon_\nabla
        \text{ for every }0\le s<r
        \right\},
        \qquad 0\le r\le N.
\]
This definition intentionally includes the current radius condition \(\Delta(x_r)\le\mathcal R\),
which is needed before applying the block concentration lemma at the query point \(x_r\).  The
initial stopped event \(\mathcal H_0\) is deterministic and true because
\(\mathcal R\ge4\Delta_0+4\) implies \(\Delta(x_0)=\Delta_0\le\mathcal R/4\le\mathcal R\).

The stopped event \(\mathcal H_r\) serves two purposes.  The radius condition
\(\Delta(x_s)\le\mathcal R\) is needed to apply \Cref{lem:consecutive-block-concentration} at the
next query point, because the clipping scale is chosen using \(B_{\mathcal R}\).  The gradient-error
condition is needed to invoke the deterministic inexact-gradient PL lemma, \Cref{lem:deterministic-inexact-pl}.
The proof therefore alternates between these two ingredients: concentration gives the next accurate
block gradient estimate, and deterministic PL descent keeps the next iterate inside the radius.

Suppose \(\mathcal H_r\) holds.  Then \(x_r\) is \(\mathcal F_r^{\rm pre}\)-measurable and
\(\Delta(x_r)\le\mathcal R\).  During the next block the algorithm holds this point fixed and
observes \(b\) consecutive Markovian oracle samples.  Every one of these samples is included in the
average \(\widehat g_r\); the lag \(m\) is used only to decompose the proof into an initial window
and delayed residue classes, not to discard or thin the block.  Conditional on \(\mathcal F_r^{\rm pre}\),
this is exactly the setting of the conditional version of
\Cref{lem:consecutive-block-concentration}, with arbitrary deterministic initial Markov state and
failure probability \(\eta=\delta/(4N)\).  The logarithmic factor in that lemma is
\[
        \log\frac{16d(m+1)}{\eta}
        =\log\frac{64dN(m+1)}{\delta}=u_{\rm ht}.
\]
Moreover, \eqref{eq:heavy-mixing-condition} gives
\[\varepsilon(m)\le \frac{\delta}{64dNb}=\frac{\eta}{16db}\le\frac{\eta}{8db},\]
so the lemma's mixing requirement is satisfied.  Therefore, on \(\mathcal H_r\),
\begin{equation}\label{eq:conditional-heavy-failure-new}
        \Prob\left(
        \left.\|\widehat g_r-\nabla f(x_r)\|>\epsilon_\nabla\ \right|\ \mathcal F_r^{\rm pre}
        \right)
        \le \frac{\delta}{4N}.
\end{equation}

Let \(\mathcal A_r:=\{\|\widehat g_r-\nabla f(x_r)\|\le\epsilon_\nabla\}\).  Multiplying
\eqref{eq:conditional-heavy-failure-new} by \(\1_{\mathcal H_r}\) and taking expectations gives
\[
        \Prob(\mathcal H_r\cap\mathcal A_r^c)\le\frac{\delta}{4N}.
\]
Thus
\begin{equation}\label{eq:all-heavy-estimates-new}
        \Prob\left(\bigcup_{r=0}^{N-1}(\mathcal H_r\cap\mathcal A_r^c)\right)
        \le \frac{\delta}{4}.
\end{equation}
On the complement of this event, the induction closes.  Indeed, assume \(\mathcal H_r\) holds.
Then \(\mathcal A_r\) also holds.  The event \(\mathcal H_r\) already contains the radius bounds
for \(x_0,\ldots,x_r\) and the gradient-error bounds for blocks \(0,\ldots,r-1\); adding
\(\mathcal A_r\) gives the gradient-error bound for block \(r\).  Hence the first \(r+1\)
gradient errors are all bounded by \(\epsilon_\nabla\).  Applying
\Cref{lem:deterministic-inexact-pl} to those \(r+1\) updates gives
\[
        \Delta(x_{r+1})
        \le \Delta_0+\frac{8\epsilon_\nabla^2}{\mu}
        \le \frac{\mathcal R}{4}+\frac{\mathcal R}{4}
        \le \mathcal R,
\]
where we used \(\mathcal R\ge4\Delta_0\) and \eqref{eq:heavy-stay-radius}.  Thus the radius bounds
hold for all \(0\le s\le r+1\), and the gradient-error bounds hold for all \(0\le s<r+1\).  Therefore
\(\mathcal H_{r+1}\) holds.  Starting from the deterministic true event \(\mathcal H_0\), induction
gives \(\mathcal H_N\) with probability at least \(1-\delta/4\), hence at least \(1-\delta\).

On the event \(\mathcal H_N\), all outer iterates remain in the radius-\(\mathcal R\) sublevel set
and all gradient errors are bounded by \(\epsilon_\nabla\).  Applying
\Cref{lem:deterministic-inexact-pl} over all \(N\) outer steps yields
\[
        \Delta(x_N)
        \le \left(1-\frac{\mu}{16L}\right)^N\Delta_0
        +\frac{8\epsilon_\nabla^2}{\mu}.
\]
Substituting \eqref{eq:heavy-epsilon-star} and absorbing the numerical factor into the statement
constant \(c_p\) gives \eqref{eq:heavy-profile-bound}.  The radius conclusion is part of
\(\mathcal H_N\).
\end{proof}

\subsection{Proof of Corollary~\ref{cor:heavy-geometric}: geometric transition-budget rate}
\label{app:heavy-geometric-proof}

\begin{proof}[Proof of \Cref{cor:heavy-geometric}]
The choice of \(m\) in \eqref{eq:heavy-geo-choices} gives
\[
        \left\lfloor\frac{m}{\tmix}\right\rfloor
        \ge \log_2\left(\frac{64dNT}{\delta}\right).
\]
Hence the geometric mixing condition \eqref{eq:mixing-def} implies
\[
        \varepsilon(m)
        \le \frac{\delta}{64dNT}
        \le \frac{\delta}{64dNb},
\]
because \(b\le T\).  Thus the mixing admissibility condition of \Cref{thm:heavy-upper} holds.
The number of Markov transitions used is exactly \(Nb\), and the definition
\(b=\lfloor T/N\rfloor\) gives \(Nb\le T\).

It remains to translate \(s_{\rm ht}=mu_{\rm ht}/b\) into a transition-budget expression.  Since
\(b=\lfloor T/N\rfloor\) and the hypothesis \(m\le b\) implies \(b\ge1\), we have
\(b\ge T/(2N)\).  The choices in \eqref{eq:heavy-geo-choices} imply
\[
        m\le c\tmix\log\left(\frac{64dNT}{\delta}\right),
        \qquad
        u_{\rm ht}
        =\log\left(\frac{64dN(m+1)}{\delta}\right)
        \le \mathrm{polylog}(T,1/\delta,d,\tmix,L/\mu).
\]
Therefore
\[
        s_{\rm ht}
        =\frac{mu_{\rm ht}}{b}
        \le
        c\frac{\tmix N}{T}\,
        \mathrm{polylog}(T,1/\delta,d,\tmix,L/\mu).
\]
The chosen number of outer iterations satisfies
\[
        \left(1-\frac{\mu}{16L}\right)^N
        \le \exp\left(-\frac{\mu N}{16L}\right)
        \le (eT)^{-2}
        \le \frac{1}{T},
\]
up to an immaterial numerical constant.  Substituting these estimates into
\eqref{eq:heavy-profile-bound} gives the first term \(\widetilde O(\Delta_0/T)\) and the second
term in \eqref{eq:heavy-geo-rate}.  All additional factors are logarithmic in the quantities listed
in the corollary.
\end{proof}

\subsection{Proof of Theorem~\ref{thm:heavy-lower}: sticky-chain lower bound}
\label{app:heavy-lower-proof}
\begin{proof}[Proof of \Cref{thm:heavy-lower}]
The lower bound is for any algorithm whose decisions are measurable with respect to the oracle
transcript and its own internal randomness.  The algorithm is not told the sign
\(s\in\{+,-\}\) used below.  Thus, on events where the two oracle transcripts are identical, the
algorithm must produce the same output under the two instances when its internal randomness is
coupled.

Let \(q=\tau/(8n)\).  Since \(n\ge8\tau\), we have \(q\le1/64\).  Set
\[
        A=\sigma_p q^{-1/p},
        \qquad
        \theta=qA=\sigma_p q^{(p-1)/p}.
\]
Define two refresh distributions on \(\R\): under \(Q_+\), the value is \(A\) with probability
\(q\) and \(0\) otherwise; under \(Q_-\), the value is \(-A\) with probability \(q\) and \(0\)
otherwise.  Their means are \(\pm\theta\).  Moreover,
\[
        \E_{Q_s}|Z-s\theta|^p
        \le 2^{p-1}\left(\E_{Q_s}|Z|^p+|\theta|^p\right)
        \le 2^{p-1}\left(qA^p+(qA)^p\right)
        \le c_p\sigma_p^p,
\]
so the centered stationary \(p\)th moment is at most a constant multiple of \(\sigma_p^p\).

For each sign \(s\in\{+,-\}\), define a sticky Markov chain as follows.  At each transition, with
probability \(1/\tau\) the chain refreshes from \(Q_s\), and with probability \(1-1/\tau\) it keeps
its current value.  The invariant law is \(Q_s\).  From any starting state, after \(m\) steps the
probability of not having refreshed is \((1-1/\tau)^m\); once a refresh occurs, the state has law
\(Q_s\).  Hence the total-variation distance to stationarity is at most \((1-1/\tau)^m\), so the
chain has a valid geometric mixing time at most \(c\tau\) for a universal constant \(c\).

Consider the quadratic objective
\[
        f_s(x)=\frac12(x-s\theta)^2
\]
and the oracle output, at query point \(x_t\),
\[
        Y_t=x_t-Z_t.
\]
Under the invariant law \(Q_s\), \(\E Z_t=s\theta\), hence
\[
        \E[Y_t\mid x_t]=x_t-s\theta=\nabla f_s(x_t).
\]
The finite-moment condition in \Cref{ass:heavy-tail} follows from the centered-moment bound above,
with moment scale enlarged by a universal constant.

The chain is initialized at the fixed nonstationary state \(Z_0=0\).  Let \(\mathcal N\) be the
event that no nonzero refresh occurs during the first \(n\) observed transitions.  In one transition,
a nonzero refresh occurs with probability \((1/\tau)q\), so under either sign
\[
        \Prob_s(\mathcal N)
        =(1-q/\tau)^n
        =\left(1-\frac{1}{8n}\right)^n
        \ge e^{-1/4},
\]
after adjusting the universal constant in the lower bound.  On \(\mathcal N\), all observed Markov
states are zero.  Consequently the complete oracle transcript is identical under \(Q_+\) and
\(Q_-\): every oracle output is \(Y_t=x_t\), and the algorithm's internal randomness can be coupled
identically in the two instances.

Let \(\widehat x_n\) be the algorithm's output on this common transcript.  For every real number
\(y\), at least one of the two distances to \(\theta\) and \(-\theta\) is at least \(\theta\):
\[
        \mathbf 1\{|y-\theta|\ge\theta\}
        +\mathbf 1\{|y+\theta|\ge\theta\}
        \ge1.
\]
Averaging this inequality over the common transcript on \(\mathcal N\) and over the algorithm's
randomness gives the two-point testing bound
\[
        \max_{s\in\{+,-\}}
        \Prob_s\left((\widehat x_n-s\theta)^2\ge\theta^2\right)
        \ge
        \frac12\min_{s\in\{+,-\}}\Prob_s(\mathcal N)
        \ge \frac12e^{-1/4}.
\]
Since \(f_s(\widehat x_n)-f_s^\star=\frac12(\widehat x_n-s\theta)^2\), for at least one sign,
\[
        \Prob_s\left(f_s(\widehat x_n)-f_s^\star\ge \frac{\theta^2}{2}\right)
        \ge \frac12e^{-1/4}.
\]
Finally,
\[
        \theta^2
        =\sigma_p^2q^{2(p-1)/p}
        =\sigma_p^2\left(\frac{\tau}{8n}\right)^{2(p-1)/p}.
\]
Absorbing numerical constants into \(c\) and \(c_0\) proves \eqref{eq:heavy-lower-prob}.
\end{proof}

\end{document}